\documentclass[twoside,11pt]{article}

\usepackage[abbrvbib,preprint]{jmlr2e}
\usepackage{jmlr2e_neutral_author}
\usepackage{xurl}
\usepackage{xcolor}
\usepackage{algorithm}
\usepackage[noend]{algpseudocode}
\usepackage{amsmath}
\usepackage{url}
\usepackage{subcaption}
\usepackage{bbm}
\usepackage{tikz}

\newtheorem{property}[theorem]{Property}
\newtheorem{assumption}[theorem]{Assumption}

\usepackage{algorithm}
\usepackage[noend]{algpseudocode}

  \usepackage{tikz}
  \usetikzlibrary{arrows.meta, positioning, fit, backgrounds, calc}
  
\definecolor{tealborder}{RGB}{20,95,85}
\definecolor{lighttealfill}{RGB}{206,232,226}
\definecolor{palettealfill}{RGB}{227,244,240}
\definecolor{goldfill}{RGB}{255,214,110}
\definecolor{goldborder}{RGB}{158,122,20}
\definecolor{browntext}{RGB}{140,68,20}
\definecolor{arrowpurple}{RGB}{140,40,140}
\usepackage{lastpage}

\firstpageno{1}

\begin{document}

\title{Model-free Bayesian Clustering and Multimodal Posterior Summarisation}


\author{
\vspace{2em}
\begin{tabular}{cc}
\begin{minipage}{0.3\textwidth}
\centering
Qing Li\\
School of Mathematics\\
University of Edinburgh\\
\texttt{qli10@ed.ac.uk}
\end{minipage}
&
\begin{minipage}{0.3\textwidth}
\centering
Sara Wade\\
School of Mathematics\\
University of Edinburgh\\
\texttt{sara.wade@ed.ac.uk}
\end{minipage}
\end{tabular}
}

\maketitle

\begin{abstract}%
We develop a model-free, noise-aware Bayesian clustering framework that constructs a martingale posterior over clusterings from a predictive Bayesian perspective. This avoids explicit likelihood-prior model specification, expensive MCMC sampling, and mode-seeking variational approximations,
while accommodating irregular cluster shapes and enabling parallel generation of posterior draws. 
To interpret the  posterior over high-dimensional, noise-aware clusterings, we introduce a Wasserstein-type summarization method. Using a novel noise-aware loss, this approach compresses the large collection of posterior draws into a small set of weighted representatives, capturing distinct cluster structures and noise assignments.
Further, we marginalise the uncertainty in the density level, avoiding reliance on a specific choice.
We establish theoretical properties of the proposed loss and guarantees for the level-marginal posterior approximation. Results demonstrate the framework's effectiveness in capturing clustering and noise uncertainty.
\end{abstract}

\begin{keywords}
  Density-based Clustering, Martingale Posterior,  Posterior compression
\end{keywords}

\section{Introduction}
Clustering is a fundamental task in unsupervised learning and exploratory data analysis, and a wide variety of methods has been developed across statistics and machine learning. Broadly, it aims to group observations so that those within the same cluster are more similar than those in different clusters. A key limitation of many classical methods such as k-means and hierarchical clustering is that they return a single point-estimate partition with no uncertainty quantification. In contrast, Bayesian clustering addresses this by constructing a posterior distribution on the partition structure \citep{wade2023bayesian}. Since the number of clusters is rarely known in advance, several classes of random-partition models have been developed \citep{Pitman95, richardson1997bayesian,miller2018mixture}. Nevertheless, when the component family is misspecified, a single non-Gaussian or non-convex cluster may be represented by several components, resulting in cluster splitting \citep{miller2019robust,cai2021finite,buch2024bayesian}. 
In parallel,  generalized Bayes  approaches \citep{rigon2023generalized} replace the likelihood with a loss function for clustering and yield Gibbs posteriors, although these methods still require prior specification and calibration of the learning-rate parameter. 

\begin{figure}
    \centering
\includegraphics[width=0.8\linewidth]{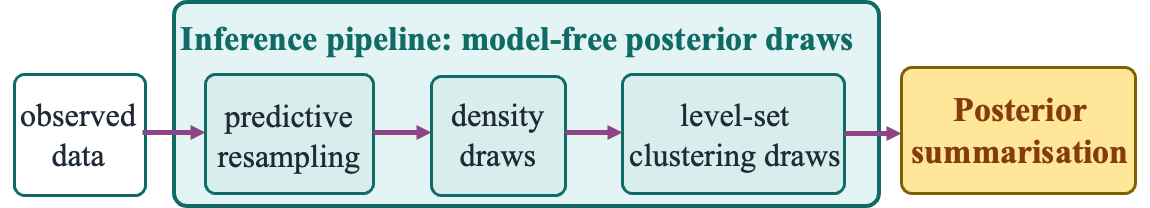}
    \caption{A model-free noise-aware clustering framework with posterior summarisation. }
\label{fig:pipeline}
\end{figure}

A further complication, and a central feature of this paper, arises when the data contain noise.
This occurs, for example, in sensor-based multiple object tracking \citep{li2023adaptive}, where environmental clutter generates false alarms unrelated to physical targets. Similarly, in astronomical sky surveys, 
galaxy clusters are embedded within large populations of unrelated stars and diffuse background light \citep{jang2006nonparametric}. 
In such settings, methods such as $k$-means can be particularly sensitive to outliers and noise, as forcing every observation into a full partition may absorb background observations into genuine clusters or create artificial, spurious groups.
Algorithmic density-based methods such as DBSCAN \citep{schubert2017dbscan} can identify clusters while leaving low-density observations as noise. However, they return only a single point-estimate of the partition and noise structure, 
can be sensitive to tuning parameters, and deteriorate when clusters have varying densities or when observations lie near the boundary between clusters and noise. An alternative is Bayesian level-set clustering \citep[BALLET,][]{buch2024bayesian}, which in contrast to the fixed density-based rule of DBSCAN, treats clustering as a functional of an unknown density and propagates posterior uncertainty in the density to the induced clustering. However, BALLET still requires a Bayesian model for the density $f$, so its induced posterior on clustering inherits the modelling assumptions, hyperparameter choices, and computational demands of the chosen model for $f$.

Beyond posterior computation, another challenge in Bayesian clustering is how to summarise the resulting posterior draws. In practice, inference may produce tens of thousands of clustering draws, while the underlying partition space is discrete, combinatorially enormous, and potentially multimodal \citep{balocchi2025understanding,li2023scalable}. Existing approaches \citep{dahl2022search,buch2024bayesian} typically report a single Bayes estimator, supplemented by tools such as a posterior similarity matrix or credible ball. 
Recent work by \citet{page2025uncertainty} constructs interpretable credible sets through high-probability common sub-partitions; 
however, the underlying posterior remains supported on full partitions and does not include an explicit inactive/noise state.
At the density-model level, \citet{bolfarine2025lower} project a flexible,  overparameterised  posterior onto a lower-dimensional parametric surrogate, preserving uncertainty about the simplified density but not directly targeting distinct modes in the posterior. By contrast, \cite{balocchi2025understanding} proposed the  Wasserstein Approximation for Bayesian Inference (WASABI) that operates directly on the partition posterior, approximating it with a small weighted set of representative clusterings, though its original formulation is restricted to full partitions and therefore cannot represent uncertainty in noise assignment.

Motivated by these limitations, this paper develops a model-free, noise-aware clustering pipeline that delivers uncertainty quantification while preserving the principal multimodal structure of the posterior. 
The general scheme, in Figure~\ref{fig:pipeline}, separates density learning from clustering, where clustering uncertainty is induced by uncertainty in the predictive distribution. 
It is robust to irregular cluster shapes and requires no  mixture-model specification, allowing greater flexibility than other commonly used approaches
(see Figure~\ref{fig:sim2d} for an example).

\begin{figure*}[tp]
    \centering
    \includegraphics[width=1\linewidth]{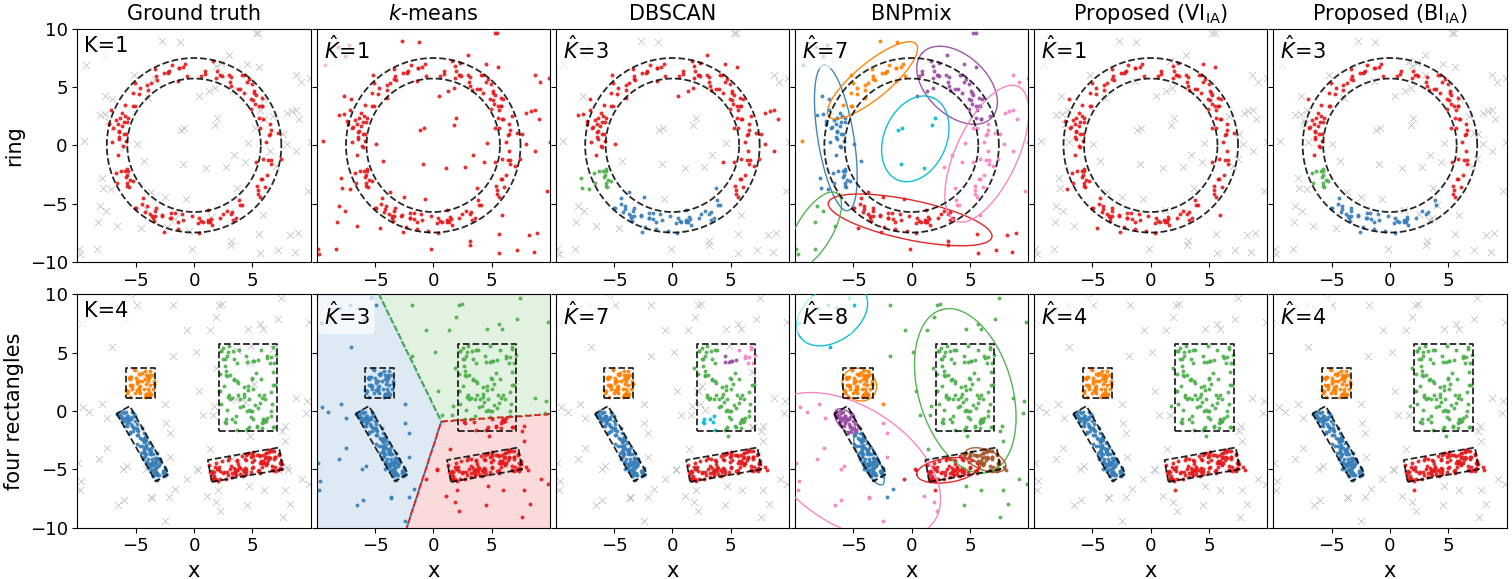}
    \caption{Point-estimate clusterings for the ring (top) and four-rectangle (bottom) examples. 
    Colours indicate active clusters and grey crosses are inactive observations; dashed black boundaries show the true cluster regions. $\hat K$ is the estimated number of  clusters. BNPmix \citep{corradin2021bnpmix} shows the fitted Gaussian components. We present the Bayes estimate of the proposed MDBC under both the VI$_{\mathrm{IA}}$  and BI$_{\mathrm{IA}}$ losses in \eqref{eq:bayes_subpartition_estimator}.
    }
    \label{fig:sim2d}
\end{figure*}

On the \emph{inferential} side, the proposed framework builds on Bayesian predictive approaches \citep{fortini2023prediction,fortini2025exchangeability}, which interpret Bayesian uncertainty through missing future data. We adopt this perspective by constructing a nonparametric martingale posterior distribution (MPD) \citep{fong2023martingale} for clustering, in which predictive resampling allows parallelizable generation of random density realisations, each of which is mapped deterministically to a sub-partition via a sample-based level-set graph rule. This yields a posterior distribution over noise-aware clusterings without introducing a mixture model or prior on the partition structure, while avoiding 
expensive MCMC sampling or reliance on variational approximations \citep{gan2024variational}. 
\cite{rodriguez2025martingale} also extended martingale posteriors to finite-mixture clustering; however, in their empirical analyses, k-means is applied with the number of clusters fixed at the posterior mode, so all clustering draws have the same number of clusters and a common labelling across draws.
Moreover, here, predictive resampling for generating clustering draws is efficient, parallelisable, and modular, allowing off-the-shelf schemes to be readily incorporated \citep{bariletto2026scalable,lee2023martingale,ng2025tabmgp}.

On the \emph{posterior summarisation} side, a main contribution here, we extend WASABI \citep{balocchi2025understanding} to sub-partitions, denoted WASABI-SuP, which introduces new losses and Wasserstein-type summaries that explicitly account for inactive observations. Given sub-partition draws, WASABI-SuP captures distinct posterior modes arising from uncertainty in the clustering structure, the active set, or both. This is complementary to \citet{bariletto2026scalable}, who also combine MPDs with density-based clustering but return full partitions, with uncertainty summaries 
\citep{bariletto2025conformalized} likewise on the full-partition space. 
Moreover, WASABI-SuP is inference-agnostic and applies to sub-partition draws from conventional Bayes, generalised Bayes, or predictive resampling.

\section{Martingale posteriors on sub-partitions}
\label{sec:mpd_ballet}

\begin{figure*}[t]
    \centering
    \includegraphics[width=1\linewidth]{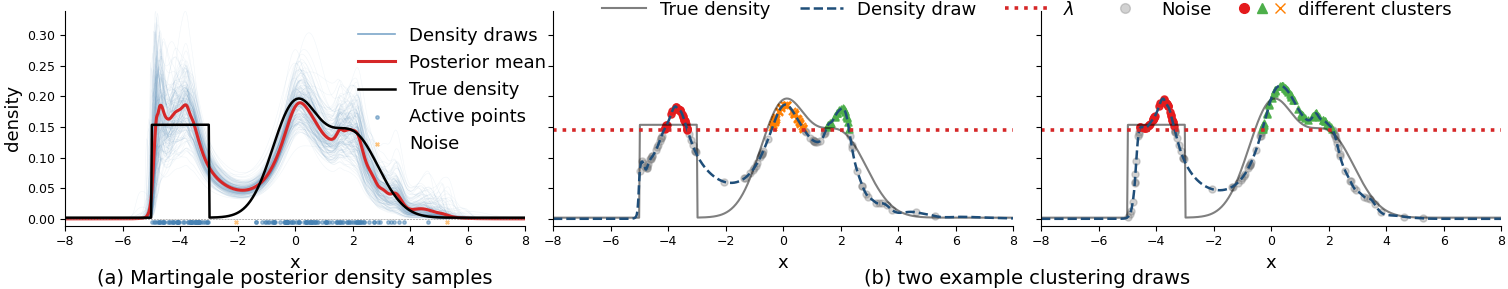}
    \caption{Illustration of predictive-resampling with density draws (blue lines) mapped to clustering draws by level-set rule.}
    \label{fig:mpd_clustering_draws}
\end{figure*}

This section concerns the stage of generating draws from a distribution on noise-aware clusterings, which bypasses using MCMC for sampling  or the need to resort to variational approximations. The proposed method, denoted MDBC, combines martingale posteriors with density-based clustering. 
As shown in the pipeline in Figure \ref{fig:pipeline}, the inferential construction consists of two steps:
the first step represents uncertainty about the unknown data-generating distribution through predictive density draws using predictive resampling from a martingale posterior;  the second step maps each density draw deterministically to a sample noise-aware clustering through a density level-set graph clustering rule. Thus, uncertainty in the resulting noise-aware clustering is induced by the martingale posterior on the underlying density. 
The resulting noise-aware clustering draws from the proposed MDBC form a Monte Carlo approximation to the posterior and are subsequently taken as input for posterior summarisation using credible sets, loss-based point estimates, or the WASABI-SuP in Section~\ref{sec:wasabi noise}.

\subsection{Density level-set graph clustering}\label{sec: graph}

Let $y_{1:n}=(y_1,\ldots,y_n)$ denote the observed data, with
$y_i\in\mathbb{R}^d$. For a candidate density $f$ and threshold
$\lambda>0$, define the density level set
\begin{equation}\notag
L_\lambda(f)
=\{y\in\mathbb{R}^d:f(y)\geq\lambda\}.
\end{equation}
Density level-set clustering regards the connected high-density regions of
$L_\lambda(f)$ as clusters, while observations outside the level set are
treated as inactive.

\paragraph{Sub-partitions.}
Let $[n]=\{1,\ldots,n\}$. A sub-partition
$\rho=\{C_1,\ldots,C_K\}$ consists of non-empty, pairwise disjoint subsets
satisfying $\bigcup_{h=1}^K C_h\subseteq[n]$, allowing some observations to
remain unclustered. We represent $\rho$ by an allocation vector
$\theta=(c_1,\ldots,c_n)\in\{0,1,\ldots,K\}^n$, where $c_i=h$ if
$i\in C_h$ and $c_i=0$ if observation $i$ is inactive.

\paragraph{Sample level-set graph clustering.}
Computing the connected components of the continuous level set
$L_\lambda(f)$ is generally difficult for $d\geq2$. We therefore use a
sample-based graph surrogate defined only on the observation points. For a candidate density $f$, define the
active set
$A_\lambda(f)
=
\{i\in[n]:f(y_i)\geq\lambda\}$.
We construct a graph $G_\lambda(f)$ on the vertices $A_\lambda(f)$ and
take its connected components as the active clusters, assigning label
$0$ to observations outside $A_\lambda(f)$. This defines the deterministic 
mapping $\psi_\lambda(f)$, which returns the vector of cluster allocations $\theta$ (for simplicity, dependence on the chosen graph-connectivity rule and its parameters is suppressed). 
We consider both the $\delta$-neighbourhood graph used in BALLET
\citep{buch2024bayesian} and a reciprocal-$k$NN alternative that mitigates
the chaining behaviour of radius-based connectivity and reduces over-merging (Figure~\ref{fig:rknn_vs_delta}).  
Formal definitions, parameter selection, and implementation details are in Appendix~\ref{app:levelset_graphs}.

\subsection{Predictive resampling for clustering}
\label{sec:mpd_intro}

Suppose that
$Y_{1:n}\overset{\mathrm{i.i.d.}}{\sim}F_0$,
where $F_0$ is an unknown data-generating distribution, and let
$Y_{1:n}=y_{1:n}$ denote the observed data.
Following the predictive view of Bayesian inference,
uncertainty is represented through the unobserved future sequence
$Y_{n+1:\infty}$. Starting from a predictive distribution $P_n$ fitted to the
observed data, predictive resampling sequentially generates future
observations from one-step-ahead predictives and updates the predictive
distribution after each imputation. Under suitable predictive-coherence
conditions, this sequence converges to a random limiting predictive
distribution $P_\infty$, whose distribution defines the martingale posterior.
Formal details are provided in Appendix~\ref{app:mpd_foundations}.

For density-based clustering, we work with the corresponding predictive
density. Suppose that $P_\infty$ is absolutely continuous with density
$p_\infty$. Using the level-set graph map $\psi_\lambda$ defined in
Section~\ref{sec: graph}, the limiting clustering is
$\theta_\infty
=
\psi_\lambda(p_\infty)$.
Hence, the martingale posterior on the predictive density induces a posterior
on sub-partitions through
\begin{equation}
\hspace{-0.5em}\Pi(\theta_\infty\in B | y_{1:n})
= \int \mathbf{1}\{ \psi_\lambda(p_\infty)\in B\} \mathrm{d}\Pi(p_\infty| y_{1:n})
\label{eq:clustering_mpd}
\end{equation}
where $B\subseteq\mathcal{S}_n$, and $\mathcal{S}_n$ denotes the space of sub-partitions of $[n]$.
Thus, we do not construct a posterior directly on cluster allocations: a predictive-density draw is mapped deterministically to a sub-partition, so clustering uncertainty is induced by uncertainty in the predictive
density.

\paragraph{Finite predictive resampling.}
To get posterior draws of clusterings, we repeat predictive resampling independently by: 1) sampling predictive density draws from the martingale posterior; and 2) mapping each density draw deterministically to a sub-partition through the level-set graph. 
In practice, predictive resampling is truncated at a sufficiently large
horizon $N$. Starting from $P_n$, each resampling path recursively
generates $Y_{i+1}\sim P_i$,
$i=n,\ldots,N-1$, and updates $P_i$ after each imputed observation. The resulting terminal
predictive $P_N$ has density $p_N$, which provides a finite approximation to the limiting predictive density $p_\infty$.

We construct the predictive sequence using the recursive Gaussian-copula
update of \citet{fong2023martingale}, described in
Appendix~\ref{app:mpd_gausscop}. The update is sequential, while independent
predictive-resampling paths can be run in parallel. The density-generation
step is also modular: alternative coherent predictive schemes, including the score-based construction \citep{bariletto2026scalable},  neural predictive processes \citep{lee2023martingale}, and transformer-based TabPFN \citep{ng2025tabmgp}, may be
substituted without changing the downstream level-set clustering or posterior summarisation procedure.

Repeating predictive resampling independently $M$ times gives terminal
density draws $p_N^{(m)}
$, $ m=1,\ldots,M$ (Figure~\ref{fig:mpd_clustering_draws}a).
Each draw is then mapped by the level-set graph rule (Figure~\ref{fig:mpd_clustering_draws}b):
\begin{equation}
\theta^{(m)}
=
\psi_\lambda
\left(
p_N^{(m)}
\right),
\qquad
m=1,\ldots,M.
\label{eq:clustering_draws}
\end{equation}
Because the predictive-resampling paths are generated independently, $\{\theta^{(m)}\}_{m=1}^M$ are independent draws from the finite-horizon distribution induced by $p_N$. Hence, for sufficiently large $N$, the empirical distribution of $\{\theta^{(m)}\}_{m=1}^M$ provides a Monte Carlo approximation to the martingale posterior on sub-partitions in \eqref{eq:clustering_mpd}. 
Variation across these draws captures uncertainty both in which observations are active and in how the active observations are clustered. 
These sub-partition draws form the input to the posterior summarisation framework
developed in Section~\ref{sec:wasabi noise}.

\section{Multimodal posterior summarisation and uncertainty quantification under noise}
\label{sec:wasabi noise}

Let $\theta^{(1)},\ldots,\theta^{(M)}\in\mathcal S_n$ denote draws from a posterior $\pi(\theta\mid y_{1:n})$ on sub-partitions, which may be obtained from the predictive-resampling in Section~\ref{sec:mpd_intro}, or from other Bayesian methods that return posterior sub-partition draws. A standard decision-theoretic summary is the Bayes estimator under a loss $D(\theta,\hat\theta)$,
\begin{equation}
\theta^{*}
=\operatorname*{arg\,min}_{\hat{\theta}\in\mathcal S_n}
\mathbb{E}
[D(\theta,\hat{\theta})| y]
\approx
\operatorname*{arg\min}_{\hat{\theta}\in\mathcal S_n}
\frac{1}{M} \sum_{m=1}^{M} D(\theta^{(m)},\hat{\theta}).
\label{eq:bayes_subpartition_estimator}
\end{equation}
Different choices of the loss $D$ encode different notions of clustering discrepancy. For full partitions, commonly used choices include Binder's loss \citep{binder1978bayesian}, based on pairwise co-clustering disagreements, and the variation of information (VI) \citep{meilua2007comparing}, based on entropy and mutual information. Their definitions are reviewed in Appendix~\ref{app:loss function}.

\paragraph{Limitation I: standard partition losses do not identify noise.}
When inactive observations are present, treating label $0$ as an ordinary cluster label loses its special interpretation. For example, $\theta=(0,1,1,1)$ and $\hat\theta=(1,0,0,0)$ have complementary active sets but induce the same unlabeled full partition $\{\{1\},\{2,3,4\}\}$ if label $0$ is treated as an ordinary cluster; standard Binder and VI losses therefore assign zero discrepancy. Noise-aware comparison must instead distinguish disagreement in active status from disagreement in the clustering of observations that are active in both sub-partitions. \citet{buch2024bayesian} introduced the inactive--active Binder (IA-Binder) loss for this purpose. Motivated by the different split--merge behaviour of Binder and VI \citep{wade2018bayesian}, we additionally introduce an inactive--active extension of VI in Section~\ref{subsec:ia_vi_losses}.

\paragraph{Limitation II: a single Bayes estimate cannot represent posterior multimodality.}
Even with an appropriate noise-aware loss, \eqref{eq:bayes_subpartition_estimator} reduces the posterior to one sub-partition. This can obscure qualitatively distinct posterior modes arising from alternative active sets, alternative clustering structures, or both. We therefore seek a compact posterior summary comprising several weighted representative sub-partitions.

\subsection{WASABI-SuP: multimodal summarisation of sub-partition posteriors}
\label{subsec:wasabi_sup}

We propose WASABI-SuP, extending WASABI \citep{balocchi2025understanding} from posteriors on full partitions to sub-partitions. Rather than replacing the posterior by a single Bayes estimate, WASABI-SuP approximates $\pi(\theta\mid y_{1:n})$ by an $L$-particle distribution
\begin{equation}
q(\theta)
=
\sum_{\ell=1}^{L}
w_\ell\,\delta_{\theta_\ell}(\theta),
\quad
w_\ell\geq0,
\quad
\sum_{\ell=1}^{L}w_\ell=1,
\notag
\end{equation}
and $\theta_\ell\in\mathcal S_n$. 
For a non-negative loss $D$ on $\mathcal S_n$, define the transport-type discrepancy
\begin{equation}
\mathsf T_D(\pi,q)
=
\inf_{\gamma\in\Gamma(\pi,q)}
\sum
D(\theta,\theta')\,\gamma(\theta,\theta'),
\notag
\end{equation}
where $\Gamma(\pi,q)$ is the set of couplings of $\pi$ and $q$. 

WASABI-SuP aims to solve  $\min_{q\in\mathcal Q_L}\mathsf T_D(\pi,q)$, 
where $\mathcal Q_L$ is the class of probability measures supported on at most $L$ sub-partitions. When $D$ is a metric, $\mathsf T_D$ is the corresponding $1$-Wasserstein distance denoted by $W_D$. For a general non-negative loss, including VI$_{\mathrm{IA}}$ in \eqref{eq:ia_vi_lin}, the same construction remains a well-defined loss-based optimal-transport compression objective.

In practice, we only have the empirical posterior
\begin{equation}
\widehat\pi_M
=
\sum_{m=1}^{M}\eta_m\,\delta_{\theta^{(m)}},
\quad
\eta_m\geq0,
\quad
\sum_{m=1}^{M}\eta_m=1.
\label{eq:weighted_empirical_posterior}
\end{equation}

For the proposed predictive resampling, or MCMC, the weights are $\eta_m=1/M$; allowing general weights also accommodates the level-marginal construction of Section~\ref{sec:wasabi-lambda}. 
The corresponding empirical WASABI-SuP solves 
\begin{equation}
q^*
\in
\operatorname*{arg\,min}_{q\in\mathcal Q_L}
\mathsf T_D\!\left(\widehat\pi_M,q\right)
\notag
\end{equation} 
using properties of transport-type discrepancies \citep{cuturi2014fast}. Specifically,  given the particles $\theta_\ell^*$,  $\ell=1,\ldots,L$, 
define their Voronoi cells by
\begin{equation}
\mathcal N_\ell
=
\left\{
\theta^{(m)}:
D(\theta^{(m)},\theta_\ell^*)
\leq
D(\theta^{(m)},\theta_j^*)
\ \text{for all }j\neq\ell
\right\},
\notag
\end{equation}
with ties resolved arbitrarily to one cell.
The particle weight and within-region loss are
\begin{equation}
w_\ell=\hspace{-0.5em}\sum_{\theta^{(m)}\in\mathcal N_\ell}\hspace{-0.5em}\eta_m,
\quad
l_\ell
=
\frac{1}{w_\ell}
\sum_{\theta^{(m)}\in\mathcal N_\ell}
\hspace{-0.5em}\eta_m\,
D(\theta^{(m)},\theta_\ell^*),  
\label{eq:wasabi_region_weight_loss}
\end{equation}
for $ w_\ell>0$, with the convention $l_\ell=0$ when $w_\ell=0$.
Hence the empirical compression objective is
\begin{align}
\widehat{\mathcal T}_D &= \mathsf T_D (\widehat\pi_M, q^*) =
\sum_{\ell=1}^{L}w_\ell l_\ell
=
\sum_{m=1}^{M}  \eta_m
\min_{1\leq\ell\leq L} 
D(\theta^{(m)},\theta_\ell^*).
\label{eq:wasabi_sup_objective}
\end{align}

Thus, each particle represents one region of the posterior in the sub-partition space, $w_\ell$ measures the posterior mass of  particle $\ell$, and $l_\ell$ quantifies the residual posterior spread within the region. For equal posterior weights, $w_\ell=|\mathcal N_\ell|/M$. When $L=1$, \eqref{eq:wasabi_sup_objective} reduces to 
\eqref{eq:bayes_subpartition_estimator}; WASABI-SuP therefore generalises the usual Bayes estimator from one global representative to several locally representative posterior summaries.

\paragraph{Optimisation.}
We use an alternating $k$-means-like procedure to find the particles $\theta_1^*,\ldots,\theta_L^*$: i) given the particles, the $\mathcal N$-update assigns each posterior draw to its closest particle under $D$, and ii) given the Voronoi cells, each particle $\ell=1,\ldots,L$ is updated by finding the local Bayes estimator
\begin{equation}
\theta_\ell^*
\leftarrow
\operatorname*{arg\,min}_{\theta\in\mathcal S_n}
\sum_{\theta^{(m)}\in\mathcal N_\ell}
\eta_m\,
D(\theta^{(m)},\theta),
\label{eq:particle_update}
\end{equation}
which we approximate using a modified SALSO search \citep{dahl2022search}. Hence each WASABI-SuP particle is a Bayes representative of a local posterior region.
The two steps are repeated until the change in \eqref{eq:wasabi_sup_objective} is below a prescribed threshold. Pseudocode, 
initialisation strategies, and complexity analysis are given in Appendix~\ref{app:wasabi_initialisation} and Appendix~\ref{app:complexity}.

\subsection{Noise-aware losses}
\label{subsec:ia_vi_losses}

WASABI-SuP requires a loss that respects the active--inactive disagreements. One available choice is IA-Binder (BI$_{\mathrm{IA}}$) \citep{buch2024bayesian}, which augments Binder's pairwise loss with explicit active--inactive penalties; its definition is recalled in Appendix~\ref{app:loss function}. We additionally propose an inactive--active variation of information (IA-VI) loss, denoted as VI$_{\mathrm{IA}}$, providing an information-theoretic alternative for sub-partitions. The general construction of WASABI-SuP can be easily adapted for alternative losses on sub-partitions.

Consider two sub-partitions $\theta$ and $\hat\theta$. Define their active and inactive sets 
$A=\{i:c_i\neq0\}$,
$\hat A=\{i:\hat c_i\neq0\}$,
$I=[n]\setminus A$,
$\hat I=[n]\setminus\hat A$,
and let $B=A\cap\hat A$ be the common active set. Write $\rho_B$ and $\hat\rho_B$ for the ordinary partitions induced by $\theta$ and $\hat\theta$ on $B$.
For $n\geq2$, define
\begin{align}
\mathrm{VI}_{\mathrm{IA}}(\theta,\hat\theta)
&=
\frac{\tau\log_2 n}{n}
\left(
|A\cap\hat I|
+
|I\cap\hat A|
\right)
+
\mathrm{VI}(\rho_B,\hat\rho_B).
\label{eq:ia_vi_lin}
\end{align}
where the first term measures the active-inactive disagreement of observations and $\tau>0$ is a common mismatch penalty; an asymmetric extension allowing different penalties for active-to-inactive and inactive-to-active mismatches is given in 
Appendix~\ref{app:IA-VI loss}. The second term $\mathrm{VI}(\rho_B,\hat\rho_B)$ compares the clustering structure conditional on being active in both sub-partitions, and  explicit expressions are given in 
Appendix~\ref{app:IA-VI loss}.
The factor $\log_2 n/n$ places the activity penalty on the same logarithmic scale as VI. 

\begin{proposition}[Basic properties of VI$_{\mathrm{IA}}$]
\label{prop:ia_vi_lin_basic}
For $\tau>0$,
$\mathrm{VI}_{\mathrm{IA}}$ is non-negative, symmetric, and invariant to relabelling of positive cluster labels. Moreover,
\[
\mathrm{VI}_{\mathrm{IA}}(\theta,\hat\theta)=0
\quad\Longleftrightarrow\quad
\theta=\hat\theta
\]
up to relabelling of the positive labels. If $A=\hat A$, then
\begin{equation}
\mathrm{VI}_{\mathrm{IA}}(\theta,\hat\theta)
=
\mathrm{VI}(\rho_A,\hat\rho_A),
\label{eq:iavi_reduces_vi}
\end{equation}
so VI$_{\mathrm{IA}}$ reduces exactly to ordinary VI when the active sets coincide. Finally,
\begin{equation}
0
\leq
\mathrm{VI}_{\mathrm{IA}}(\theta,\hat\theta)
\leq
(\tau+1)\log_2 n.
\label{eq:iavi_bound}
\end{equation}
\end{proposition}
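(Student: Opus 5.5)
We argue term by term from the definition \eqref{eq:ia_vi_lin}, using the standard facts about ordinary VI recalled in Appendix~\ref{app:loss function}. On the common active set $B$, the VI between partitions $\rho_B,\hat\rho_B$ is
\[
\mathrm{VI}(\rho_B,\hat\rho_B)=H(\rho_B)+H(\hat\rho_B)-2I(\rho_B,\hat\rho_B),
\]
with entropies computed from cluster proportions. The normalisation (by $n$ or by $|B|$) is taken as in Appendix~\ref{app:IA-VI loss}. We use four properties of VI:
\begin{itemize}
\item it is non-negative;
\item it is symmetric;
\item it depends only on the unlabeled partitions;
\item it vanishes exactly when the two partitions coincide.
\end{itemize}
We also use the bound $\mathrm{VI}\le \log_2 n$ (Meil\u{a}).

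\emph{Non-negativity, symmetry, invariance.} The mismatch term $\frac{\tau\log_2 n}{n}(|A\cap\hat I|+|I\cap\hat A|)$ is non-negative because $\tau>0$. It is symmetric because swapping $\theta$ and $\hat\theta$ exchanges the two cardinalities. It is label-invariant because $A$ and $I$ depend only on which $c_i$ are zero, and relabelling positive labels does not change this. The set $B$ is also symmetric and label-invariant, and $\rho_B,\hat\rho_B$ are unlabeled partitions, so the VI term inherits all three properties.

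\emph{Identity of indiscernibles.} If $\theta=\hat\theta$ up to relabelling, then $A=\hat A$, so both mismatch counts vanish, and $\rho_B=\hat\rho_B$, so the VI term is zero. Conversely, suppose $\mathrm{VI}_{\mathrm{IA}}=0$. Both terms are non-negative, so each is zero. Since $\tau\log_2 n/n>0$ for $n\ge2$, the first term forces $A\cap\hat I=I\cap\hat A=\emptyset$, hence $A=\hat A=B$. The second term gives $\mathrm{VI}(\rho_B,\hat\rho_B)=0$, hence $\rho_B=\hat\rho_B$. Together with identical inactive sets, this means $\theta$ and $\hat\theta$ agree up to relabelling of positive labels.

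The reduction \eqref{eq:iavi_reduces_vi} follows immediately: if $A=\hat A$, the first term vanishes and $B=A$.

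\emph{Upper bound.} We bound the two terms separately. The mismatch sets $A\cap\hat I$ and $I\cap\hat A$ are disjoint subsets of $[n]$, so their total size is at most $n$, and the first term is at most $\tau\log_2 n$. For the second term, VI on a set of $|B|$ elements is at most $\log_2|B|\le\log_2 n$. If the appendix normalises by $n$ instead of $|B|$, the VI term equals $\frac{|B|}{n}\mathrm{VI}^{(|B|)}$, which is at most $\frac{|B|}{n}\log_2|B|\le\log_2 n$. In either case the second term is at most $\log_2 n$. Adding the two bounds gives \eqref{eq:iavi_bound}.

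The main obstacle is confirming which normalisation Appendix~\ref{app:IA-VI loss} uses for the VI term, and that the Meil\u{a} bound $\log_2|B|$ applies to it. Both candidate normalisations were handled above. The degenerate case $|B|\le1$ needs a separate remark: there VI is $0$, which is consistent with every claim.
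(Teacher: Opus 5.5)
Your proof is correct and follows essentially the same route as the paper's. Non-negativity, symmetry and label invariance are checked term by term. Vanishing of the mismatch term forces $A=\hat A$, and ordinary VI vanishes only for identical partitions. The upper bound comes from bounding the mismatch term by $\tau\log_2 n$ and the restricted VI by $\log_2|B|\le\log_2 n$. Since the appendix computes the restricted VI with proportions relative to $|B|$ (and sets it to zero when $B=\varnothing$), only the first branch of your normalisation case analysis is actually needed.
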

A tight upper bound (compared to  \eqref{eq:iavi_bound}), which 
characterizes the most distant sub-partitions, together with a more general asymmetric extension allowing distinct active-to-inactive and inactive-to-active penalties, is given in Appendix~\ref{app:iavi_properties}. 
Beyond these basic properties, Appendix~\ref{app:comparative_properties}
develops the geometry of VI$_{\mathrm{IA}}$ on the sub-partition lattice and shows
that the loss satisfies an additive meet identity. We further compare
VI$_{\mathrm{IA}}$ and BI$_{\mathrm{IA}}$ under several canonical sub-partition configurations
in Appendix~\ref{app:comparative_properties}. These results show that Binder-type losses can favour splitting,
whereas VI$_{\mathrm{IA}}$ gives a more balanced treatment of splitting and merging. 
The analysis also explores the role of the active--inactive penalty $\tau$ in defining the discrepancy between different sub-partition structures, showing that $\tau=1$ is a reasonable default choice.

\paragraph{Posterior expected VI$_{\mathrm{IA}}$ loss.}
VI$_{\mathrm{IA}}$ can be used both in the global Bayes estimator
\eqref{eq:bayes_subpartition_estimator} and in the local particle
update \eqref{eq:particle_update}. Unlike BI$_{\mathrm{IA}}$, whose posterior
expected loss can be expressed through marginal activity probabilities
and pairwise active co-clustering probabilities, VI$_{\mathrm{IA}}$ is not pairwise
additive: its clustering component depends on the full contingency
table between the sampled and candidate partitions on their common
active set. A decomposition into an activity-penalty term and a
restricted-VI term, together with weighted Monte Carlo evaluation for
both global and local risks, is given in
Appendix~\ref{app:expected_iavi}.

\paragraph{Losses used in this work.}
We consider both VI$_{\mathrm{IA}}$ and BI$_{\mathrm{IA}}$ as losses within WASABI-SuP, allowing the effect of the underlying loss geometry on posterior compression to be examined explicitly. The general construction of WASABI-SuP can be easily adapted for alternative losses on sub-partitions.

\subsection{WASABI-SuP with level uncertainty}
\label{sec:wasabi-lambda}

The level set approach fixes a density threshold $\lambda$. When no preferred level is available, different plausible values of $\lambda$ can induce different active sets and clustering structures. Conditioning on a single level may therefore understate an important source of clustering uncertainty. We account for this by placing a distribution $H$ on a prespecified range $\Lambda\subset(0,\infty)$ of plausible density levels and marginalising over $\lambda$.

For a density $f$ and level $\lambda$, let
$\theta_\lambda
=
\psi_\lambda(f)
\in\mathcal S_n$
denote the sample level-set graph sub-partition defined in Section~\ref{sec: graph}. If $\Pi_n(df\mid y_{1:n})$ denotes a posterior or martingale-posterior distribution over densities, the fixed-level clustering posterior is the pushforward
\begin{equation}
\Pi_{n,\lambda}
=
\int
\delta_{\theta_\lambda}\,
\Pi_n(df\mid y_{1:n}).
\notag
\end{equation}
We define the \emph{level-marginal clustering posterior} by
\begin{equation}
\Pi_n^\Lambda
=
\int_\Lambda \Pi_{n,\lambda}H(d\lambda)
=
\int_\Lambda\int
\delta_{\theta_\lambda}
\Pi_n(df|y_{1:n})H(d\lambda).
\notag
\end{equation}
Equivalently, a draw from $\Pi_n^\Lambda$ is obtained by drawing $f\sim\Pi_n(\cdot\mid y)$ and $\lambda\sim H$, then computing $\theta_\lambda = \psi_\lambda(f)$. Thus, $\Pi_n^\Lambda$ propagates uncertainty in both the density and the level into uncertainty in the active set and clustering structure. The fixed-level formulation is recovered as the special case $H=\delta_{\lambda_0}$.

\paragraph{Empirical approximation.}
Given density draws $f^{(1)},\ldots,f^{(M)}$ and a weighted level grid $\{(\lambda_g,\omega_g)\}_{g=1}^{G}$ with $\sum_g\omega_g=1$, define
$\theta^{(m,g)}
=\psi_{\lambda_g}\!\left(f^{(m)}\right)$,
$m=1,\ldots,M$,
$g=1,\ldots,G$.
The corresponding empirical level-marginal posterior is
\begin{equation}
\widehat\pi_{M,G}^{\Lambda}
=
\frac{1}{M}
\sum_{m=1}^{M}
\sum_{g=1}^{G}
\omega_g\,
\delta_{\theta^{(m,g)}}.
\label{eq:hat-Pi-Lambda}
\end{equation}
Alternatively, density--level pairs may be sampled directly from $\Pi_n(\cdot\mid y_{1:n})\otimes H$, giving equally weighted sub-partition draws.

Because \eqref{eq:hat-Pi-Lambda} is simply a weighted empirical distribution on $\mathcal S_n$, the WASABI-SuP solution is:
\begin{equation}
\widehat q_L^{\,\Lambda}
\in
\operatorname*{arg\,min}_{q\in\mathcal Q_L}
\mathsf T_D\!\left(
\widehat\pi_{M,G}^{\Lambda},q
\right),
\notag
\end{equation}
with atom weights $\eta_{m,g}=\omega_g/M$ in \eqref{eq:weighted_empirical_posterior}. Hence, level marginalisation changes the posterior supplied to WASABI-SuP, but not the  algorithm itself. Importantly, a resulting particle need not correspond to a particular value of $\lambda$. A single Voronoi cell may collect posterior mass from several density--level combinations whenever they induce similar sub-partitions. WASABI-SuP therefore compresses uncertainty jointly over density estimation, level selection, active-set membership, and clustering structure.

\paragraph{Theoretical guarantees.}
For the theoretical analysis, we distinguish the computational sample-graph
clustering from the population level-set clustering.
Let $P_{0,n}^{\Lambda}$ denote the level-marginal distribution of the
sub-partitions of the observed data induced by the connected components of the
true level sets $L_\lambda(f_0)$, with $\lambda\sim H$; its precise definition
is given in Appendix~\ref{ssec:level-marginal-theory}. Under uniform sup-norm
contraction of the density posterior and a level-marginal stability condition
linking the sample-graph surrogate to the population level-set clustering, we
show that $\Pi_n^\Lambda$ converges to $P_{0,n}^\Lambda$ under the corresponding
bounded transport cost. When the ground loss is a metric, this yields
Wasserstein convergence. We further establish an oracle bound showing that the
empirical $L$-particle WASABI-SuP summary is asymptotically as accurate as the
best $L$-particle population approximation whenever the empirical
level-marginal measure consistently approximates its population target. For
VI$_{\mathrm{IA}}$, an analogous oracle result is obtained under uniform convergence of the
corresponding compression risk.

\paragraph{Practical  choice of the level distribution.}
For the fixed-level analysis, 
we avoid manual tuning of the  density level $\lambda$ or specification of the ground-truth 
noise fraction through a calibrated approach described  in Appendix~\ref{app:level-calibration}. 
For the marginal-level analysis, 
uncertainty in this calibrated choice is
represented by a distribution $H$ over plausible levels.  Our experiments use local grids around the calibrated choice, either directly on the density-level scale or through the inactive-fraction parameterisation. 
Additional choices for $H$ and grid constructions 
are in Appendix~\ref{APP:h}.

\section{Results}
\begin{table*}[tp]
\caption{Point-estimate clustering accuracy.
$\hat K$ and $\hat{\text{N}}\text{oise}$ are the estimated number of clusters and inactive data points, respectively.
Lower is better for the normalized $\widetilde{\mathrm{VI}}_{\mathrm{IA}}$ and
$\widetilde{\mathrm{BI}}_{\mathrm{IA}}$; higher is better for
$\mathrm{ARI}$. 
Point estimates of MDBC use the same posterior draws and differ in the loss ($\text{V}$ for VI$_{\mathrm{IA}}$ and $\text{B}$ for BI$_{\mathrm{IA}}$).}
\label{tab:results2d}
\begin{center}
\small
\setlength{\tabcolsep}{2pt}
\renewcommand{\arraystretch}{1.25}
\begin{tabular}{
l
ccccc
@{\hspace{8pt}}
ccccc
}
\hline
&
\multicolumn{5}{c}{
{Ring} $(n=250,\ K=1,\ \mathrm{Noise}=90)$
}
&
\multicolumn{5}{c}{
{Four rectangles} $(n=480,\ K=4,\ \mathrm{Noise}=86)$
}
\\
\cline{2-6}\cline{7-11}
\textbf{Metric}
& \textbf{$k$-means}
& \textbf{DBSCAN}
& \textbf{BNPmix}
& \textbf{MDBC$_{\text{V}}$}
& \textbf{MDBC$_{\text{B}}$}
& \textbf{$k$-means}
& \textbf{DBSCAN}
& \textbf{BNPmix}
& \textbf{MDBC$_{\text{V}}$}
& \textbf{MDBC$_{\text{B}}$}
\\
\hline
$\hat K$
& 1 & 3 & 7 & \textbf{1} & 3
& 3 & 7 & 8 & \textbf{4} & \textbf{4}
\\
$\hat{\text{N}}\text{oise}$
& 0 & 44 & 0 & 73 & 88
& 0 & 79 & 0 & 68 & 68
\\
$\widetilde{\mathrm{VI}}_{\mathrm{IA}} \downarrow$
& 0.239 & 0.225 & 0.428 & \textbf{0.082} & 0.178
& 0.174 & 0.050 & 0.158 & \textbf{0.024} & \textbf{0.024}
\\
$\widetilde{\mathrm{BI}}_{\mathrm{IA}} \downarrow$
& 0.360 & 0.396 & 0.684 & \textbf{0.124} & 0.292
& 0.273 & 0.062 & 0.236 & \textbf{0.037} & \textbf{0.037}
\\
$\mathrm{ARI} \uparrow$
& 0.000 & 0.183 & 0.019 & \textbf{0.558} & 0.324
& 0.483 & 0.859 & 0.604 & \textbf{0.917} & \textbf{0.917}
\\

\hline
\end{tabular}
\end{center}
\end{table*}

\subsection{Point-estimate clustering performance} \label{sec:sim2d} We first evaluate point-estimate clustering performance under noise; see Figure~\ref{fig:sim2d}. Both the \emph{ring} and the \emph{four rectangles} contain non-Gaussian clusters embedded in homogeneous clutter. The four-rectangle example additionally stresses heterogeneous density. 
We compare with $k$-means, DBSCAN, and a Pitman--Yor Gaussian mixture fitted using \texttt{BNPmix} \citep{corradin2021bnpmix}. For $k$-means, $K$ is selected by the gap statistic \citep{tibshirani2001estimating}; for DBSCAN, the neighbourhood radius adopts the $k$-distance curve; and for BNPmix, we use its VI-based clustering estimate. 
For the proposed MDBC, our default point estimate is the VI$_{\mathrm{IA}}$ Bayes estimator in
\eqref{eq:bayes_subpartition_estimator}, but to assess sensitivity to the choice of loss, we additionally compute the BI$_{\mathrm{IA}}$ Bayes estimate. 
We denote the two summaries by MDBC$_{\mathrm{V}}$ and MDBC$_{\mathrm{B}}$, respectively. 
We report the standard adjusted Rand index (ARI) \citep{hubert1985comparing} as a reference metric, though it does not explicitly distinguish active--inactive disagreement. Thus, 
we evaluate noise-aware clustering
accuracy using a normalised version of VI$_{\mathrm{IA}}$ in \eqref{eq:ia_vi_lin} and BI$_{\mathrm{IA}}$ in \eqref{eq:ia-binder} (see Appendix \ref{app:Normalised losses}); 
lower values indicate better recovery and $0$ corresponds to exact agreement.

As shown in Figure~\ref{fig:sim2d} and
Table~\ref{tab:results2d}, MDBC$_{\mathrm{V}}$ recovers the intended cluster structure in both cases and achieves the best accuracy. For the four-rectangle case, MDBC$_{\mathrm{V}}$ and
MDBC$_{\mathrm{B}}$ have the same $L=1$ sub-partition estimate, as is visible in Figure~\ref{fig:sim2d}; consequently, their metrics in Table~\ref{tab:results2d} are identical. In contrast, MDBC$_{\mathrm{B}}$ splits the ring into three clusters; this behaviour is consistent
with the split-favouring behavior of Binder loss  \citep{wade2023bayesian}. 
In contrast,  $k$-means forces all clutter into clusters, DBSCAN over-fragments the data, and BNPmix uses multiple elliptical components to represent the  non-Gaussian clusters. Additional details and comparisons with BALLET-type methods \citep{buch2024bayesian} are  in Appendix~\ref{app:sim2d}.

\subsection{Robustness analysis}
\label{sec:robustness}
\paragraph{Varying cluster separation.}
First, we highlight the utility of WASABI-SuP under clustering ambiguity. We
consider two Gaussian components with increasing
separation $d$ (see Figure~\ref{fig:separation-data} in Appendix~\ref{app:robustness-separation}). 
Figure~\ref{fig:separation}(a) shows the  VI$_{\mathrm{IA}}$ compression loss  in \eqref{eq:wasabi_sup_objective} for $L=1$ across values of 
$d$. This loss peaks at the intermediate separation of $d=2.5$ and
decreases when the components are clearly separated, indicating the
greatest posterior uncertainty when clusters are weakly separated. Figure~\ref{fig:separation}(b) shows the VI$_{\mathrm{IA}}$ compression loss as the number of
particles $L$ increases, suggesting the choice of the $L=2$ summary at the elbow. This $L=2$ summary at the ambiguous separation $d=2.5$ is shown in  Figure~\ref{fig:separation}(c)--(d), where the dominant particle ($w_1=0.879$) represents
a single merged cluster, whereas the second ($w_2=0.121$) represents a
two-cluster split. Thus, rather than forcing 
a single clustering, WASABI-SuP retains a compact summary of  multiple competing modes of clustering and quantifies their posterior  mass. Appendix~\ref{app:robustness-separation} gives more details on data generation  and the decomposition of  uncertainty into inactive status and active clustering. Robustness to increasing clutter is further examined in Appendix~\ref{app:Robustness to increasing clutter}; see Figures~\ref{fig:noise_level_robustness} and~\ref{fig:noise_level_data}.

\begin{figure}[tp!]
    \centering
    \includegraphics[width=0.6\linewidth]{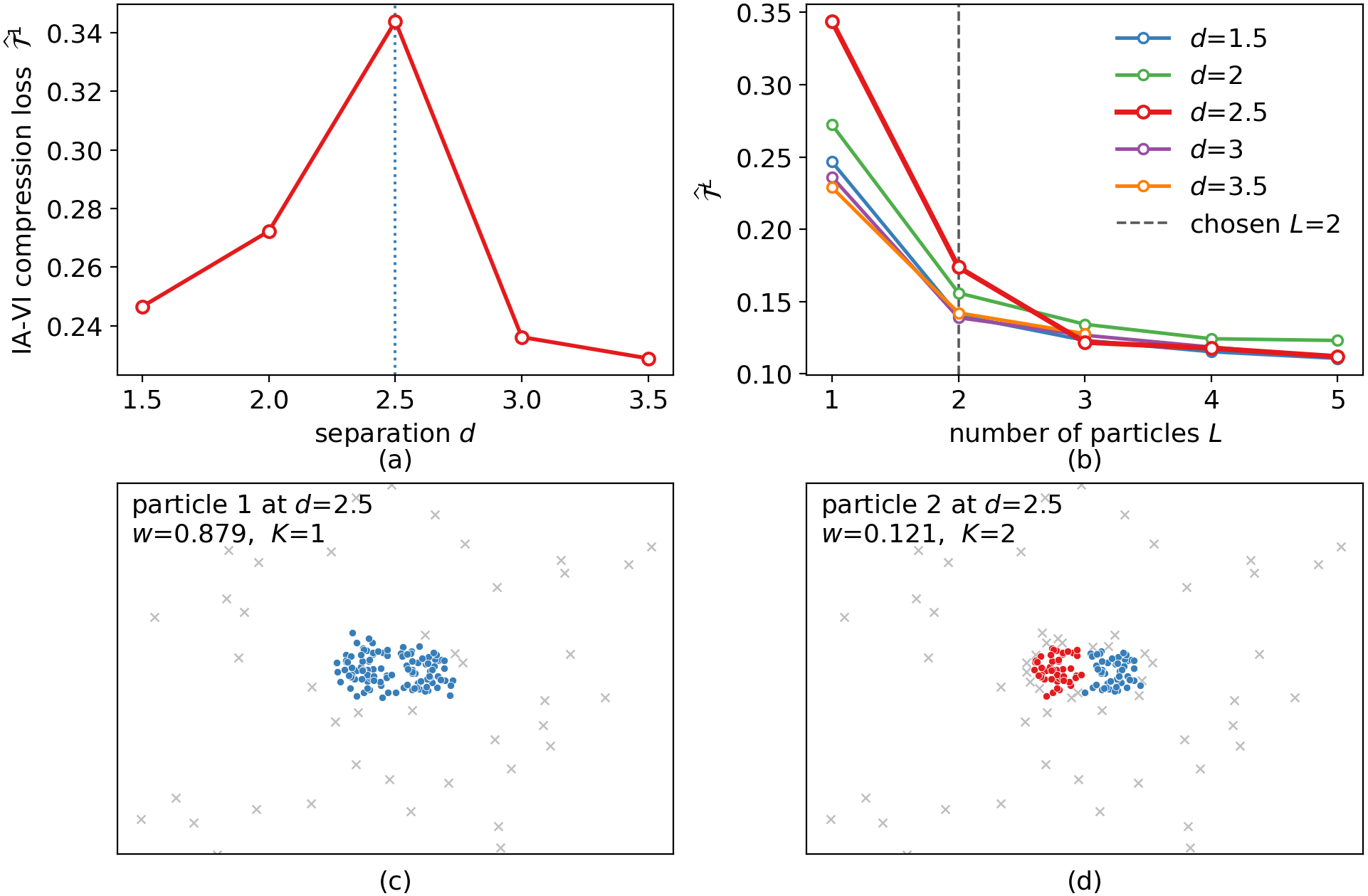}
    \caption{
(a) VI$_{\mathrm{IA}}$ compression loss 
for $L=1$  and increasing separation $d$. 
(b) Elbow plot for choosing $L$; the dashed line marks the selected $L=2$.
(c)--(d) The two particles at  $d=2.5$. Grey crosses are inactive observations.
}
    \label{fig:separation}
\end{figure}

\begin{figure}[htp]
    \centering
    \includegraphics[width=\linewidth]{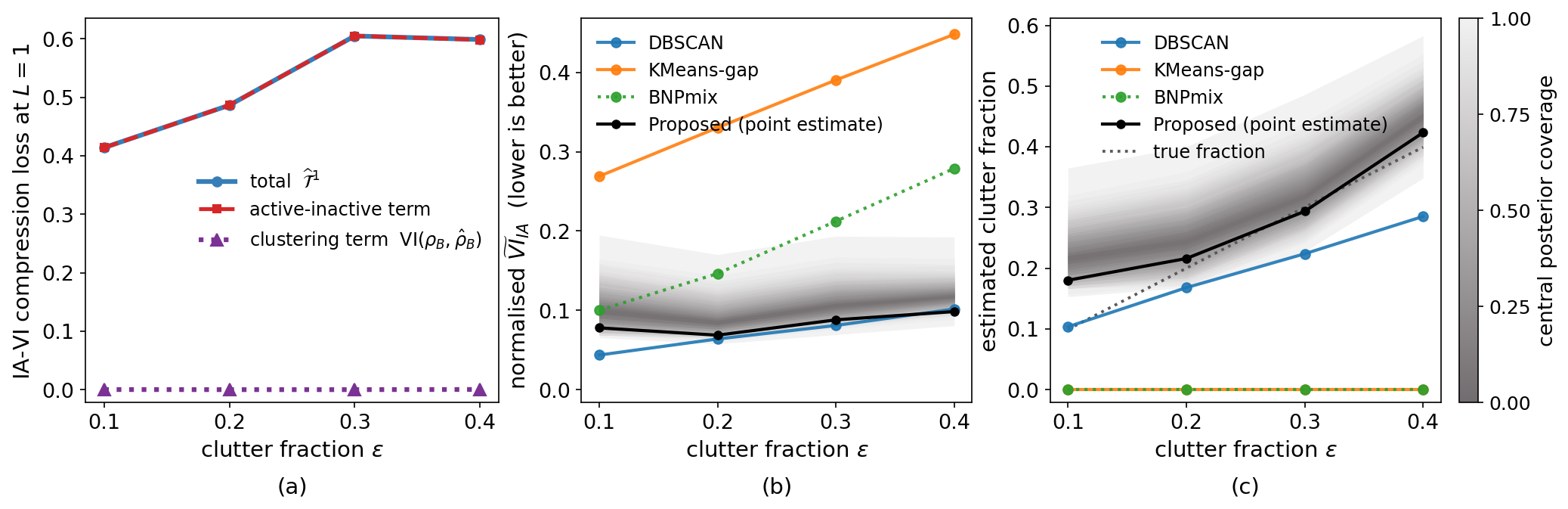}
    \caption{Robustness to increasing clutter.
    (a) One-particle $\mathrm{VI}_{\mathrm{IA}}$ compression loss and its
    decomposition into active--inactive and within-active clustering components.
    (b) Normalised $\widetilde{\mathrm{VI}}_{\mathrm{IA}}$ against the
    ground-truth clustering (lower is better).
    (c) Estimated inactive fraction, with the dotted line indicating the true
    clutter fraction.
    In (b)--(c), the black line denotes the $L=1$ point estimate of the proposed
    method, while the shaded fan summarises the distribution across posterior
    sub-partition draws, with shading indicating central posterior coverage.
    }
    \label{fig:noise_level_robustness}
\end{figure}
\paragraph{Increasing clutter level.}
We next examine robustness as the proportion of inactive observations increases, with dataset and clustering result analysis in Appendix \ref{app:Robustness to increasing clutter}.
Figure~\ref{fig:noise_level_robustness}(a) examines how posterior uncertainty changes with contamination level. The one-particle ($L=1$) $\mathrm{VI}_{\mathrm{IA}}$ compression loss increases as the clutter level grows.
Its decomposition shows that this increase is almost entirely driven by the active--inactive component, while the within-active clustering contribution
remains negligible. Thus, in this example, the clustering structure among observations classified
as active is highly stable, and the dominant posterior uncertainty arises from disagreement over active versus inactive status.
Figure~\ref{fig:noise_level_robustness}(b) compares the proposed method with DBSCAN, gap-selected $k$-means, and BNPmix. For the proposed method, the shaded fan represents the distribution of
truth-referenced $\mathrm{VI}_{\mathrm{IA}}$ losses across individual posterior sub-partition draws, while the black line gives the loss of the $L=1$ point
estimate and therefore need not lie at the centre of this distribution. Across the clutter levels, the proposed method maintains a relatively stable
normalised $\widetilde{\mathrm{VI}}_{\mathrm{IA}}$ loss. In contrast, the full-partition methods deteriorate as clutter increases. DBSCAN performs slightly better at lower contamination levels, but its loss increases steadily and reaches $0.102$ at $\epsilon=0.4$. Additionally, as shown in Figure~\ref{fig:noise_level_data} in
Appendix~\ref{app:Robustness to increasing clutter}, DBSCAN recovers most clusters but consistently merges a pair of neighbouring clusters. In contrast, the proposed method preserves the main clustering structure while progressively classifying more observations as inactive as clutter increases.

Figure~\ref{fig:noise_level_robustness}(c) shows the corresponding estimated inactive fractions.
The proposed method tracks the increasing clutter level closely while also quantifying uncertainty in this estimate, whereas DBSCAN increasingly
underestimates the inactive fraction at higher contamination levels. As expected, the full-partition methods, $k$-means and BNPmix, return zero
inactive observations throughout.

\subsection{Evaluating uncertainty with WASABI-SuP on harbour radar data}
\label{sec:harbour}
\begin{figure}
    \centering
    \includegraphics[width=0.5\linewidth]{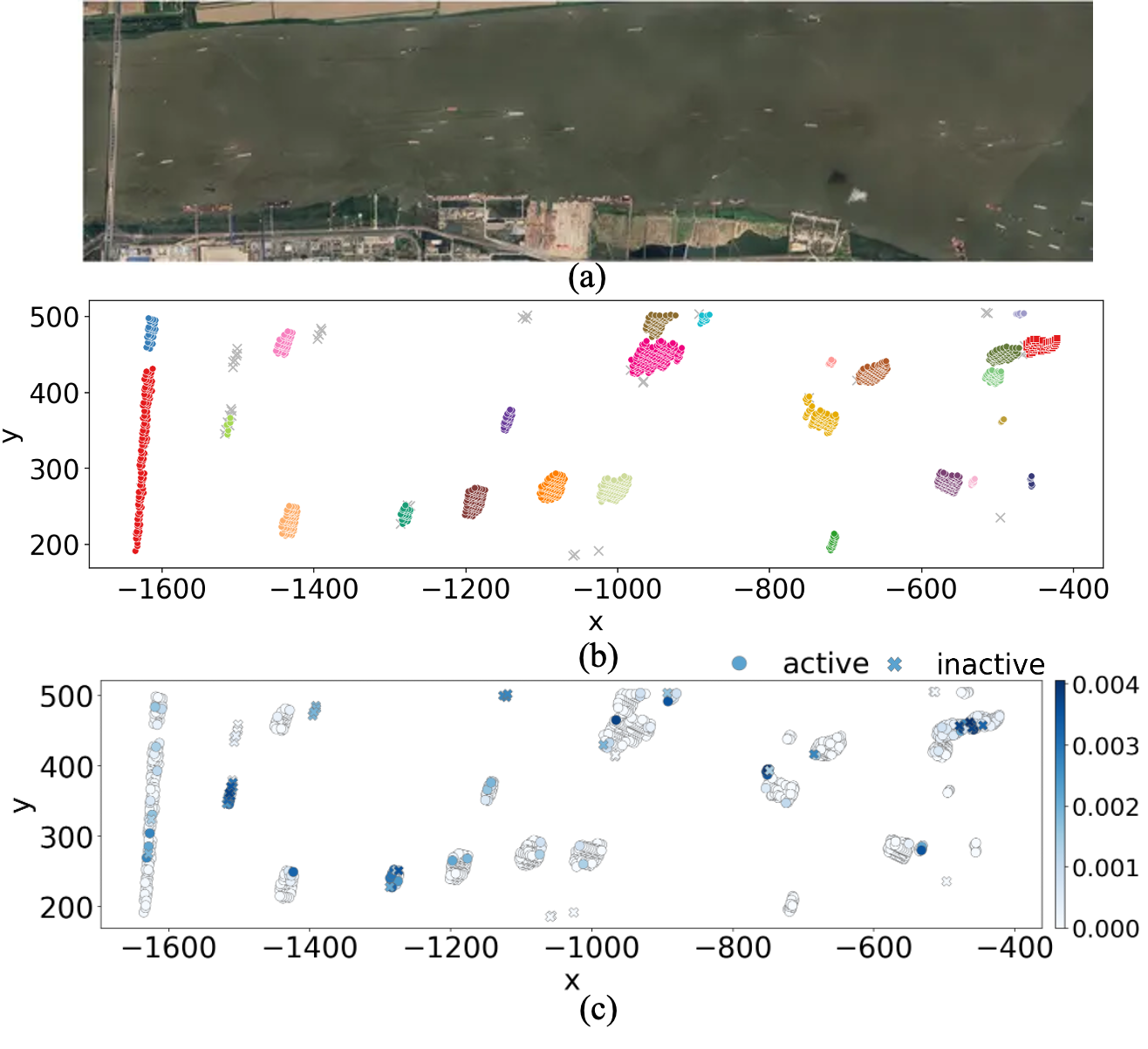}
    \caption{Harbour data. (a) Satellite map of the site.  (b) VI$_{\mathrm{IA}}$ point estimate; colours denote cluster membership and grey crosses are clutter. (c) Expected VI$_{\mathrm{IA}}$ contribution around point estimate; marker shape indicates active/inactive status and darker colours indicate greater uncertainty.}
    \label{fig:harbour-main}
\end{figure}

Finally, we examine the posterior uncertainty represented by WASABI-SuP on harbour radar data. 
Figure~\ref{fig:harbour-main}(a) shows the satellite view of the scene, while Figure~\ref{fig:harbour-main}(b) shows the VI$_{\mathrm{IA}}$
summary for $L=1$.  Figure~\ref{fig:harbour-main}(c) localises the uncertainty around
this estimate by colouring each observation according to its contribution to the posterior expected VI$_{\mathrm{IA}}$ loss (defined in Appendix~\ref{app:EVIC contribution}).  Pale points are stable across posterior draws, whereas darker points indicate greater uncertainty in their clustering or the active/inactive status.  
 
Appendix~\ref{app:harbour} validates WASABI-SuP supports multimodal posterior summaries to recover alternative plausible clustering structures. Moreover, increasing the number of WASABI-SuP particles decreases the IA-VI compression loss. The elbow plot selects $L=3$ and the particle weights are $(0.400,\;0.333,\;0.267)$. Hence no single mode dominates the posterior approximation. 
Additional  experimental settings and  comparisons with the BI$_{\mathrm{IA}}$ are also given in Appendix~\ref{app:harbour}.

\section{Conclusion}
We introduced an efficient Bayesian density-based clustering framework for data with
noise using martingale posteriors without requiring a
parametric mixture model. We further developed WASABI-SuP to summarise
a posterior over sub-partitions by a small set of weighted representative clusterings, retaining uncertainty in both cluster structure and active/inactive status. Experiments demonstrate accurate
noise-aware clustering estimation and show that WASABI-SuP preserves alternative posterior modes, with increasingly faithful summaries as
the number of particles grows. While the experiments and applications presented are limited to moderate dimensions, higher dimensional settings can be accommodated through suitable copula constructions \citep{fong2023martingale} or latent embeddings. 
Future work includes adaptive density-level selection, more scalable optimisation, and extensions to dynamic clustering.

\vskip 0.2in
\bibliography{ref}

\newpage

\appendix
\begin{center}
{\Large\bfseries Model-free Bayesian Clustering and Multimodal Posterior Summarisation: 
Supplementary Materials}
\end{center}

\section{Martingale-posterior predictive foundations}
\label{app:mpd_foundations}

This section provides additional details underlying the predictive-resampling
construction in Section~\ref{sec:mpd_intro}. We follow the martingale-posterior
framework of \citet{fong2023martingale}, in which uncertainty about a
quantity of interest is induced through a predictive distribution on
unobserved future data.

\subsection{Martingale posteriors from missing future observations}

Suppose that
$Y_{1:n}\overset{\mathrm{i.i.d.}}{\sim}F_0$
and that $Y_{1:n}=y_{1:n}$ has been observed. Consider completing the
observed sample with an infinite sequence of unobserved future observations,
$Y_{n+1},Y_{n+2},\ldots .$
Let $F_\infty$ denote the limiting empirical distribution of the completed
sample, whenever this limit exists, and let
$\vartheta_\infty=\vartheta(F_\infty)$
denote a generic quantity of interest that can be recovered from the
completed information. The martingale posterior is the distribution induced
on $\vartheta_\infty$ by the predictive distribution of the missing future:
\begin{equation}
\Pi_\infty
\left(
\vartheta_\infty\in B
\mid y_{1:n}
\right)
=
\int
\mathbf{1}
\left\{
\vartheta(F_\infty)\in B
\right\}
\,\mathrm{d}\Pi
\left(
F_\infty\mid y_{1:n}
\right),
\notag
\end{equation}
for measurable $B$ in the space of $\vartheta$.

Rather than deriving the predictive distribution from a likelihood, prior,
and posterior, martingale-posterior specifies the sequence of
one-step-ahead predictive distributions directly. After observing
$Y_{1:i}=y_{1:i}$, define
\begin{equation}
P_i(y)
:=
\Pr
\left(
Y_{i+1}\leq y
\mid y_{1:i}
\right),
\qquad
i\geq n,
\label{eq:Pi_def}
\end{equation}
with predictive density $p_i$ whenever it exists. The initial predictive
$P_n$ summarises the information in the observed sample and any externally
specified predictive information.

\subsection{Predictive coherence and the limiting predictive distribution}

The predictive sequence must satisfy a coherence condition so that repeated
forward simulation yields a well-defined limiting random distribution. A
convenient sufficient condition is that the future sequence
$Y_{n+1},Y_{n+2},\ldots$ is \emph{conditionally identically distributed}
(c.i.d.) \citep{fong2023martingale}. In particular, conditional on the
currently observed and imputed sequence, all subsequent observations have
the same one-step-ahead predictive distribution.

Equivalently, for each fixed $y$, the predictive sequence satisfies the
martingale condition
\begin{equation}
\mathbb{E}
\left[
P_i(y)
\mid
Y_{1:i-1}
\right]
=
P_{i-1}(y),
\qquad
i\geq n+1,
\label{eq:Pi_martingale}
\end{equation}
almost surely. Hence,
$P_n(y),P_{n+1}(y),\ldots$
forms a martingale for each $y$ and converges almost surely to a random
limiting predictive distribution $P_\infty$ under the usual regularity
conditions. Moreover,
\begin{equation}
\mathbb{E}
\left[
P_\infty(y)
\mid y_{1:n}
\right]
=
P_n(y),
\notag
\end{equation}
so predictive resampling propagates uncertainty around the current
predictive assessment without systematically shifting its conditional mean.

For c.i.d.\ sequences, the limiting predictive distribution $P_\infty$
coincides almost surely with the limiting empirical distribution
$F_\infty$ under the conditions described by
\citet{fong2023martingale}. Thus, when a functional is well defined for
both objects, it can equivalently be expressed as
$\vartheta(F_\infty)$ or $\vartheta(P_\infty)$.

\subsection{Density-valued functionals and finite truncation}

For the clustering construction in this paper, the target depends on a
density rather than solely on a distribution function. At a finite
predictive-resampling horizon $N$, the empirical distribution
\[
F_N
=
\frac{1}{N}
\left(
\sum_{i=1}^{n}\delta_{y_i}
+
\sum_{i=n+1}^{N}\delta_{Y_i}
\right)
\]
is discrete and therefore does not provide a suitable density for
level-set clustering. Instead, following
\citet{fong2023martingale}, we work with the updated predictive
distribution $P_N$, which remains continuous under the copula construction,
and use its density $p_N$.
More generally, when the finite martingale posterior is represented through an empirical distribution, as in TabMGP \citep{ng2025tabmgp}, kernel density estimation could be used to obtain a smooth density before applying the level-set map.

The random predictive $P_N$ provides a finite approximation to
$P_\infty$. Under the c.i.d.\ condition, $P_N$ converges weakly to a
random probability measure $P_\infty$ almost surely. In the univariate
Gaussian-copula construction, stronger conditions established by
\citet{fong2023martingale} ensure that $P_\infty$ is absolutely continuous
with density $p_\infty$ and that $P_N$ converges to $P_\infty$ in total
variation. Corresponding density convergence is not required for the finite
implementation used here: for sufficiently large finite $N$, each terminal
density $p_N$ can be mapped directly to a sub-partition through
$\psi_\lambda$.

Accordingly, define the finite-horizon clustering
\begin{equation}
\theta_N
=
\psi_\lambda(p_N).
\notag
\end{equation}
Repeating the predictive-resampling procedure independently produces
finite-horizon draws
$\theta_N^{(1)},\ldots,\theta_N^{(M)}$.
For sufficiently large $N$, these provide the practical approximation used
for the limiting clustering posterior in
\eqref{eq:clustering_mpd}.

\section{Elicitation of the predictive sequence with recursive Gaussian copulas}
\label{app:mpd_gausscop}

We now describe the predictive update used to generate the density draws in
Section~\ref{sec:mpd_intro}. Following
\citet{fong2023martingale}, the predictive sequence is constructed through
recursive Gaussian-copula updates. The construction is particularly suited
to predictive resampling because it is sequential, admits direct forward
simulation, and satisfies the martingale condition
\eqref{eq:Pi_martingale}, thereby yielding a c.i.d.\ predictive sequence.

\subsection{Univariate construction}

For clarity, we first describe the univariate update. Let $P_i$ denote the
predictive distribution function in \eqref{eq:Pi_def}, with density $p_i$.
Let $(U,V)\in(0,1)^2$ follow a Gaussian copula with correlation parameter
$\rho\in(0,1)$, and write
$c_\rho(u,v)$
for its density and
$H_\rho(u,v)
:=
\Pr(U\leq u\mid V=v)$
for the associated conditional distribution function.

For the Gaussian copula,
\begin{equation}
H_\rho(u,v)
=
\Phi
\left(
\frac{
\Phi^{-1}(u)
-
\rho\,\Phi^{-1}(v)
}{
\sqrt{1-\rho^2}
}
\right),
\notag
\end{equation}
where $\Phi$ is the standard normal distribution function. The corresponding
copula density is
\begin{equation}
c_\rho(u,v)
=
\frac{
\phi_\rho
\left(
\Phi^{-1}(u),
\Phi^{-1}(v)
\right)
}{
\phi\!\left(\Phi^{-1}(u)\right)
\phi\!\left(\Phi^{-1}(v)\right)
},
\notag
\end{equation}
where $\phi$ is the standard normal density and $\phi_\rho$ is the
bivariate standard normal density with correlation $\rho$.

Given a newly observed or imputed value $Y_{i+1}$, define
$V_i
=
P_i(Y_{i+1})$.
The predictive distribution and density are updated recursively as
\begin{align}
P_{i+1}(y)
&=
(1-\alpha_{i+1})P_i(y)
+
\alpha_{i+1}
H_\rho
\left(
P_i(y),V_i
\right),
\label{eq:mpd_cdf_update}
\\
p_{i+1}(y)
&=
\left[
1-\alpha_{i+1}
+
\alpha_{i+1}
c_\rho
\left(
P_i(y),V_i
\right)
\right]
p_i(y),
\label{eq:mpd_density_update}
\end{align}
where $\alpha_{i+1}\in(0,1)$ controls the magnitude of the update.
The sequence $\{\alpha_i\}$ is chosen to decrease with $i$ so that the
predictive update becomes progressively smaller as information accumulates.
We use the step-size sequence of \citet{fong2023martingale},
$\alpha_i
=
\left(2-\frac{1}{i}\right)\frac{1}{i+1}$,
$i\geq1$,
which is $O(i^{-1})$ and decreases to zero. The same sequence is continued during predictive resampling, so the first forward update after observing $y_{1:n}$ uses $\alpha_{n+1}$.
The parameter $\rho$ controls the smoothness of the copula update. Larger
values produce a more concentrated copula kernel around the newly observed
or imputed value, while smaller values produce smoother updates. \cite{hahn2018recursive} report the analysis of choosing $\rho$ and give the suggested $\rho$ values.  As in
\citet{fong2023martingale}, $\rho$ may be selected using predictive performance, for example through a prequential log score. 

\paragraph{Predictive resampling in PIT space.}
If $Y_{i+1}\sim P_i$ and $P_i$ is continuous, then
$V_i=P_i(Y_{i+1})
\sim
\mathrm{Unif}(0,1)$.
Consequently, predictive resampling does not require explicitly generating
$Y_{i+1}$ and subsequently evaluating its predictive probability integral
transform. Instead, we may draw
$V_i\sim\mathrm{Unif}(0,1)$
directly and substitute $V_i$ into
\eqref{eq:mpd_cdf_update}--\eqref{eq:mpd_density_update}. Starting from
$(P_n,p_n)$, one predictive-resampling path iterates these updates for
$i=n,\ldots,N-1$ to obtain a random terminal predictive density $p_N$.

Repeating the procedure independently $M$ times produces
$p_N^{(1)},\ldots,p_N^{(M)}$,
which are the density draws
$f^{(1)},\ldots,f^{(M)}$
used in \eqref{eq:clustering_draws}. Because the resampling paths are
independent, the resulting finite-horizon density draws can be generated
in parallel.

\subsection{Multivariate construction}
\label{app:mpd_multivariate}

For multivariate observations
$Y=(Y^1,\ldots,Y^d)$, applying the univariate update independently to each
coordinate does not in general preserve the joint martingale condition.
We therefore use the coherent multivariate construction of
\citet{fong2023martingale}, which uses ordered conditional probability integral transforms to construct a product copula factor within a single
joint predictive-density update.

For a fixed ordering of the coordinates, define
\begin{align}\notag
u_i^j
&=
P_i
\left(
y^j
\mid
y^{1:j-1}
\right), \qquad
v_i^j
=
P_i
\left(
Y_{i+1}^j
\mid
Y_{i+1}^{1:j-1}
\right),
\qquad
j=1,\ldots,d.
\end{align}
The resulting joint predictive-density update takes the form
\begin{equation}
p_{i+1}(y)
=
\left[
1-\alpha_{i+1}
+
\alpha_{i+1}
\prod_{j=1}^{d}
c_{\rho_j}
\left(
u_i^j,v_i^j
\right)
\right]
p_i(y),
\notag
\end{equation}
where separate copula parameters $\rho_j$ may be used for different
coordinates. 
This construction, rather than independent coordinatewise copula updates,
preserves the required martingale condition; see \citet{fong2023martingale} for the corresponding recursive updates of the conditional distributions.

Under predictive resampling, the conditional PITs satisfy
$v_i^j
\sim
\mathrm{Unif}(0,1)$,
$j=1,\ldots,d$,
so each forward step can be generated using uniform random variables
followed by the deterministic predictive update.

For the clustering procedure, only the terminal predictive-density
evaluations at the observed locations are required:
$\left\{
p_N^{(m)}(y_i)
:
i=1,\ldots,n
\right\}$,
$m=1,\ldots,M$.
These values determine the active sets and hence the level-set graph
clusterings in \eqref{eq:clustering_draws}; no latent mixture allocations
are introduced at any stage.

\paragraph{Higher-dimensional construction.}
Although the experiments considered here are two-dimensional, the multivariate
predictive construction is not restricted to $d=2$. The conditional-copula
factorisation of \citet{fong2023martingale} is defined for
$\mathbf y\in\mathbb R^d$, with tractable updates requiring no numerical
integration; after the initial construction, predictive resampling scales
linearly with $d$. The method has also been evaluated for multivariate density
estimation with dimensions up to $d=32$ in \citet{fong2023martingale}.

\subsection{Modularity of the predictive update}

The Gaussian-copula construction is used here to obtain an efficient,
coherent sequence of predictive-density draws, but the downstream
clustering framework does not depend on this particular choice. Any
predictive-resampling mechanism that supplies evaluable density draws at
the observed locations can be combined with the mapping $\psi_\lambda$.
Examples include score-based predictive constructions
\citep{bariletto2026scalable}, martingale-posterior neural processes
\citep{lee2023martingale}, and transformer-based predictive methods
\citep{ng2025tabmgp}. Replacing the density-generation module therefore
does not alter either the level-set graph construction or the WASABI-SuP
posterior summarisation developed subsequently.

\section{Sample level-set graph constructions}
\label{app:levelset_graphs}

Section~\ref{sec: graph} defines the generic sample level-set clustering map
$\psi_\lambda(f)$ by first identifying the active observations
\[
A_\lambda(f)
=
\{i\in[n]:f(y_i)\geq\lambda\},
\]
and then partitioning them according to the connected components of a graph
$G_\lambda(f)$. Observations outside $A_\lambda(f)$ remain inactive and are
assigned label $0$.
The graph construction provides a discrete surrogate for the connected
components of the continuous level set $L_\lambda(f)$. 
Below we describe the two connectivity rules considered in this paper.

\subsection{\texorpdfstring{$\delta$}{delta}-neighbourhood graph}
\label{app:delta-neighborhood graph}

A common choice, also employed in BALLET \citep{buch2024bayesian}, is the
$\delta$-neighbourhood graph. For a radius $\delta>0$, define
\[
G_{\delta,\lambda}(f)
=
\bigl(A_\lambda(f),E_{\delta,\lambda}(f)\bigr),
\]
where
\begin{equation}
(i,m)\in E_{\delta,\lambda}(f)
\quad\Longleftrightarrow\quad
i,m\in A_\lambda(f)
\quad\text{and}\quad
\|y_i-y_m\|\leq\delta ,
\notag
\end{equation}
with $\|\cdot\|$ denoting the chosen metric, taken to be Euclidean in our
experiments.

The connected components of $G_{\delta,\lambda}(f)$ define the active
clusters. Let $K(f)$ denote their number and
$C_1(f),\ldots,C_{K(f)}(f)$ the corresponding sets of observation indices.
The induced allocation vector is
\begin{equation}
\theta(f)
=
\bigl(c_1(f),\ldots,c_n(f)\bigr)
\in
\{0,1,\ldots,K(f)\}^n,
\notag
\end{equation}
where
\begin{equation}
c_i(f)=
\begin{cases}
h, & \text{if } i\in C_h(f)
\text{ for some } h\in\{1,\ldots,K(f)\},\\[1mm]
0, & \text{if } i\notin A_\lambda(f).
\end{cases}
\notag
\end{equation}
Thus, label $0$ identifies inactive observations. We denote the resulting
sample level-set clustering map by
\begin{equation}
\psi_{\delta,\lambda}(f):=\theta(f).
\label{eq:psi_def}
\end{equation}

The two parameters play distinct roles. The density threshold $\lambda$
determines which observations are active: increasing $\lambda$ focuses on
denser cluster cores and typically leaves more observations inactive,
whereas decreasing $\lambda$ includes more low-density boundary observations.
The radius $\delta$ determines connectivity among active observations:
smaller values favour local connectivity but may fragment elongated
structures, whereas larger values can connect nearby regions through
intermediate observations.
Once $(\lambda,\delta)$ are fixed, $\psi_{\delta,\lambda}$ is deterministic;
hence, in our predictive-resampling framework, uncertainty in the resulting
sub-partition is induced by uncertainty in the density draw $f$.

\subsection{Reciprocal-\texorpdfstring{$\text{k}$}{k}NN graph}
\label{app:rkNN}

The $\delta$-neighbourhood graph uses a single global distance threshold and
therefore inherits the chaining behaviour of single-linkage-type
connectivity: two dense regions may be joined by intermediate active
observations connected through short edges. This can be problematic when
local sampling density varies substantially or when sparse bridge points
occur between otherwise distinct clusters. Figure~\ref{fig:rknn_vs_delta}
illustrates this behaviour and how reciprocal-$k$NN connectivity can suppress
such bridge connections.

\begin{figure}
    \centering
    \includegraphics[width=1\linewidth]{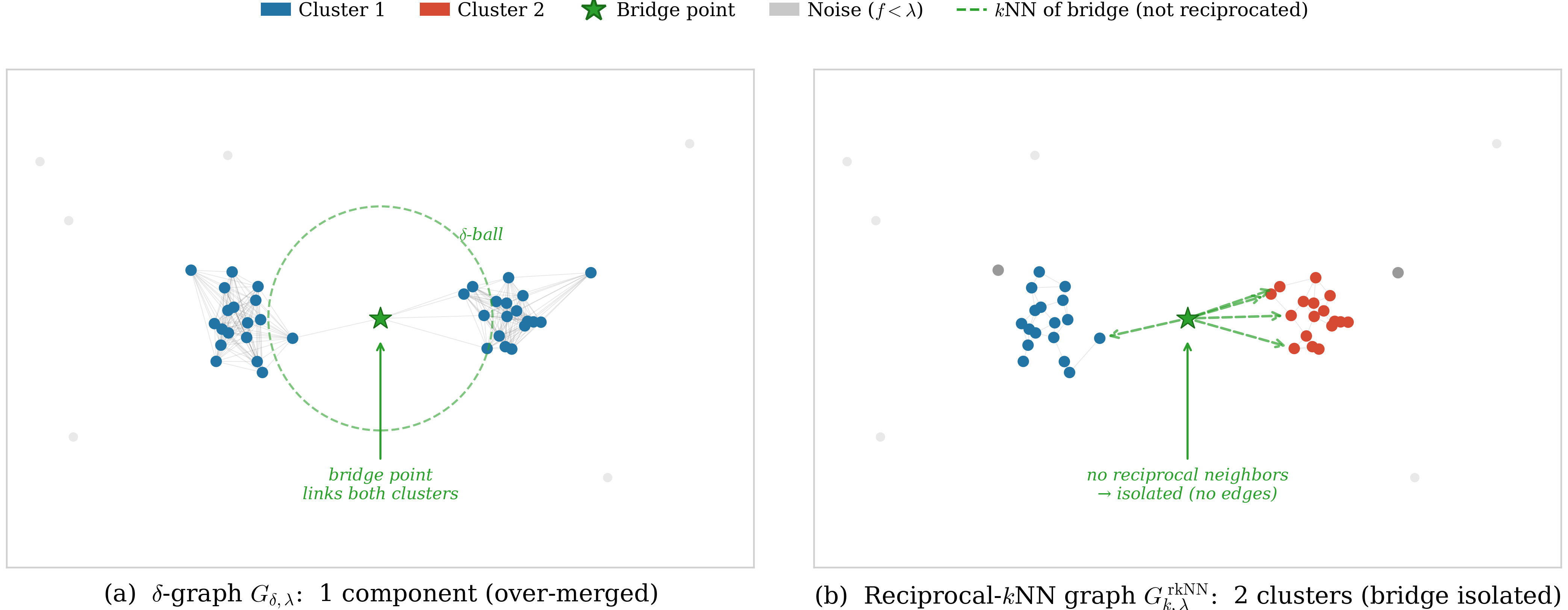}
    \caption{
    Illustration of chaining in the $\delta$-graph and its suppression by
    reciprocal-$k$NN connectivity. Two dense clusters are separated by a
    single bridge point (green star) lying above the density threshold
    $\lambda$; grey points lie below the threshold and are inactive.
    \textbf{(a)} In the $\delta$-graph $G_{\delta,\lambda}(f)$, the bridge is
    connected to active observations in both clusters, merging them into a
    single connected component.
    \textbf{(b)} In the reciprocal-$k$NN graph
    $G^{\mathrm{rkNN}}_{k,\lambda}(f)$ with $k=5$, the bridge selects nearby
    observations, but they do not select it in return. The bridge therefore
    remains isolated and no longer connects the two dense components.
    }
    \label{fig:rknn_vs_delta}
\end{figure}

To mitigate this behaviour, we also consider a locally adaptive
reciprocal-$k$NN graph. The active set $A_\lambda(f)$ remains unchanged, so
the two constructions differ only in how edges are formed among active
observations. 
For a given active set $A_\lambda(f)$, define
$k_f =\min\!\left\{
k,\,|A_\lambda(f)|-1\right\}$.
For each $i\in A_\lambda(f)$, let
$N_{k_f}(i;f)\subseteq A_\lambda(f)\setminus\{i\}$
denote the indices of its $k_f$ nearest active neighbours under the chosen
metric, using a fixed deterministic rule to resolve equal-distance ties. If
$|A_\lambda(f)|\leq1$, we set $N_{k_f}(i;f)=\varnothing$.

We define
\[
G^{\mathrm{rkNN}}_{k,\lambda}(f)
=
\bigl(
A_\lambda(f),
E^{\mathrm{rkNN}}_{k,\lambda}(f)
\bigr),
\]
with
\begin{equation}
(i,m)\in E^{\mathrm{rkNN}}_{k,\lambda}(f)
\quad\Longleftrightarrow\quad
m\in N_{k_f}(i;f)
\quad\text{and}\quad
i\in N_{k_f}(m;f).
\notag
\end{equation}

Thus, two active observations are connected only when each is among the
other's $k_f$ nearest active neighbours. As in the $\delta$-neighbourhood
construction, the connected components define the active clusters, while
observations outside $A_\lambda(f)$ are assigned label $0$. We denote the
resulting clustering map by
$\psi^{\mathrm{rkNN}}_{k,\lambda}(f)$.

The reciprocal condition can suppress spurious bridge connections. A point
in a sparse region may regard a nearby dense-cluster point as one of its
$k$ nearest neighbours, while the dense-cluster point need not select the
sparse point in return because it has closer neighbours within its own local
region. Hence, the reciprocal-$k$NN construction adapts connectivity to local
sample geometry and can reduce over-merging caused by chaining. It therefore
defines a distinct level-set graph surrogate from
$\psi_{\delta,\lambda}$, while preserving the same active-set definition and
sub-partition representation.

\section{Existing partition and sub-partition losses}
\label{app:loss function}

Throughout this section, let
$\theta=(c_1,\ldots,c_n)$ and
$\hat\theta=(\hat c_1,\ldots,\hat c_n)$
denote two allocations of the observations in $[n]$.
For ordinary partition losses, all observations are active and the
allocations induce full partitions
$\rho=\{C_1,\ldots,C_K\}$ and
$\hat\rho=\{\hat C_1,\ldots,\hat C_{\hat K}\}$.
For sub-partitions, label $0$ is reserved for inactive observations.

\paragraph{Binder loss.}
Binder's loss \citep{binder1978bayesian} is the pairwise loss
\begin{equation}
B(\theta,\hat\theta)
=
\sum_{1\leq i<j\leq n}
\left[
a\,\mathbb I\{c_i=c_j,\ \hat c_i\neq\hat c_j\}
+
b\,\mathbb I\{c_i\neq c_j,\ \hat c_i=\hat c_j\}
\right],
\notag
\end{equation}
where $a,b>0$ penalise splitting and merging disagreements,
respectively.
Equal penalties are common, although Binder-based summaries may favour
over-fragmented partitions; unequal penalties can be used to control
this split--merge trade-off \citep{dahl2022search,wade2018bayesian}.
Because the loss is pairwise additive, its posterior expected loss can
be evaluated from the posterior similarity matrix.

\paragraph{Variation of information.}
The variation of information (VI) \citep{meilua2007comparing} between
the full partitions $\rho$ and $\hat\rho$ is
\begin{equation}
\mathrm{VI}(\rho,\hat\rho)
=
H(\rho)+H(\hat\rho)-2I(\rho,\hat\rho),
\label{eq:vi_appendix}
\end{equation}
where entropy $H$ and mutual information $I$  are computed as
\begin{equation}\notag
H(\rho)
=
-\sum_{k=1}^{K}
\frac{|C_k|}{n}
\log_2\frac{|C_k|}{n}, \qquad
I(\rho,\hat\rho)
=
\sum_{k=1}^{K}
\sum_{j=1}^{\hat K}
\frac{|C_k\cap\hat C_j|}{n}
\log_2
\frac{n|C_k\cap\hat C_j|}
{|C_k|\,|\hat C_j|},
\end{equation}
with zero-count terms interpreted as zero.
VI is invariant to permutations of cluster labels and is a metric on
the space of full partitions, with
$0\leq \mathrm{VI}(\rho,\hat\rho)\leq\log_2 n$.

\paragraph{Inactive–active Binder loss for sub-partitions.} \cite{buch2024bayesian} proposed the \emph{inactive–active Binder} (BI$_{\mathrm{IA}}$) loss, which augments Binder’s loss with explicit penalties for mismatches between inactive and active labels.  
For sub-partitions, define the active sets
$A=\{i:c_i\neq0\}$,
$\hat A=\{i:\hat c_i\neq0\}$,
and corresponding inactive sets
$I=[n]\setminus A$ and
$\hat I=[n]\setminus\hat A$.
The BI$_{\mathrm{IA}}$ loss is
\begin{align}
\mathrm{BI}_{\mathrm{IA}}(\theta,\hat\theta)
=&
(n-1)
\left(
m_{ai}|A\cap\hat I|
+ m_{ia}|I\cap\hat A|
\right)\notag\\
&+
\sum_{\substack{1\leq i<j\leq n\\ i,j\in A\cap\hat A}}
\left[
a\,\mathbb I\{c_i=c_j,\ \hat c_i\neq\hat c_j\}
+
b\,\mathbb I\{c_i\neq c_j,\ \hat c_i=\hat c_j\}
\right].
\label{eq:ia-binder}
\end{align}
The pairwise Binder term is evaluated only for observations active in both sub-partitions, while the first term explicitly penalises disagreement in active status. Under suitable parameter choices, including $a=b$ and $m_{ai}=m_{ia}\geq a/2$, BI$_{\mathrm{IA}}$ satisfies metric properties on the space of sub-partitions \citep{buch2024bayesian}.

\section{IA-VI loss:  generalised form, properties and proofs}\label{app:IA-VI loss}
The symmetric VI$_{\mathrm{IA}}$ loss in \eqref{eq:ia_vi_lin} is a special case of
the more general asymmetric form of VI$_{IA}$ loss with $m_{ai}>0$ and $m_{ia}>0$ controlling the penalties for active-to-inactive and inactive-to-active mismatches, respectively.
For $n\geq2$, define
\begin{align}
\mathrm{VI}_{\mathrm{IA}}(\theta,\hat\theta)
&=
\frac{\log_2 n}{n}
\left(
m_{ai}|A\cap\hat I|
+
m_{ia}|I\cap\hat A|
\right) +
\mathrm{VI}(\rho_B,\hat\rho_B),
\label{eq:ia_vi_lin general}
\end{align}
The first term measures disagreement about which observations participate in the clustering, while the second compares the clustering structure conditional on being active in both sub-partitions. The definition in \eqref{eq:ia_vi_lin} is recovered by
setting $m_{ai}=m_{ia}=\tau$.

\paragraph{Explicit expressions for the restricted VI term}
For $|B|>0$, write the non-empty clusters of the restricted partitions on $B$ as $\rho_B=\{D_1,\ldots,D_{K_B}\}$ and $\hat\rho_B =\{\hat D_1,\ldots,\hat D_{\hat K_B}\}$. Then $\mathrm{VI}(\rho_B,\hat\rho_B)$ is computed using  standard VI in \eqref{eq:vi_appendix}, applied to the restricted partitions on $B$.
\begin{align}
\label{eq:ia-vi-expanded}
\mathrm{VI}(\rho_B,\hat\rho_B)
&=
H(\rho_B)+H(\hat\rho_B)-2I(\rho_B,\hat\rho_B) \\\notag
&=
-\sum_{k=1}^{K_B}
\frac{|D_k|}{|B|}
\log_2\!\left(\frac{|D_k|}{|B|}\right)
-\sum_{j=1}^{\hat K_B}
\frac{|\hat D_j|}{|B|}
\log_2\!\left(\frac{|\hat D_j|}{|B|}\right)
-2\sum_{k=1}^{K_B}\sum_{j=1}^{\hat K_B}
\frac{|D_k\cap \hat D_j|}{|B|}
\log_2\!\left(
\frac{|D_k\cap \hat D_j|\,|B|}
{|D_k|\,|\hat D_j|}
\right).
\end{align}
Equivalently, after combining the entropy and mutual-information terms, we may write
\begin{equation}\notag
\mathrm{VI}(\rho_B,\hat\rho_B)
=
-\sum_{k=1}^{K_B}\sum_{j=1}^{\hat K_B}
\frac{|D_k\cap \hat D_j|}{|B|}
\log_2\!\left(
\frac{|D_k\cap \hat D_j|^2}{|D_k|\,|\hat D_j|}
\right),
\end{equation}
with the usual convention that the summand is zero whenever $|D_k\cap \hat D_j|=0$.
When $B=\varnothing$, we set $\mathrm{VI}(\rho_B,\hat\rho_B)=0$.

\subsection{Proof of the basic VI$_{\mathrm{IA}}$ properties}
\label{app:iavi_properties}
\begin{proof}[Proof of Proposition~\ref{prop:ia_vi_lin_basic}]
Non-negativity is immediate from \eqref{eq:ia_vi_lin}, since $\tau>0$,
$|A\cap \hat{I}|, |I\cap \hat{A}|\ge 0$, $\log_2(n)>0$, and the ordinary VI is non-negative.

Symmetry follows because
$|A\cap\hat I|+|I\cap\hat A|$
is symmetric in $A$ and $\hat A$, and ordinary VI is symmetric.
Therefore
$\mathrm{VI}_{\mathrm{IA}}(\theta,\hat\theta)
=
\mathrm{VI}_{\mathrm{IA}}(\hat\theta,\theta)$.

Suppose $\mathrm{VI}_{\mathrm{IA}}(\theta,\hat\theta)=0$.
Both terms in \eqref{eq:ia_vi_lin} are non-negative, so each term must vanish separately.
Hence $\tau|A\cap \hat{I}|+\tau|I\cap \hat{A}|=0$.
Since $\tau>0$, this implies
$|A\cap \hat{I}|=0$,
$|I\cap \hat{A}|=0$.
Therefore $A\subseteq \hat{A}$ and $\hat{A}\subseteq A$, so $A=\hat{A}$, and consequently $I=\hat{I}$.
Thus  $B=A\cap \hat{A}=A$, and 
$\mathrm{VI}(\rho_B,\hat\rho_B)=0$.
Ordinary VI vanishes if and only if the two partitions coincide up to relabelling of cluster
labels. Hence the restrictions of $\theta$ and $\hat\theta$ to the active set induce the same
partition up to relabelling of the positive labels. Since the inactive sets also coincide, it
follows that $\theta=\hat\theta$ up to relabelling of the positive labels.
Conversely, if $\theta=\hat\theta$ up to relabelling of the positive labels, then the zero
entries coincide, so $A=\hat{A}$ and $I=\hat{I}$. Hence
$|A\cap \hat{I}|=|I\cap \hat{A}|=0$,
and the support penalty vanishes. Moreover, the induced partitions on
$B=A=\hat{A}$ are identical up to relabelling, so
$\mathrm{VI}(\rho_B,\hat\rho_B)=0$.
Thus $\mathrm{VI}_{\mathrm{IA}}(\theta,\hat\theta)=0$.

Next, relabelling the positive labels does not change which entries are zero and which are
non-zero, so it leaves $A$, $\hat{A}$, $I$, $\hat{I}$, and therefore the support mismatch counts,
unchanged. On $B=A\cap \hat{A}$, such relabellings do not alter the induced set partition,
only the names of the cluster labels. Ordinary VI is invariant under relabelling of cluster
labels. Hence $\mathrm{VI}_{\mathrm{IA}}$ is invariant under relabelling of positive labels.

Further, if $A=\hat A$, then $I=\hat I$ and hence the activity-mismatch term
vanishes. Moreover, $B=A\cap\hat A=A$, giving
\[
\mathrm{VI}_{\mathrm{IA}}(\theta,\hat\theta)
=
\mathrm{VI}(\rho_A,\hat\rho_A),
\]
which proves \eqref{eq:iavi_reduces_vi}.

Finally, since
$|A\cap\hat I|+|I\cap\hat A|\le n$,
the activity-mismatch term is bounded above by
$\tau\log_2 n$. For $B\neq\varnothing$,
\[
\mathrm{VI}(\rho_B,\hat\rho_B)
\le \log_2|B|
\le \log_2 n,
\]
while for $B=\varnothing$ the VI term is zero by convention.
Consequently,
\[
0
\le
\mathrm{VI}_{\mathrm{IA}}(\theta,\hat\theta)
\le
(\tau+1)\log_2 n,
\]
which proves \eqref{eq:iavi_bound}.

\paragraph{Asymmetric extension and tighter upper bound.}
For the more general asymmetric loss in \eqref{eq:ia_vi_lin general}, a tighter upper bound can be obtained by optimising over the common active-set size $|B|$.

Let $m=\max(m_{ai},m_{ia})$.
First suppose $|B|>0$. 
Ordinary VI satisfies
$
\mathrm{VI}(\rho_B,\hat\rho_B)\le \log_2|B|, 
$
with this upper bound achieved when restricting to $B$, all points are all singletons in one partition and all together in one large cluster in the other partition. 
Since the number of points that are  not active in at least one of the partitions is equal to $n - |B|$, we have that  $|A\cap\hat I|+|I\cap\hat A|
\leq n-|B|$. It follows that
\begin{align*}
m_{ai}|A\cap\hat I|+m_{ia}|I\cap\hat A|
&\leq
m\bigl(|A\cap\hat I|+|I\cap\hat A|\bigr)
\leq
m(n-|B|),
\end{align*}
This bound is achieved when all points outside $B$ are active in one partition and inactive in the other, in the direction associated with
the larger of $m_{ai}$ and $m_{ia}$. If $m_{ai}=m_{ia}$, either direction gives the same bound.
Combining these bounds proves 
\begin{align}\label{eq:bound deriv}
    \mathrm{VI}_{\mathrm{IA}}(\theta,\hat\theta) \leq \begin{cases} 
    m\left(1-\frac{|B|}{n}\right)\log_2 n
     + \log_2(|B|), & \text{ if } |B| >0\\
    m\log_2 n, & \text{ if } |B|=0
    \end{cases}.
\end{align}
Defining 
$
p=\frac{|B|}{n},
p\in\left\{\frac1n,\frac2n,\ldots,1\right\},
$
and relaxing the discrete domain to $p\in[1/n,1]$, 
the first case in \eqref{eq:bound deriv} can be written as:
\begin{align*}
     f(p) = m(1- p)\log_2 n + \log_2 p + \log_2 n.
\end{align*}
Differentiating with respect to $p$ yields
\begin{align*}
    f'(p) &= -m\log_2 n + \frac{1}{p\ln2},\qquad
    f''(p) = -\frac{1}{p^2 \ln 2}<0.
\end{align*}
Thus, the function is strictly concave. 
The maximiser over $[1/n,1]$ is
\begin{align*}
    p^*= \begin{cases}
    \frac{1}{n} & \text{ if } m \geq  \frac{n}{\ln(n)} \\
        \frac{1}{m\ln{n}} & \text{ if } \frac{1}{\ln(n)} < m < \frac{n}{\ln(n)}\\
        1 & \text{ if }  m \leq \frac{1}{\ln(n)} \\
    \end{cases},
\end{align*}
Note also that for $1/\ln(n) < m < n/\ln(n)$, comparing $f(p^*)$ with the $|B|=0$ bound gives
\begin{align*}
    f(p^*) = \frac{m \ln(n)}{\ln(2)} + \frac{\ln(n)- 1 - \ln(m) - \ln(\ln(n))}{\ln(2)} < m \ln(n)/\ln(2),
\end{align*}
that is, when
    $\frac{n}{e \ln(n)} < m$.
Hence, for $n\geq3$,
\begin{align}\label{eq:iavi_upper_bound}
   \mathrm{VI}_{\mathrm{IA}}(\theta,\hat\theta) \leq  \begin{cases}
  m \log_2(n) & \text{ if } m \geq  \frac{n}{e\ln(n)} \\
         (m+1) \log_2(n)  - \log_2(e) -\log_2(m) - \log_2(\ln(n)) & \text{ if } \frac{1}{\ln(n)} < m < \frac{n}{e\ln(n)}\\
       \log_2(n) & \text{ if }  m \leq \frac{1}{\ln(n)} \\
    \end{cases}.
\end{align}
For $n=2$, the corresponding bound simplifies to
\[
\mathrm{VI}_{\mathrm{IA}}(\theta,\hat\theta)
\leq
\max\{1,m\}\log_2 n.
\]
\end{proof}
It is also interesting to examine the sub-partitions that achieve equality with the upper bound, as these characterize the set of sub-partitions that incur the largest loss. For very large $m \geq n/(e\ln(n))$, the upper bound is attained with $|B|=0$, we have the following cases:
\begin{itemize}
    \item if $m_{ia}=m_{ai}$, the most extreme pairs of sub-partitions are the set of $(\theta, \hat\theta )$ with no overlap in their inactive and active sets, i.e. $|A \cap \hat{I}| + |\hat{A} \cap I| = n$;
    \item  if $m_{ia}< m_{ai}$, the most extreme pairs are the $(\theta, \hat\theta )$ with $\hat\theta= (0,\ldots, 0)$ being the sub-partition of all inactive points and $\theta$ being any partition with all points active; 
    \item  if $m_{ia}>m_{ai}$, we have the reverse of the above: the most extreme pairs are the $(\theta, \hat\theta )$ with $\theta= (0,\ldots, 0)$ being the sub-partition of all inactive points and $\hat \theta$ being any partition with all points active.
\end{itemize} 
 For very small $m \leq 1/\ln(n)$, the emphasis is on the VI in the active set and the most extreme pair is the partition of the all-singletons clusters and the partition of one large cluster, with all points active. 
 
In most cases, $m$ will be between these values, and the most extreme pairs of sub-partitions will have $n/(m\ln(n))$ active points in common, if this quantity is an integer. Among these active points, one of the partitions will create all singletons and the other will contain one cluster, while among the other $n-n/(m\ln(n))$ points, 
every point is active in exactly one of the two sub-partitions. If $m_{ai}>m_{ia}$, these observations are active in $\theta$ and inactive in $\hat\theta$; if $m_{ia}>m_{ai}$, the reverse holds; and if $m_{ai}=m_{ia}$, either mismatch direction gives the same loss.

If $n/(m\ln n)$ is not an integer, the continuous optimum cannot be
attained exactly because $|B|$ must be integer-valued. The best
positive common active-set size is therefore one of the two neighbouring
integers; its loss should also be compared with the $|B|=0$ case,
which has loss $m\log_2 n$, to obtain the exact discrete maximiser.

\subsection{Lattice geometric and further  comparative properties of VI$_{\mathrm{IA}}$ and BI$_{\mathrm{IA}}$}\label{app:comparative_properties}
\paragraph{The lattice of sub-partitions.}\label{app:lattice}
Let \([n]=\{1,\ldots,n\}\). A sub-partition of $[n]$ is a collection
$\rho=\{C_1,\ldots,C_K\}$,
where \(K\geq 0\), and the sets \(C_1,\ldots,C_K\) are non-empty and pairwise disjoint
subsets of \([n]\).  We refer to \(C_1,\ldots,C_K\) as the blocks or active clusters of \(\rho\). The active set \(A(\rho)\) records the observations included in the clustering  \(\rho\),
$A(\rho):=\bigcup_{h=1}^K C_h$,
and the inactive set \(I(\rho)\) records the observations that are treated as noise,
$I(\rho):=[n]\setminus A(\rho)$.
\(K=0\) gives the empty sub-partition \(\varnothing\), in which all observations are inactive 
\begin{equation}\label{eq:empty}
  \varnothing \quad\text{with}\quad  A(\varnothing)=\varnothing, \qquad
I(\varnothing)=[n].
\end{equation}
We write $1$ for the
one-cluster partition and $0$ for the all-singletons partition when all observations are active
\begin{equation}\label{eq:1 and 0}
    1:=\{[n]\},
\qquad
0:=\bigl\{\{1\},\ldots,\{n\}\bigr\}.
\end{equation}
More generally,
on a fixed active set \(A\subseteq[n]\), we write
\begin{equation}\label{eq:1a and 0a}
    1_A:=\{A\},
\qquad
0_A:=\bigl\{\{i\}:i\in A\bigr\}.
\end{equation}
Notice that \(0\) is the bottom element only among fully active partitions; in the full
sub-partition lattice, the bottom element is \(\varnothing\).

We equip the space of sub-partitions with the refinement order. In particular, we order sub-partitions by declaring that a sub-partition is below another one if it is
more refined and has no more active observations. Thus, moving upward in the order
corresponds to making the sub-partition less specific: clusters may be merged and
previously inactive observations may be added to the active set.
For two sub-partitions \(\rho\) and \(\hat\rho\), we write \(\rho\preceq\hat\rho\) if every block of \(\rho\) is contained in a block of
\(\hat\rho\); that is, for every \(h\in\{1,\ldots,K\}\),  there exists
\(j\in\{1,\ldots,\hat K\}\) such that
$C_h\subseteq \hat C_j$. This condition implies 
$A(\rho)\subseteq A(\hat\rho)$, 
and ensures that observations clustered together under $\rho$ remain
clustered together under $\hat\rho$. Thus, moving upward in the order
corresponds to merging existing blocks and/or activating previously
inactive observations, without splitting any existing block.

This partial order makes the space of sub-partitions into a lattice. 
For two sub-partitions
\(\rho\) and \(\hat\rho\), the meet \(\rho\wedge\hat\rho\) is the greatest lower bound. It is
given by all non-empty intersections of their blocks:
\[
\rho\wedge\hat\rho
=
\bigl\{
C_h\cap \hat C_j:
h=1,\ldots,K,\ 
j=1,\ldots,\hat K,\ 
C_h\cap \hat C_j\neq\varnothing
\bigr\}.
\]
Consequently,
$A(\rho\wedge\hat\rho)=A(\rho)\cap A(\hat\rho)$. The meet is therefore the coarsest sub-partition that is below both \(\rho\) and \(\hat\rho\):
it keeps only observations that are active in both sub-partitions, and refines their active
clusters by taking all non-empty intersections.

The join \(\rho\vee\hat\rho\) is the least upper bound. Intuitively, it is the finest sub-partition that is above both \(\rho\) and
\(\hat\rho\): it activates all observations that are active in either sub-partition, and
merges any observations that are connected through the grouping structure of either
sub-partition.
Specifically, consider the graph with vertex set $A(\rho)\cup A(\hat\rho)$ which includes all observations that are active in at least one of the two sub-partitions as vertices.
Two vertices are connected whenever they belong to the same block of
\(\rho\) or to the same block of \(\hat\rho\). The connected components of this graph form
\(\rho\vee\hat\rho\). Hence, 
$A(\rho\vee\hat\rho)=A(\rho)\cup A(\hat\rho)$.

The cover relation gives the edges of the Hasse diagram. We write
\(\rho\prec\hat\rho\) if
$\rho\preceq\hat\rho, \rho\neq\hat\rho$,
and there is no sub-partition \(\rho'\) such that
$\rho\preceq\rho'\preceq\hat\rho,$
$\rho'\neq\rho$, $ \rho'\neq\hat\rho$. Equivalently, \(\hat\rho\) is obtained from \(\rho\) by exactly one elementary upward move:
either two blocks of \(\rho\) are merged, or one inactive observation is added as a new
singleton block.

The Hasse diagram in Figure~\ref{fig:subpartition_lattice}
illustrates this order for a small example. For instance,
$\bigl\{\{1\},\{2\}\bigr\}
\prec
\bigl\{\{1,2\}\bigr\}$
is a cover by merging two blocks, while
$\bigl\{\{1,2\}\bigr\}
\prec
\bigl\{\{1,2\},\{3\}\bigr\}$
is a cover by activating the previously inactive observation \(3\) as a singleton block. Consider
$\rho=\bigl\{\{1,2\},\{3\}\bigr\},
\hat\rho=\bigl\{\{2,3\},\{4\}\bigr\}$.
Then, the meet is 
$\rho\wedge\hat\rho
=
\bigl\{\{2\},\{3\}\bigr\}$.
The join is 
$\rho\vee\hat\rho
=
\bigl\{\{1,2,3\},\{4\}\bigr\}$.

\begin{figure}
    \centering
\includegraphics[width=0.8\linewidth]{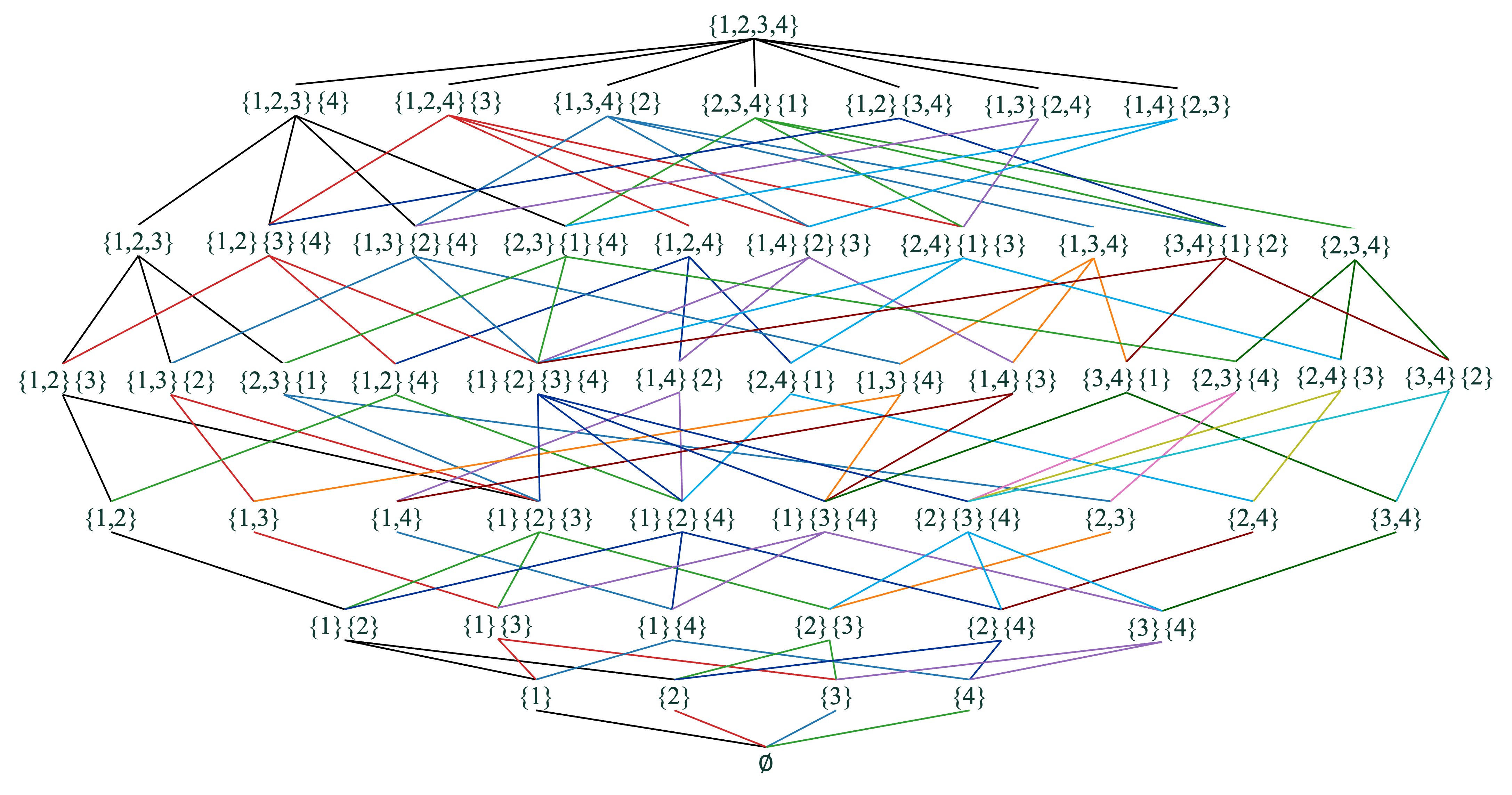}
    \caption{
    Hasse diagram of the sub-partition lattice. Moving upward corresponds either to merging two active blocks or to activating one previously inactive observation as a singleton active block. The bottom element is the empty sub-partition $\varnothing$.
    }
    \label{fig:subpartition_lattice}
\end{figure}

\begin{property}[Meet identity]
\label{prop:ia_vi_meet_identity}
Assume \(m_{ai}=m_{ia}=\tau\) for VI$_{\mathrm{IA}}$ loss in \eqref{eq:ia_vi_lin general}. Let \(\rho\) and \(\hat\rho\) be two sub-partitions, and let
$\rho\wedge\hat\rho$
be their meet in the sub-partition lattice. Then
\begin{equation}\notag
\mathrm{VI}_{\mathrm{IA}}(\rho,\hat\rho)
=
\mathrm{VI}_{\mathrm{IA}}(\rho,\rho\wedge\hat\rho)
+
\mathrm{VI}_{\mathrm{IA}}(\hat\rho,\rho\wedge\hat\rho).
\end{equation}
\end{property}

\begin{example}
Consider $n=4$, with
$\rho=\bigl\{\{1,2\},\{3\}\bigr\}$ and
$\hat\rho=\bigl\{\{2,3\},\{4\}\bigr\}$.
In the Hasse diagram in Figure~\ref{fig:subpartition_lattice}, the meet is
obtained by moving downward from both $\rho$ and $\hat\rho$ and taking their
highest common node,
$\mu=\rho\wedge\hat\rho
=
\bigl\{\{2\},\{3\}\bigr\}$.
We can verify that
$\mathrm{VI}_{\mathrm{IA}}(\rho,\hat\rho)=\tau+1$,
$\mathrm{VI}_{\mathrm{IA}}(\rho,\mu)=\frac{\tau}{2}$,
$\mathrm{VI}_{\mathrm{IA}}(\hat\rho,\mu)=\frac{\tau}{2}+1$.
Hence,
$\mathrm{VI}_{\mathrm{IA}}(\rho,\mu)
+
\mathrm{VI}_{\mathrm{IA}}(\hat\rho,\mu)
=
\mathrm{VI}_{\mathrm{IA}}(\rho,\hat\rho)$.
\end{example}

\begin{proof}[Proof of Property~\ref{prop:ia_vi_meet_identity}]
Let $A=A(\rho), \hat A=A(\hat\rho), B=A\cap \hat A$,
and write $\lambda_n:=\log_2 n/n$.
Let
$
\mu=\rho\wedge\hat\rho.
$
The active set of the meet is
\[
A(\mu)=A(\rho\wedge\hat\rho)=A\cap \hat A=B.
\]
Therefore, for the pair $(\rho,\hat\rho)$, the active--inactive part of
the VI$_{\mathrm{IA}}$ loss is
$
\lambda_n\tau
\bigl(|A\setminus \hat A|+|\hat A\setminus A|\bigr).
$

For the pair $(\rho,\mu)$, the active sets are $A$ and $B$, with
$B\subseteq A$. Hence
$
A\setminus B=A\setminus \hat A, 
 \,\,
B\setminus A=\varnothing,
$
so the active--inactive contribution to
$\mathrm{VI}_{\mathrm{IA}}(\rho,\mu)$ is
$
\lambda_n\tau |A\setminus B|
=
\lambda_n\tau |A\setminus \hat A|.
$

Similarly, for the pair $(\hat\rho,\mu)$,
$
\hat A\setminus B=\hat A\setminus A,
\,\,
B\setminus \hat A=\varnothing,
$
and hence 
$
\lambda_n\tau |\hat A\setminus B|
=
\lambda_n\tau |\hat A\setminus A|.
$

Thus,
\[
\lambda_n\tau
\bigl(|A\setminus \hat A|+|\hat A\setminus A|\bigr)
=
\lambda_n\tau|A\setminus B|
+
\lambda_n\tau|\hat A\setminus B|.
\]

It remains to decompose the ordinary VI term on the common active set.
The restrictions of $\rho$ and $\hat\rho$ to $B$ are denoted by
$\rho_B$ and $\hat\rho_B$. Since $\mu$ is their meet, the partition
induced by $\mu$ on $B$, denoted by $\mu_B$, satisfies
$
\mu_B=\rho_B\wedge\hat\rho_B.
$
Ordinary VI satisfies the meet identity on the partition lattice, so
\[
\mathrm{VI}(\rho_B,\hat\rho_B)
=
\mathrm{VI}(\rho_B,\mu_B)
+
\mathrm{VI}(\hat\rho_B,\mu_B).
\]
Combining the decompositions of the active--inactive and VI terms gives
\[
\mathrm{VI}_{\mathrm{IA}}(\rho,\hat\rho)
=
\mathrm{VI}_{\mathrm{IA}}(\rho,\mu)
+
\mathrm{VI}_{\mathrm{IA}}(\hat\rho,\mu).
\]

When $B=\varnothing$, all restricted VI terms are zero by convention,
so the identity follows directly from the active--inactive
decomposition above.
\end{proof}

\begin{property}[Equal-cluster comparisons]
\label{prop:ia_vi_equal_clusters}
Let $A\subseteq[n]$ be a non-empty active set with $n_A=|A|$.
Assume that $k$ divides $n_A$, and let $c_{k,A}$ denote the
sub-partition with active set $A$ in which the $n_A$ active
observations are divided into $k$ equal-sized clusters. The remaining
observations in $[n]\setminus A$ are inactive. The sub-partitions
$\varnothing$, $1_A$, and $0_A$ are defined in
\eqref{eq:empty} and \eqref{eq:1a and 0a}.
Then the following properties hold.

\begin{enumerate}
\item[(i)] \textit{Comparison with the active-set extremes.}

Under VI$_{\mathrm{IA}}$ defined in \eqref{eq:ia_vi_lin},
\begin{equation}\notag
\mathrm{VI}_{\mathrm{IA}}(1_A,c_{k,A})
\geq
\mathrm{VI}_{\mathrm{IA}}(0_A,c_{k,A})
\qquad\text{for}\qquad
k\geq\sqrt{n_A}.
\end{equation}
Under BI$_{\mathrm{IA}}$ defined in \eqref{eq:ia-binder} with $a=b$,
\begin{equation}\notag
\mathrm{BI}_{\mathrm{IA}}(1_A,c_{k,A})
>
\mathrm{BI}_{\mathrm{IA}}(0_A,c_{k,A})
\qquad\text{for}\qquad
k>1.
\end{equation}

\item[(ii)] \textit{Comparison with the empty sub-partition.}

Under the symmetric VI$_{\mathrm{IA}}$ setting $m_{ai}=m_{ia}=\tau$,
\begin{equation}\notag
\mathrm{VI}_{\mathrm{IA}}(1_A,c_{k,A})
<
\mathrm{VI}_{\mathrm{IA}}(\varnothing,c_{k,A})
\qquad\text{for}\qquad
k<n^{\tau n_A/n}.
\end{equation}
This condition is equivalent to 
$\tau>\frac{n}{n_A}\log_n k$, in which  $\frac{n}{n_A}\log_n k \in [0, \frac{n}{n_A}]$.

Under symmetric BI$_{\mathrm{IA}}$ with $m_{ai}=m_{ia}=\tau$,
\begin{equation}\notag
\mathrm{BI}_{\mathrm{IA}}(1_A,c_{k,A})
\leq
\mathrm{BI}_{\mathrm{IA}}(\varnothing,c_{k,A})
\qquad\text{for}\qquad
k>1,\quad \tau\geq a/2.
\end{equation}
Equality holds if and only if
$\tau=a/2$ and $
k=n_A=n.$
The condition $\tau\geq a/2$ corresponds to the parameter choice
considered by \citet{buch2024bayesian}.
\end{enumerate}
\end{property}

\begin{remark}
Property~\ref{prop:ia_vi_equal_clusters}(i) illustrates the difference
between VI$_{\mathrm{IA}}$ and BI$_{\mathrm{IA}}$ on a fixed active set $A$. The behaviour of VI$_{\mathrm{IA}}$ is intuitive: small \(k\) gives fewer, larger clusters,
which are naturally closer to the one-cluster extreme \(1_A\), while large \(k\) gives
many smaller clusters, which are closer to the all-singletons extreme \(0_A\). BI$_{\mathrm{IA}}$,
in contrast, disproportionately penalises merging and therefore systematically favours
the all-singletons configuration under equal penalties, regardless of \(k\). For example, \(c_{2,A}\) is closer
to \(1_A\) than to \(0_A\) under VI$_{\mathrm{IA}}$ whenever \(n_A>4\), whereas equal-penalty
BI$_{\mathrm{IA}}$ places \(c_{2,A}\) closer to \(0_A\).

Property~\ref{prop:ia_vi_equal_clusters}(ii) illustrates the role of
the inactive--active penalty by comparing the active $k$-cluster
sub-partition $c_{k,A}$ with the empty sub-partition $\varnothing$.
Under symmetric BI$_{\mathrm{IA}}$, choosing $\tau>a/2$ makes $c_{k,A}$ closer
to $1_A$ than to $\varnothing$ for every non-trivial case $k>1$.
Thus, this choice strongly discourages all-inactive summaries,
regardless of the number of active clusters. By contrast, under VI$_{\mathrm{IA}}$
the comparison depends on $k$, $\tau$, and the active-set size $n_A$,
allowing $\varnothing$ to remain competitive when the inactive--active
penalty is sufficiently small.
\end{remark}

\begin{example}
Let $n_A=6$, $A=\{1,\ldots,6\}$, and
$
c_{2,A}
=
\{\{1,2,3\},\{4,5,6\}\}.
$
Then
$
\mathrm{VI}_{\mathrm{IA}}(1_A,c_{2,A})
=
1
<
\mathrm{VI}_{\mathrm{IA}}(0_A,c_{2,A})
=
\log_2 3
\approx1.58,
$
so $c_{2,A}$ is closer to $1_A$ than to $0_A$ under VI$_{\mathrm{IA}}$.
For BI$_{\mathrm{IA}}$ with $a=b$,
$
\mathrm{BI}_{\mathrm{IA}}(1_A,c_{2,A})
=
9a
>
\mathrm{BI}_{\mathrm{IA}}(0_A,c_{2,A})
=
6a,
$
so $c_{2,A}$ is instead closer to $0_A$.
Thus it shows BI$_{\mathrm{IA}}$'s split and VI$_{\mathrm{IA}}$'s merge behaviours.
\end{example}

\begin{proof}[Proof of Property~\ref{prop:ia_vi_equal_clusters}]\label{app: Proof equal vs full one }
For part (i), the sub-partitions $1_A$, $0_A$, and $c_{k,A}$ all have
the same active set $A$. Hence the active--inactive term in VI$_{\mathrm{IA}}$
vanishes, and VI$_{\mathrm{IA}}$ reduces to ordinary VI on $A$. Therefore,
\begin{align}
\mathrm{VI}_{\mathrm{IA}}(1_A,c_{k,A})
&=
\mathrm{VI}(1_A,c_{k,A})
=
\log_2 k,
\label{eq:VI_1_A_c_k}
\\
\mathrm{VI}_{\mathrm{IA}}(0_A,c_{k,A})
&=
\mathrm{VI}(0_A,c_{k,A})
=
\log_2 n_A-\log_2 k.
\label{eq:VI_0_A_c_k}
\end{align}
It follows that
$
\mathrm{VI}_{\mathrm{IA}}(1_A,c_{k,A})
\geq
\mathrm{VI}_{\mathrm{IA}}(0_A,c_{k,A})
$
if and only if
$
\log_2 k
\geq
\log_2 n_A-\log_2 k,
$
or equivalently,
\[
2\log_2 k\geq\log_2 n_A
\quad\Longleftrightarrow\quad
k^2\geq n_A
\quad\Longleftrightarrow\quad
k\geq\sqrt{n_A}.
\]
We next consider the corresponding BI$_{\mathrm{IA}}$ comparison. Since the
active sets agree, only the pairwise Binder term contributes. For
$\mathrm{BI}_{\mathrm{IA}}(1_A,c_{k,A})$, all pairs of active
observations are together under $1_A$, while pairs belonging to
different clusters of $c_{k,A}$ are separated. For
$\mathrm{BI}_{\mathrm{IA}}(0_A,c_{k,A})$, no pair is together under
$0_A$, while pairs within the same cluster of $c_{k,A}$ are together.

Let
$
s=n_A/k
$
denote the common cluster size. Then
\begin{align}
\mathrm{BI}_{\mathrm{IA}}(1_A,c_{k,A})
&=
a\left\{
\binom{n_A}{2}
-
k\binom{s}{2}
\right\}
=
\frac{a n_A^2}{2}
\left(1-\frac{1}{k}\right),
\label{eq:B_1_A_c_k}
\\
\mathrm{BI}_{\mathrm{IA}}(0_A,c_{k,A})
&=
b\,k\binom{s}{2}
=
\frac{b n_A^2}{2}
\left(
\frac{1}{k}-\frac{1}{n_A}
\right).
\label{eq:B_0_A_c_k}
\end{align}
When $a=b$,
$
\mathrm{BI}_{\mathrm{IA}}(1_A,c_{k,A})
<
\mathrm{BI}_{\mathrm{IA}}(0_A,c_{k,A})
$
if and only if
\(1-1/k<1/k-1/n_A\),
which is equivalent to
\(k<2n_A/(n_A+1)\).
Since \(2n_A/(n_A+1)<2\), this condition cannot hold for any integer
\(k>1\). Therefore,
$\mathrm{BI}_{\mathrm{IA}}(1_A,c_{k,A})
>\mathrm{BI}_{\mathrm{IA}}(0_A,c_{k,A})
$ for all $k>1.$

For part (ii), consider first VI$_{\mathrm{IA}}$. The active set of $\varnothing$
is empty, whereas that of $c_{k,A}$ is $A$. Hence their common active
set is empty and the ordinary VI term vanishes. The only contribution
comes from observations that are inactive under $\varnothing$ but
active under $c_{k,A}$. Thus, under $m_{ai}=m_{ia}=\tau$,
\begin{equation}\notag
\mathrm{VI}_{\mathrm{IA}}(\varnothing,c_{k,A})
=
\frac{\log_2 n}{n}\,\tau n_A
=
\frac{n_A}{n}\tau\log_2 n.
\end{equation}
Using \eqref{eq:VI_1_A_c_k},
$
\mathrm{VI}_{\mathrm{IA}}(1_A,c_{k,A})
<
\mathrm{VI}_{\mathrm{IA}}(\varnothing,c_{k,A})
$
if and only if
$
\log_2 k
<
({n_A}/{n})\tau\log_2 n.
$
Exponentiating gives
$
k<n^{\tau n_A/n}.
$
Equivalently,
\[
\tau
>
\frac{n\log_2 k}{n_A\log_2 n}
=
\frac{n}{n_A}\log_n k.
\]

For BI$_{\mathrm{IA}}$,
$
\mathrm{BI}_{\mathrm{IA}}(\varnothing,c_{k,A})
=(n-1)\tau n_A.
$
Using \eqref{eq:B_1_A_c_k},
$
\mathrm{BI}_{\mathrm{IA}}(1_A,c_{k,A})
<
\mathrm{BI}_{\mathrm{IA}}(\varnothing,c_{k,A})
$
if and only if
$
({a n_A^2}/{2})
\left(1-{1}/{k}\right)
<
(n-1)\tau n_A.
$
Since $n_A>0$ and $k>1$, this is equivalent to
\[
\frac{n_A}{n-1}
<
\frac{2\tau}{a}\frac{k}{k-1}.
\]

Suppose first that $\tau>a/2$. Then
$
\frac{2\tau}{a}>1.
$
Since $k>1$, and $k\leq n_A\leq n$, thus
\[
\frac{2\tau}{a}\frac{k}{k-1}
>
\frac{k}{k-1}, \qquad \frac{n_A}{n-1}
\leq
\frac{k}{k-1},
\]
as
$
k(n-1)-n_A(k-1)
=
k(n-n_A)+(n_A-k)\geq0.
$
Hence
\[
\frac{n_A}{n-1}
<
\frac{2\tau}{a}\frac{k}{k-1},
\]
thus
$
\mathrm{BI}_{\mathrm{IA}}(1_A,c_{k,A})
<
\mathrm{BI}_{\mathrm{IA}}(\varnothing,c_{k,A})
$
for every admissible $k>1$.

Now suppose that $\tau=a/2$. The comparison reduces to
\[
\frac{n_A}{n-1}
\leq
\frac{k}{k-1},
\]
or equivalently,
$
k(n-n_A)+(n_A-k)\geq0,
$
which always holds since $k\leq n_A\leq n$. Equality occurs if and
only if both non-negative terms vanish, namely
$n_A=n $ and
$k=n_A.$
Thus, for $\tau\geq a/2$,
$
\mathrm{BI}_{\mathrm{IA}}(1_A,c_{k,A})
\leq
\mathrm{BI}_{\mathrm{IA}}(\varnothing,c_{k,A}),
$
with equality if and only if
$\tau=a/2 $ and
$k=n_A=n.$
\end{proof}

\begin{property}[Empty versus fully active one-cluster comparison]
\label{prop:empty_vs_full_one}
Let $A\subseteq[n]$ with $n_A=|A|>1$, and consider
$0_A$, $\varnothing$, and $1$, as defined in
\eqref{eq:1a and 0a}, \eqref{eq:empty}, and \eqref{eq:1 and 0},
respectively. For each loss, consider the symmetric active--inactive
penalty setting $m_{ai}=m_{ia}=\tau$.
Then, for VI$_{\mathrm{IA}}$,
\begin{equation}\notag
\mathrm{VI}_{\mathrm{IA}}(\varnothing,0_A)
<
\mathrm{VI}_{\mathrm{IA}}(1,0_A)
\end{equation}
\begin{equation}\notag
\text{iff either} \quad n_A \le \frac{n}{2},
\qquad\text{or}\qquad
n_A>\frac{n}{2}
\ \text{and}\
\tau<
\frac{n\log_2 n_A}{(2n_A-n)\log_2 n}.
\end{equation}
In particular, when $A=[n]$ and $\tau=1$,
 $ \mathrm{VI}_{\mathrm{IA}}(\varnothing,0)
=\mathrm{VI}_{\mathrm{IA}}(1,0). $

For BI$_{\mathrm{IA}}$,
\begin{equation}\notag
\mathrm{BI}_{\mathrm{IA}}(\varnothing,0_A)
<
\mathrm{BI}_{\mathrm{IA}}(1,0_A)
\end{equation}
\begin{equation}\notag
\text{iff either} \quad n_A \le \frac{n}{2},
\qquad\text{or}\qquad
n_A>\frac{n}{2}
\ \text{and}\
\tau<
\frac{a n_A(n_A-1)}
{2(n-1)(2n_A-n)} .
\end{equation}
In particular, when $A=[n]$ and $\tau=a/2$,
$
\mathrm{BI}_{\mathrm{IA}}(\varnothing,0)
=
\mathrm{BI}_{\mathrm{IA}}(1,0).
$
\end{property}

\begin{remark}
\label{rem:empty_vs_full_one_interpretation}
For both VI$_{\mathrm{IA}}$ and BI$_{\mathrm{IA}}$, if $n_A\leq n/2$, then
$\varnothing$ is closer to $0_A$ than the fully active one-cluster
sub-partition $1$, independently of the active--inactive penalty.
When $n_A>n/2$, the comparison instead depends on the penalty
parameter $\tau$.

At the fully active extreme $A=[n]$, the two reference
sub-partitions $\varnothing$ and $1$ are equally distant from $0$
when $\tau=1$ for VI$_{\mathrm{IA}}$ and when $\tau=a/2$ for BI$_{\mathrm{IA}}$.
\end{remark}

\begin{example}
Let $n=4$ as in Figure~\ref{fig:subpartition_lattice}, and take
$A=\{1,2\}$, $0_A=\{\{1\},\{2\}\}.$
For VI$_{\mathrm{IA}}$ with $\tau=1$,
$
\mathrm{VI}_{\mathrm{IA}}(\varnothing,0_A)
=\frac{2}{4}\log_2 4 =1,
$
whereas
$
\mathrm{VI}_{\mathrm{IA}}(1,0_A)
= \frac{2}{4}\log_2 4+\log_2 2 =2.
$
For BI$_{\mathrm{IA}}$ with $\tau=a/2$,
$
\mathrm{BI}_{\mathrm{IA}}(\varnothing,0_A)
= 3\cdot\frac{a}{2}\cdot2
= 3a,
$
whereas
$
\mathrm{BI}_{\mathrm{IA}}(1,0_A)
= 3\cdot\frac{a}{2}\cdot2
+ a\binom{2}{2}
= 4a.
$
Thus, in this case $n_A=n/2$, under both losses, the all-inactive summary is preferred over forcing them into a  single active cluster.
\end{example}

\begin{proof}[Proof of Property~\ref{prop:empty_vs_full_one}]
\label{app:Proof empty vs full one}
We first consider VI$_{\mathrm{IA}}$. The sub-partition $0_A$ has active set $A$,
whereas $\varnothing$ has empty active set. Their common active set is
therefore empty, so the ordinary VI term vanishes. Hence
\[
\mathrm{VI}_{\mathrm{IA}}(\varnothing,0_A)
=
\frac{n_A}{n}\tau\log_2 n.
\]

Next consider $1$ and $0_A$. The active set of $1$ is $[n]$, whereas
that of $0_A$ is $A$. Thus the active--inactive mismatch consists of
the $n-n_A$ observations in $[n]\setminus A$. On the common active set
$A$, the corresponding restrictions are $1_A$ and $0_A$. Therefore
\[
\mathrm{VI}_{\mathrm{IA}}(1,0_A)
=
\frac{n-n_A}{n}\tau\log_2 n
+
\mathrm{VI}(1_A,0_A).
\]
Since 
$
\mathrm{VI}(1_A,0_A)=\log_2 n_A,
$
we obtain
$
\mathrm{VI}_{\mathrm{IA}}(1,0_A)
=
(({n-n_A})/{n})\tau\log_2 n
+
\log_2 n_A.
$
Consequently,
$
\mathrm{VI}_{\mathrm{IA}}(\varnothing,0_A)
<
\mathrm{VI}_{\mathrm{IA}}(1,0_A)
$
if and only if
\[
\frac{2n_A-n}{n}\tau\log_2 n
<
\log_2 n_A.
\]
If $n_A\leq n/2$, the left-hand side is non-positive, whereas
$\log_2 n_A>0$ because $n_A>1$, so the inequality always holds.
If $n_A>n/2$, then $2n_A-n>0$, and the inequality is equivalent to
\[
\tau<
\frac{n\log_2 n_A}
     {(2n_A-n)\log_2 n}.
\]
This proves the VI$_{\mathrm{IA}}$ result. When $A=[n]$, the threshold reduces to
$\tau<1$, so equality holds at $\tau=1$.

We now consider BI$_{\mathrm{IA}}$. Since $0_A$ has active set $A$ and
$\varnothing$ has empty active set,
\[
\mathrm{BI}_{\mathrm{IA}}(\varnothing,0_A)
=
(n-1)\tau n_A.
\]

For the pair $(1,0_A)$, the $n-n_A$ observations in $[n]\setminus A$
are active under $1$ but inactive under $0_A$, giving the
active--inactive contribution
$
(n-1)\tau(n-n_A).
$
On the common active set $A$, every pair of observations is together
under $1_A$ but separated under $0_A$, giving the Binder contribution
$
a\binom{n_A}{2}.
$
Hence
\[
\mathrm{BI}_{\mathrm{IA}}(1,0_A)
=
(n-1)\tau(n-n_A)
+
a\binom{n_A}{2}.
\]

Therefore,
$
\mathrm{BI}_{\mathrm{IA}}(\varnothing,0_A)
<
\mathrm{BI}_{\mathrm{IA}}(1,0_A)
$
if and only if
\[
(n-1)\tau(2n_A-n)
<
\frac{a n_A(n_A-1)}{2}.
\]
If $n_A\leq n/2$, the left-hand side is non-positive, whereas the
right-hand side is positive because $n_A>1$, so the inequality always
holds. If $n_A>n/2$, it is equivalent to
\[
\tau<
\frac{a n_A(n_A-1)}
     {2(n-1)(2n_A-n)}.
\]
This proves the BI$_{\mathrm{IA}}$ result. When $A=[n]$, the threshold reduces
to $\tau<a/2$, so equality holds at $\tau=a/2$.
\end{proof}

\begin{property}[Two-cluster comparisons]
\label{prop:two_cluster_fixed_active}
Let $A\subseteq[n]$ be a non-empty active set with $n_A=|A|>1$, and let
$c_{A,r}$ be the sub-partition with active set $A$ whose restriction to $A$
consists of two clusters of sizes $r$ and $n_A-r$, where
$1\leq r\leq n_A-1$. Write
\[
p=\frac{r}{n_A},
\qquad
H_2(p)
=
-p\log_2 p-(1-p)\log_2(1-p).
\]
Let $1_A$, $0_A$, and $\varnothing$ be defined as in
\eqref{eq:1a and 0a} and \eqref{eq:empty}. Then the following
comparisons hold.

\begin{enumerate}
\item[(i)] \emph{Comparison with the active-set extremes.}

Under VI$_{\mathrm{IA}}$,
\begin{equation}\notag
\mathrm{VI}_{\mathrm{IA}}(1_A,c_{A,r})
<
\mathrm{VI}_{\mathrm{IA}}(0_A,c_{A,r})
\quad\Longleftrightarrow\quad
H_2(p)<\frac{1}{2}\log_2 n_A.
\end{equation}
In particular, this inequality holds for every admissible $r$ whenever
$n_A\geq5$.

Under BI$_{\mathrm{IA}}$ with equal pairwise penalties $a=b$,
\begin{equation}\notag
\mathrm{BI}_{\mathrm{IA}}(1_A,c_{A,r})
<
\mathrm{BI}_{\mathrm{IA}}(0_A,c_{A,r})
\quad\Longleftrightarrow\quad
\bigl|p - 1/2\bigr| > 1/{2\sqrt{n_A}}.
\end{equation}
Thus BI$_{\mathrm{IA}}$ favours \(1_A\) only for sufficiently imbalanced two-cluster splits.
In particular, for an equal-size split \(p=1/2\), 
\(c_{A,r}\) is closer to \(0_A\) than to \(1_A\) for every $n_A$.

\item[(ii)] \emph{Comparison with the empty sub-partition.}

Under the symmetric VI$_{\mathrm{IA}}$ setting $m_{ai}=m_{ia}=\tau$,
\begin{equation}\notag
\mathrm{VI}_{\mathrm{IA}}(1_A,c_{A,r})
<
\mathrm{VI}_{\mathrm{IA}}(\varnothing,c_{A,r})
\quad\Longleftrightarrow\quad
\tau>
\frac{nH_2(p)}{n_A\log_2 n}.
\end{equation}
A sufficient condition for this inequality to hold for every admissible
$r$ is
$\tau>\frac{n}{n_A\log_2 n}$.

Under symmetric BI$_{\mathrm{IA}}$ with $m_{ai}=m_{ia}=\tau$,
\begin{equation}\notag
\mathrm{BI}_{\mathrm{IA}}(1_A,c_{A,r})
<
\mathrm{BI}_{\mathrm{IA}}(\varnothing,c_{A,r})
\quad\Longleftrightarrow\quad
\tau>
\frac{a n_A}{n-1}p(1-p).
\end{equation}
A sufficient condition for this inequality to hold for every admissible
$r$ is
$\tau>
\frac{a n_A}{4(n-1)}$.
In particular, for $n>2$, the choice $\tau\geq a/2$ implies
\[
\mathrm{BI}_{\mathrm{IA}}(1_A,c_{A,r})
<
\mathrm{BI}_{\mathrm{IA}}(\varnothing,c_{A,r})
\qquad
\text{for all }1\leq r\leq n_A-1.
\]
\end{enumerate}
\end{property}

\begin{remark}
\label{rem:two_cluster_fixed_active_interpretation}
Property~\ref{prop:two_cluster_fixed_active}(i) reveals a notable contrast between the two losses on two-cluster splits. VI$_{\mathrm{IA}}$ tends to favour merging once the active set $ n_A \geq 5$, which is intuitively the right behaviour, since a two-cluster partition is structurally much closer to a single merged cluster. Equal-penalty BI$_{\mathrm{IA}}$ behaves less reasonably: for two equal-sized clusters ($p = 1/2$), all-singletons $0_A$ is still preferred for every $n_A$. This mirrors the splitting tendency of Binder-type losses documented in \cite{wade2018bayesian} and shows that it persists in cases with noise points.
Property~\ref{prop:two_cluster_fixed_active}(ii) isolates the role of
the active--inactive penalty. For VI$_{\mathrm{IA}}$, the threshold depends on the
binary entropy $H_2(p)$ and therefore on the balance of the two
clusters. For BI$_{\mathrm{IA}}$, when $n>2$, the choice $\tau\geq a/2$ is
sufficient to place every two-cluster sub-partition $c_{A,r}$ closer
to $1_A$ than to $\varnothing$.
\end{remark}

\begin{proof}[Proof of Property~\ref{prop:two_cluster_fixed_active}]
\label{app:proof_two_cluster_fixed_active}
For part (i), the sub-partitions $1_A$, $0_A$, and $c_{A,r}$ all have
the same active set $A$. Hence the active--inactive term in VI$_{\mathrm{IA}}$
vanishes, and VI$_{\mathrm{IA}}$ reduces to ordinary VI on $A$.
The entropy of the two-cluster partition $c_{A,r}$ is
\[
H(c_{A,r})
=
-\frac{r}{n_A}\log_2\frac{r}{n_A}
-\frac{n_A-r}{n_A}\log_2\frac{n_A-r}{n_A}
=
H_2(p).
\]
Since $1_A$ consists of a single cluster,
$
\mathrm{VI}_{\mathrm{IA}}(1_A,c_{A,r})
=
\mathrm{VI}(1_A,c_{A,r})
=
H_2(p).
$
Since $0_A$ is the all-singletons partition,
$
\mathrm{VI}_{\mathrm{IA}}(0_A,c_{A,r})
=
\mathrm{VI}(0_A,c_{A,r})
=
\log_2 n_A-H_2(p).
$
Therefore
$
\mathrm{VI}_{\mathrm{IA}}(1_A,c_{A,r})
<
\mathrm{VI}_{\mathrm{IA}}(0_A,c_{A,r})
$
if and only if
$
H_2(p)<\log_2 n_A-H_2(p),
$
or equivalently,
$
H_2(p)<({1}/{2})\log_2 n_A.
$
Since $H_2(p)\leq1$, this condition holds for every $p$ whenever
$
({1}/{2})\log_2 n_A>1,
$
which is equivalent to $n_A>4$. Hence it holds for all admissible $r$
whenever $n_A\geq5$.

We next consider BI$_{\mathrm{IA}}$. For
$\mathrm{BI}_{\mathrm{IA}}(1_A,c_{A,r})$, the disagreement consists of
pairs that are together under $1_A$ but separated under $c_{A,r}$.
There are $r(n_A-r)$ such pairs, so
$
\mathrm{BI}_{\mathrm{IA}}(1_A,c_{A,r})
=
a\,r(n_A-r).
$
For $\mathrm{BI}_{\mathrm{IA}}(0_A,c_{A,r})$, the disagreement
consists of pairs that are together under $c_{A,r}$ but separated
under $0_A$. Hence
\[
\mathrm{BI}_{\mathrm{IA}}(0_A,c_{A,r})
=
b\left\{
\binom{r}{2}
+
\binom{n_A-r}{2}
\right\}.
\]
When $a=b$,
$
\mathrm{BI}_{\mathrm{IA}}(1_A,c_{A,r})
<
\mathrm{BI}_{\mathrm{IA}}(0_A,c_{A,r})
$
if and only if
\[
r(n_A-r)
<
\binom{r}{2}
+
\binom{n_A-r}{2}.
\]
Substituting $r=n_Ap$ and simplifying gives
$
p(1-p)
<
({n_A-1})/{4n_A}.
$
Using
$
p(1-p)
=
{1}/{4}
-
\left(p-{1}/{2}\right)^2,
$
this is equivalent to
$
\left(p-{1}/{2}\right)^2
>
{1}/{4n_A},
$
and hence
$
\left|p-{1}/{2}\right|
>
{1}/{2\sqrt{n_A}}.
$
This proves part (i).

For part (ii), consider first VI$_{\mathrm{IA}}$. The common active set of
$\varnothing$ and $c_{A,r}$ is empty, so the ordinary VI term
vanishes. Under $m_{ai}=m_{ia}=\tau$,
\[
\mathrm{VI}_{\mathrm{IA}}(\varnothing,c_{A,r})
=
\frac{n_A}{n}\tau\log_2 n, \qquad
\mathrm{VI}_{\mathrm{IA}}(1_A,c_{A,r})
=
H_2(p),
\]
we obtain
$
\mathrm{VI}_{\mathrm{IA}}(1_A,c_{A,r})
<
\mathrm{VI}_{\mathrm{IA}}(\varnothing,c_{A,r})
$
if and only if
$
H_2(p)
<
({n_A}/{n})\tau\log_2 n,
$
or equivalently,
$
\tau>
({nH_2(p)})/({n_A\log_2 n}).
$
Since $H_2(p)\leq1$, the sufficient condition
\[
\tau>\frac{n}{n_A\log_2 n}
\]
ensures the inequality for every admissible $r$.

Finally, for symmetric BI$_{\mathrm{IA}}$,
$
\mathrm{BI}_{\mathrm{IA}}(\varnothing,c_{A,r})
=
(n-1)\tau n_A.
$
Therefore
$
\mathrm{BI}_{\mathrm{IA}}(1_A,c_{A,r})
<
\mathrm{BI}_{\mathrm{IA}}(\varnothing,c_{A,r})
$
if and only if
$
a\,r(n_A-r)
<
(n-1)\tau n_A.
$
Using $r=n_Ap$, this becomes
$
a n_A^2p(1-p)
<
(n-1)\tau n_A,
$
or equivalently,
\[
\tau>
\frac{a n_A}{n-1}p(1-p).
\]
Since $p(1-p)\leq1/4$, the sufficient condition
$
\tau>
{a n_A}/{4(n-1)}
$
ensures the inequality for every admissible $r$.

Moreover, for $n>2$ and $n_A\leq n$,
\[
\frac{a n_A}{4(n-1)}
\leq
\frac{a n}{4(n-1)}
<
\frac{a}{2}.
\]
so $\tau\geq a/2$ implies
$
\tau>
{a n_A}/{4(n-1)},
$
and thus
$
\mathrm{BI}_{\mathrm{IA}}(1_A,c_{A,r})
<
\mathrm{BI}_{\mathrm{IA}}(\varnothing,c_{A,r})
$
for every admissible $r$.
\end{proof}

\section{Choice of the level distribution \texorpdfstring{$H$}{H}}
\label{APP:h}

The level distribution $H$ may be specified directly on $\lambda$, for
example as a uniform distribution over a plausible interval
$[\lambda_{\min},\lambda_{\max}]$, or as a discrete distribution over a set
of candidate levels. For computation, we approximate $H$ using a finite grid
$\{\lambda_g\}_{g=1}^G$ with associated weights
$\{\omega_g\}_{g=1}^G$, where $\omega_g\geq0$ and
$\sum_{g=1}^G\omega_g=1$. Combined with $M$ posterior density draws, this
gives the weighted empirical level-marginal posterior with atom weights
$\eta_{m,g}=\omega_g/M$.

Alternatively, uncertainty may be specified through the inactive fraction
$\nu\in(0,1)$. Given a reference density $\bar f$, define
\begin{equation}
\lambda_\nu
=
q_\nu\!\left(
\{\bar f(y_i):i=1,\ldots,n\}
\right),
\label{eq:lambda_from_nu}
\end{equation}
where $q_\nu$ denotes the empirical $\nu$-quantile. A distribution $H_\nu$
on plausible inactive fractions therefore induces a distribution $H$ on
density levels through the mapping $\nu\mapsto\lambda_\nu$. This
parameterisation is scale-free and can be more interpretable when uncertainty
is naturally expressed in terms of the proportion of observations regarded
as background or noise.

For a fixed density draw $f$ evaluated on the observed sample, the active set
\[
A_\lambda(f)=\{i:f(y_i)\geq\lambda\}
\]
changes only when $\lambda$ crosses one of the values
$f(y_1),\ldots,f(y_n)$. Consequently, the sample level-set clustering
$\lambda\mapsto\psi_\lambda(f)$ is piecewise constant between consecutive
ordered density values, apart from any additional dependence of the graph
construction on $\lambda$. This makes a finite-grid approximation natural.
In practice, the grid should cover the relevant support of $H$; its adequacy
can be assessed by increasing $G$ and checking the stability of the resulting
particle summaries and weights.

All level grids used in our experiments are equally weighted,
$\omega_g=1/G$. In the synthetic separation experiment, the local grid is
specified in inactive-fraction space and mapped to density levels through
\eqref{eq:lambda_from_nu}. For the harbour analysis, we instead use a narrow
local grid directly on the density-level scale to assess sensitivity to the
level-set threshold; the corresponding inactive fractions are reported in
Appendix~\ref{app:harbour} for interpretation.

\section{Implementation and initialisation of WASABI-SuP}\label{app:wasabi_initialisation}

Algorithm~\ref{alg:wasabi} gives the implementation of WASABI-SuP. The
algorithm follows an alternating optimisation scheme, with an
\emph{$\mathcal N$-update} step and a \emph{particle-search} step. Given the
current particles, the $\mathcal N$-update assigns each posterior
sub-partition draw to its nearest particle under the ground loss $D$, with a
fixed rule used to break ties. This defines the empirical regions of
attraction $\{\mathcal N_\ell\}_{\ell=1}^L$. For the VI$_{\mathrm{IA}}$ loss, the regions of attraction reflect both sources of sub-partition disagreement. In particular, two draws may induce similar
clusterings on their common active observations while remaining distant
because they disagree on which observations are active. WASABI-SuP can
therefore separate posterior modes that differ primarily in their active sets, as well as those that differ in clustering structure.

In the particle-search step, each particle is updated by approximately
minimising the posterior loss within its current Voronoi cell using
the modified SALSO search described in Appendix~\ref{app: modified salso}. Importantly,
the updated particle is not restricted to be one of the posterior draws in
that region. Instead, SALSO searches over the space of sub-partitions and can
construct a new representative clustering structure that better summarises
the assigned draws. Since this search is approximate, a proposed update is
retained only when it does not increase the corresponding regional objective.
For region $\mathcal N_\ell$, the resulting particle weight and within-region
loss are
\[
w_\ell
=
\sum_{m\in\mathcal N_\ell}\eta_m,
\qquad
l_\ell
=
\frac{1}{w_\ell}
\sum_{m\in\mathcal N_\ell}
\eta_m D(\theta^{(m)},\theta_\ell^*),
\]
for $w_\ell>0$, with $l_\ell=0$ when $w_\ell=0$. Hence the empirical
WASABI-SuP objective is $\sum_{\ell=1}^L w_\ell l_\ell$, as in
\eqref{eq:wasabi_sup_objective}.

\paragraph{Initialisation.} 
As with other \(k\)-means-like procedures, the optimisation of \eqref{eq:wasabi_sup_objective} is non-convex and convergence is only guaranteed to a local optimum, therefore sensitive to initialisation.
We consequently run the algorithm from multiple initialisations and retain the solution with the smallest objective value.
Our default is a $++$-type initialisation analogous to
$k$-means++ \citep{vassilvitskii2006k}. The first particle is sampled from
the posterior draws according to their weights $\eta_m$. Given
$\theta_1^*,\ldots,\theta_{\ell-1}^*$, the next particle is sampled with
probability proportional to
$\eta_m
\min_{j<\ell}
D\!\left(\theta^{(m)},\theta_j^*\right)$.
Thus, posterior draws that carry substantial mass and are far from the
particles already selected are more likely to initialise a new particle.
For VI$_{\mathrm{IA}}$, this promotes coverage of configurations differing either in
active-set membership or in clustering structure. The procedure requires
only distances to the currently selected particles, rather than the full
pairwise distance matrix. In all experiments, we combine this initialisation
with multiple restarts and report the solution attaining the smallest
objective.

\begin{algorithm}[htp!]
\caption{WASABI-SuP: find the approximate posterior under a loss $D$}
\label{alg:wasabi}
\begin{algorithmic}
\State \textbf{Input:} weighted posterior samples
$\{(\theta^{(m)},\eta_m)\}_{m=1}^M$, with
$\eta_m\geq0$ and $\sum_{m=1}^M\eta_m=1$;
loss $D$, number of particles $L$, initialization method $\texttt{init}$,
tolerance $\epsilon$, maximum number of iterations $I_{\max}$
\State \textbf{Initialize} $\theta^*_{1:L}$ using method $\texttt{init}$; set $i=0$
\Repeat \qquad $i\gets i+1$
    \State \textbf{\textit{$\mathcal N$-update} step:}
    \State Compute $D(\theta^{(m)},\theta^*_\ell)$ for all $m$ and $\ell$
    \State Assign each $\theta^{(m)}$ to the closest particle $\theta^*_\ell$
    (fixed rule to break ties) and update the Voronoi cells $\mathcal N_\ell$
    \If{any Voronoi cell $\mathcal N_\ell$ is empty}
        \State Replace $\theta^*_\ell$ with a distant posterior sample
        $\theta^{(m)}$ with $\eta_m>0$ and repeat the $\mathcal N$-update
    \EndIf

    \State \textbf{\textit{Particle-search} step:}
    \For{$\ell$ in $1,\ldots,L$}
        \State Update particle:
        $\theta^*_\ell \approx
        \arg\min_{\theta\in\mathcal S_n}
        \sum_{\theta^{(m)}\in\mathcal N_\ell}
        \eta_m D(\theta^{(m)},\theta)$
        \Comment{modified SALSO}

        \State Retain the proposed particle only if the weighted regional objective does not increase
         
        \State Update weight:
        $w_\ell \gets
        \sum_{\theta^{(m)}\in\mathcal N_\ell}\eta_m$

        \State Update within-cell loss:
        $l_\ell \gets
        \frac{1}{w_\ell}
        \sum_{\theta^{(m)}\in\mathcal N_\ell}
        \eta_m D(\theta^{(m)},\theta^*_\ell)$
    \EndFor

    \State Update the objective:
    $\widehat{\mathcal T}_D^{(i)}
    \gets
    \sum_{\ell=1}^L w_\ell l_\ell$
\Until{change of $\widehat{\mathcal T}_D$ is less than $\epsilon$ or
$i=I_{\max}$}
\State \textbf{Return}
$\{\theta^*_1,\ldots,\theta^*_L\}$,
$\mathbf w=(w_1,\ldots,w_L)$.
\end{algorithmic}
\end{algorithm}

\subsection{Modified SALSO search.}\label{app: modified salso}
The particle-search step adapts customised SALSO \citep{dahl2022search} to minimise VI$_{\mathrm{IA}}$ over sub-partitions. Following the representation used by \citet{buch2024bayesian}, a sub-partition is encoded by an allocation vector in which label $0$ denotes an inactive observation and positive integer labels
denote active clusters.
The search otherwise retains the standard SALSO structure of \cite{dahl2022search}: observations are updated greedily in random order, followed by repeated sweetening sweeps until no further improving single-observation move is found, and multiple random restarts are used with the lowest-loss solution retained. 

For general weighted posterior draws, the regional objective is the weighted posterior loss over the draws assigned to the particle.
For equally weighted posterior draws, as used in the experiments below, the
regional objective is the empirical mean VI$_{\mathrm{IA}}$ over the draws assigned to the
particle.  The loss evaluation,
however, differs from standard VI-based SALSO. For a candidate
sub-partition $\theta$ and posterior draw $\theta^{(m)}$, VI$_{\mathrm{IA}}$ decomposes as
\[
\mathrm{VI}_{\mathrm{IA}}
\!\left(\theta^{(m)},\theta\right)
=
\frac{\log_2 n}{n}
\left[
m_{ai}\lvert A_m\cap I\rvert
+
m_{ia}\lvert I_m\cap A\rvert
\right]
+
\mathrm{VI}
\!\left(
\theta^{(m)}_{B_m},
\theta_{B_m}
\right),
\]
where $A_m$ and $A$ are the respective active sets,
$I_m=A_m^c$, $I=A^c$, and
$B_m=A_m\cap A$ is the common active set.

This form permits the active--inactive contribution of a proposed move to be
updated directly from the corresponding mismatch counts. 
The restricted-VI term is instead evaluated separately for each posterior draw on its current common active set $B_m$. It cannot be reduced to a fixed posterior similarity
matrix: changing the activity status of an observation also changes $B_m$, and hence
the partition on which VI is evaluated.
We consequently compute the restricted VI exactly from the
draw-specific contingency table (as in Appendix \ref{app:Contingency-table}) for each candidate move. Cached mismatch
counts and current restricted-VI values are maintained during a sweep so that
only the quantities affected by the proposed move need to be recomputed.

\section{Posterior expected VI$_{\mathrm{IA}}$ loss}
\label{app:expected_iavi}

For a candidate sub-partition $\hat\theta$ with active set $\hat A$,
define the posterior activity probabilities
\[
p_i
=
\Pr(c_i\neq0\mid y),
\qquad i=1,\ldots,n.
\]
Using the asymmetric VI$_{\mathrm{IA}}$ formulation in
\eqref{eq:ia_vi_lin general}, taking posterior expectations gives
\begin{align}
R_{\mathrm{IA\text{-}VI}}(\hat\theta)
:={}&
\mathbb E\!\left[
\mathrm{VI}_{\mathrm{IA}}(\theta,\hat\theta)
\mid y
\right]
\notag\\
={}&
\frac{\log_2 n}{n}
\sum_{i=1}^{n}
\left[
m_{ai}p_i\,\mathbb I(\hat c_i=0)
+
m_{ia}(1-p_i)\,\mathbb I(\hat c_i\neq0)
\right]
+
\mathbb E\!\left[
\mathrm{VI}(\rho_B,\hat\rho_B)
\mid y
\right],
\qquad
B=A\cap\hat A.
\notag
\end{align}
Thus, the active--inactive contribution depends only on the marginal
activity probabilities, whereas the clustering contribution depends on
the sampled partition through the restricted VI term.

For a weighted empirical posterior
\[\widehat\pi_M
=
\sum_{m=1}^{M}\eta_m\delta_{\theta^{(m)}},
\qquad
\eta_m\geq0,
\qquad
\sum_{m=1}^{M}\eta_m=1.\]
Define
\[\widehat p_i
=
\sum_{m=1}^{M}
\eta_m\,\mathbb I(c_i^{(m)}\neq0),
\qquad
B^{(m)}=A^{(m)}\cap\hat A.\]
Then
\begin{align}
\widehat R_{\mathrm{IA\text{-}VI}}(\hat\theta)
={}&
\frac{\log_2 n}{n}
\sum_{i=1}^{n}
\left[
m_{ai}\widehat p_i\,\mathbb I(\hat c_i=0)
+
m_{ia}(1-\widehat p_i)\,\mathbb I(\hat c_i\neq0)
\right]
+ \sum_{m=1}^{M}
\eta_m\,
\mathrm{VI}\!\left(
\rho_{B^{(m)}}^{(m)},
\hat\rho_{B^{(m)}}
\right).
\notag
\end{align}
Equivalently,
\begin{equation}
    \widehat R_{\mathrm{IA\text{-}VI}}(\hat\theta)
=
\sum_{m=1}^{M}
\eta_m\,
\mathrm{VI}_{\mathrm{IA}}
\!\left(\theta^{(m)},\hat\theta\right).
\notag
\end{equation}
For equally weighted posterior samples, $\eta_m=1/M$. 
Within a WASABI-SuP region $\mathcal N_\ell$, let
\[
w_\ell
=
\sum_{m:\theta^{(m)}\in\mathcal N_\ell}\eta_m.
\]
The same expression is applied to the conditional empirical posterior
within the region, using normalised weights
$\eta_m/w_\ell$ for $\theta^{(m)}\in\mathcal N_\ell$.
The corresponding regional activity probabilities must therefore be
recomputed using only the draws assigned to that region.

This decomposition highlights an important computational distinction
from BI$_{\mathrm{IA}}$. The posterior expected BI$_{\mathrm{IA}}$ loss can be expressed
through marginal activity probabilities and pairwise active
co-clustering probabilities. In contrast, the restricted VI term
depends on the full contingency table between each sampled partition
and the candidate partition on their common active set. Consequently,
the posterior expected VI$_{\mathrm{IA}}$ loss cannot in general be evaluated from
a posterior similarity matrix alone and requires sample-level
partition information.

\paragraph{Contingency-table evaluation.}
\label{app:Contingency-table}

For each posterior sample $\theta^{(m)}$, define the common-active set
$B^{(m)}
=
A^{(m)}\cap\hat A$
and the corresponding restricted contingency counts
\[
n_{kj}^{(m)}
=
\Bigl|
\bigl\{
i\in B^{(m)}
:
c_i^{(m)}=k,\;
\hat c_i=j
\bigr\}
\Bigr|,
\qquad
k\geq1,\;j\geq1.
\]
The row sums, column sums, and common-active-set size are
\[
n_{k+}^{(m)}
=
\sum_{j\geq1}n_{kj}^{(m)},
\qquad
n_{+j}^{(m)}
=
\sum_{k\geq1}n_{kj}^{(m)},
\qquad
n_B^{(m)}
=
\sum_{k\geq1}\sum_{j\geq1}n_{kj}^{(m)}
=
|B^{(m)}|.
\]
Whenever $n_B^{(m)}>0$, the restricted VI term is
\begin{align}
\mathrm{VI}\!\left(
\rho_{B^{(m)}}^{(m)},
\hat\rho_{B^{(m)}}
\right)
={}&
-\sum_{k\geq1}
\frac{n_{k+}^{(m)}}{n_B^{(m)}}
\log_2
\frac{n_{k+}^{(m)}}{n_B^{(m)}}
-
\sum_{j\geq1}
\frac{n_{+j}^{(m)}}{n_B^{(m)}}
\log_2
\frac{n_{+j}^{(m)}}{n_B^{(m)}}
-
2\sum_{k\geq1}\sum_{j\geq1}
\frac{n_{kj}^{(m)}}{n_B^{(m)}}
\log_2
\frac{n_{kj}^{(m)}n_B^{(m)}}
     {n_{k+}^{(m)}n_{+j}^{(m)}}.
\notag
\end{align}
Terms with $n_{kj}^{(m)}=0$ are taken to contribute zero. When
$n_B^{(m)}=0$, we set
$\mathrm{VI}\!\left(
\rho_{B^{(m)}}^{(m)},
\hat\rho_{B^{(m)}}
\right)
=0.$

\section{Computational complexity}
\label{app:complexity}

We summarise the computational cost of the two principal stages of the
proposed procedure: generation of posterior density draws by predictive
resampling and posterior summarisation by WASABI-SuP. Let $d$ denote the
data dimension, $n$ the number of observations, $M$ the number of posterior
density draws, $G$ the number of levels used for level marginalisation,
$L$ the number of WASABI-SuP particles, and $K_{\max}$ the largest number
of active clusters among the posterior sub-partitions. For convenience, 
write $R$ for the total number of sub-partitions entering WASABI-SuP where $R = M$ for fixed-level analysis and $R = MG$ for level-marginal analysis.

\paragraph{Predictive resampling for posterior draws.}
Our density posterior is generated using the multivariate copula recursion
of \citet{fong2023martingale}. As shown therein, constructing the predictive
density $p_n$ requires an initial $O(n^2d)$ computation, after which
evaluating $p_n$ at a single location requires $O(nd)$ operations.
With $N-n$ forward predictive-resampling steps, generating one posterior
density value at a fixed location requires $O((N-n)d)$ additional operations.
Since clustering requires the density only at the $n$ observed locations,
a direct implementation generating $M$ posterior density draws therefore
requires $O\!\left(n^2d + Mn(N-n)d\right)$
operations after sharing the initial fit across draws. Importantly, the forward predictive-resampling paths are independent and can be parallelised across draws. \citet{fong2023martingale} provide detailed runtime comparisons against MCMC-based inference, demonstrating improved computational savings.

For a given density draw and level, identifying the active set requires $O(n)$ threshold comparisons. Once the required neighbourhood graph is
available, restricting it to the active observations and finding connected
components is linear in the number of vertices and edges. For the reciprocal $k$-nearest-neighbour
graph, the graph contains at most $O(nk)$ edges, so this stage is $O(nk)$ per draw--level pair once the neighbour relations have been constructed. Across the level-marginal posterior, the graph-processing
cost is therefore $O(MGnk)$ after neighbour construction.

\paragraph{WASABI-SuP.}
A VI$_{\mathrm{IA}}$ loss between two sub-partitions can be evaluated in $O(n)$ time: the active--inactive contribution and the contingency counts required by
the restricted VI term are accumulated in a single pass through the allocation vectors. Hence, assigning all $R$ posterior draws to their
closest among $L$ particles in the $\mathcal N$-update requires $O(nRL)$ operations. The modified SALSO search used for the particle updates has
the same asymptotic order as its full-partition counterpart, since the additional active--inactive term is linear in $n$. Following the SALSO
complexity used by \citet{balocchi2025understanding}, the combined regional search costs $O(nR K_{\max}^2)$.
Consequently, one outer WASABI-SuP iteration has complexity
$O\left(
nR\left(L+K_{\max}^2\right)
\right)$,
and $I$ outer iterations require
$O\left(
I nR\left(L+K_{\max}^2\right)
\right)$.

\section{Theoretical guarantees for level-marginal WASABI-SuP}
\label{ssec:level-marginal-theory}
We state the following results conditionally on density-posterior contraction
and stability of the level-set clustering construction. This separates the
posterior-summarisation theory from the particular density-inference mechanism
used to generate the draws. Throughout this section, the level distribution $H$ is treated as
fixed rather than data-dependent.

For the theoretical analysis, it is useful to distinguish the
computational sample-graph clustering from the population level-set
clustering. Let
\[
\widetilde\psi_{n,\lambda}(f)\in\mathcal S_n
\]
denote the sample graph sub-partition used computationally, including
any $n$-dependent graph tuning parameters. This is the map denoted by
$\psi_\lambda(f)$ in Section~\ref{sec: graph}, with its dependence on
$n$ made explicit here.

In contrast, let
\[
\psi^\circ_{n,\lambda}(f_0)\in\mathcal S_n
\]
denote the sub-partition of the observed indices induced by the
connected components of the population level set
\[
L_\lambda(f_0)
=
\{y:f_0(y)\geq\lambda\}.
\]
Thus, observation $i$ is active if $f_0(y_i)\geq\lambda$, and two
active observations belong to the same block of
$\psi^\circ_{n,\lambda}(f_0)$ if they lie in the same connected
component of $L_\lambda(f_0)$.

Accordingly, the level-marginal posterior defined in
Section~\ref{sec:wasabi-lambda} can be written as
\begin{equation}
\Pi_n^\Lambda
=
\int
\int_\Lambda
\delta_{\widetilde\psi_{n,\lambda}(f)}
\,H(d\lambda)\,
\Pi_n(df\mid y),
\label{eq:Pi-Lambda-theory}
\end{equation}
while the corresponding true-density level-marginal target is
\begin{equation}
P_{0,n}^\Lambda
=
\int_\Lambda
\delta_{\psi^\circ_{n,\lambda}(f_0)}
\,H(d\lambda).
\label{eq:P0-Lambda}
\end{equation}
The dependence of $P_{0,n}^\Lambda$ on $n$ arises because it is a
distribution on sub-partitions of the observed sample. Throughout this
section, $H$ is treated as a fixed probability distribution on
$\Lambda$.

\subsection{Level-marginal posterior consistency}

\begin{assumption}[Uniform density contraction]
\label{ass:density_contraction}
There exist sequences $\epsilon_n\downarrow0$ and $K_n\to\infty$ such
that
\[
K_n\epsilon_n\to0
\]
and
\begin{equation}
\Pi_n\!\left(
\left\{
f:\|f-f_0\|_\infty>K_n\epsilon_n
\right\}
\mid y
\right)
\xrightarrow{P}0.
\notag
\end{equation}
\end{assumption}

Assumption~\ref{ass:density_contraction} is analogous to the
sup-norm contraction condition used in Bayesian level-set clustering
\citep{buch2024bayesian}. It is deliberately stated independently of
the density-inference mechanism. For the recursive Gaussian-copula
martingale posterior used in Section~\ref{sec:mpd_intro}, existing
theory establishes predictive convergence and weaker forms of
consistency \citep{fong2023martingale}, but does not directly establish
the sup-norm contraction assumed here. The result below should
therefore be interpreted conditionally on
Assumption~\ref{ass:density_contraction} for that construction.

Let
\begin{equation}
\mathcal F_n
=
\left\{
f:\|f-f_0\|_\infty\leq K_n\epsilon_n
\right\}.
\notag
\end{equation}

\begin{assumption}[Uniform level-marginal stability]
\label{ass:level_marginal_stability}
Let $d_n:\mathcal S_n\times\mathcal S_n\to[0,1]$ be a bounded
non-negative loss. Assume that
\begin{equation}
\Delta_n
:=
\sup_{f\in\mathcal F_n}
\int_\Lambda
d_n\!\left(
\widetilde\psi_{n,\lambda}(f),
\psi^\circ_{n,\lambda}(f_0)
\right)
H(d\lambda)
\xrightarrow{P}0.
\notag
\end{equation}
\end{assumption}

Assumption~\ref{ass:level_marginal_stability} is the level-marginal
analogue of fixed-level stability: densities uniformly close to $f_0$
must induce sample-graph sub-partitions close to the population
level-set sub-partitions, on average over $\lambda\sim H$. It therefore
accounts jointly for perturbation of the density and approximation of
the population level-set components by the sample graph construction.

For probability measures $P$ and $Q$ on $\mathcal S_n$, define the
transport cost
\begin{equation}
\mathsf T_{d_n}(P,Q)
=
\inf_{\gamma\in\Gamma(P,Q)}
\int
d_n(\theta,\theta')\,
\gamma(d\theta,d\theta'),
\notag
\end{equation}
where $\Gamma(P,Q)$ denotes the set of couplings of $P$ and $Q$.

\begin{theorem}[Consistency of the level-marginal clustering posterior]
\label{thm:level_marginal_consistency}
Under Assumptions~\ref{ass:density_contraction}
and~\ref{ass:level_marginal_stability},
\begin{equation}
\mathsf T_{d_n}\!\left(
\Pi_n^\Lambda,
P_{0,n}^\Lambda
\right)
\xrightarrow{P}0.
\notag
\end{equation}
If $d_n$ is a metric, then $\mathsf T_{d_n}$ is the corresponding
$1$-Wasserstein distance.
\end{theorem}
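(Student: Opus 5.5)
The plan is to bound the transport cost by the cost of one explicit coupling. We build a joint draw $(\theta,\theta')$ by first drawing $f\sim\Pi_n(\cdot\mid y)$ and $\lambda\sim H$ independently, and then setting
\[
\theta=\widetilde\psi_{n,\lambda}(f),\qquad \theta'=\psi^\circ_{n,\lambda}(f_0).
\]
Call this law $\gamma_n$.

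Its first marginal is $\Pi_n^\Lambda$ by \eqref{eq:Pi-Lambda-theory}. Its second marginal is $P_{0,n}^\Lambda$ by \eqref{eq:P0-Lambda}, because $\theta'$ does not depend on $f$ and $\Pi_n(\cdot\mid y)$ integrates to one. Hence $\gamma_n\in\Gamma(\Pi_n^\Lambda,P_{0,n}^\Lambda)$. Since $\mathcal S_n$ is finite, all measurability issues are trivial. By the definition of $\mathsf T_{d_n}$ and Fubini,
\[
\mathsf T_{d_n}\bigl(\Pi_n^\Lambda,P_{0,n}^\Lambda\bigr)\le \int g_n(f)\,\Pi_n(df\mid y),
\qquad
g_n(f):=\int_\Lambda d_n\bigl(\widetilde\psi_{n,\lambda}(f),\psi^\circ_{n,\lambda}(f_0)\bigr)H(d\lambda).
\]

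Next we split the integral over $\mathcal F_n$ and its complement. On $\mathcal F_n$ we have $g_n(f)\le\Delta_n$ by the definition of $\Delta_n$ in Assumption~\ref{ass:level_marginal_stability}. On $\mathcal F_n^c$ we use $d_n\le 1$, so $g_n\le 1$. Together these give
\[
\mathsf T_{d_n}\bigl(\Pi_n^\Lambda,P_{0,n}^\Lambda\bigr)\le \Delta_n\,\Pi_n(\mathcal F_n\mid y)+\Pi_n(\mathcal F_n^c\mid y)\le \Delta_n+\Pi_n\bigl(\{f:\|f-f_0\|_\infty>K_n\epsilon_n\}\mid y\bigr).
\]
Both terms tend to zero in probability, by Assumptions~\ref{ass:level_marginal_stability} and~\ref{ass:density_contraction} respectively. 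A sum of two such terms also tends to zero in probability, and the transport cost is non-negative, so the first claim follows.

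For the final claim, when $d_n$ is a metric, $\mathsf T_{d_n}$ is by definition the Kantorovich $1$-Wasserstein distance on the finite metric space $(\mathcal S_n,d_n)$, exactly as noted in Section~\ref{subsec:wasabi_sup}.

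The main point requiring care is checking that $\gamma_n$ has the right marginals, in particular that the $f$-independence of $\psi^\circ$ collapses the second marginal to $P_{0,n}^\Lambda$. A secondary point is that $\Delta_n$ and the posterior masses are random through $y$, so the convergence is in probability and not deterministic. The supremum in $\Delta_n$ makes the bound on $\mathcal F_n$ uniform, so no further conditions are needed.
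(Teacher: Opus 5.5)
Your proof is correct and follows the paper's argument essentially step for step. It uses the same coupling obtained by drawing $f\sim\Pi_n(\cdot\mid y)$ and $\lambda\sim H$ independently, and then the same split of the posterior integral over $\mathcal F_n$ and $\mathcal F_n^c$ using $\Delta_n$ and $d_n\le 1$; your intermediate bound $\Delta_n\,\Pi_n(\mathcal F_n\mid y)+\Pi_n(\mathcal F_n^c\mid y)$ is only marginally sharper than the paper's $\Delta_n+\Pi_n(\mathcal F_n^c\mid y)$ and changes nothing in the conclusion.
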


\begin{proof}
Conditional on the observed data, draw
\[
f\sim\Pi_n(\cdot\mid y),
\qquad
\lambda\sim H
\]
independently, and define
\[
\Theta
=
\widetilde\psi_{n,\lambda}(f),
\qquad
\Theta_0
=
\psi^\circ_{n,\lambda}(f_0).
\]
By \eqref{eq:Pi-Lambda-theory} and \eqref{eq:P0-Lambda},
the marginals of $(\Theta,\Theta_0)$ are
$\Pi_n^\Lambda$ and $P_{0,n}^\Lambda$, respectively. Hence this joint
law defines a valid coupling, and
\begin{align}
\mathsf T_{d_n}\!\left(
\Pi_n^\Lambda,P_{0,n}^\Lambda
\right)
&\leq
\int
\int_\Lambda
d_n\!\left(
\widetilde\psi_{n,\lambda}(f),
\psi^\circ_{n,\lambda}(f_0)
\right)
H(d\lambda)\,
\Pi_n(df\mid y).
\notag
\end{align}
Define
\[
r_n(f)
=
\int_\Lambda
d_n\!\left(
\widetilde\psi_{n,\lambda}(f),
\psi^\circ_{n,\lambda}(f_0)
\right)
H(d\lambda).
\]
Splitting the posterior integral over $\mathcal F_n$ and
$\mathcal F_n^c$, and using $0\leq d_n\leq1$, gives
\begin{align}
\mathsf T_{d_n}\!\left(
\Pi_n^\Lambda,P_{0,n}^\Lambda
\right)
&\leq
\int_{\mathcal F_n}
r_n(f)\,\Pi_n(df\mid y)
+
\int_{\mathcal F_n^c}
r_n(f)\,\Pi_n(df\mid y)
\notag\\
&\leq
\sup_{f\in\mathcal F_n}r_n(f)
+
\Pi_n(\mathcal F_n^c\mid y)
\notag\\
&=
\Delta_n
+
\Pi_n(\mathcal F_n^c\mid y).
\notag
\end{align}
The first term converges to zero in probability by
Assumption~\ref{ass:level_marginal_stability}, and the second by
Assumption~\ref{ass:density_contraction}. Hence
\[
\mathsf T_{d_n}\!\left(
\Pi_n^\Lambda,P_{0,n}^\Lambda
\right)
\xrightarrow{P}0.
\]
\end{proof}

For the symmetric VI$_{\mathrm{IA}}$ loss with $\tau>0$, one may take
\begin{equation}
d_n^{\mathrm{VI}_{\mathrm{IA}}}(\theta,\theta')
=
\frac{
\mathrm{VI}_{\mathrm{IA}}(\theta,\theta')
}{
(\tau+1)\log_2 n
},
\qquad n\geq2,
\label{eq:normalised_iavi}
\end{equation}
which lies in $[0,1]$ by Proposition~\ref{prop:ia_vi_lin_basic}. Alternatively, the tighter upper bound in \eqref{eq:iavi_upper_bound} may be used for normalisation.
Theorem~\ref{thm:level_marginal_consistency} therefore gives loss-based transport convergence under Assumption~\ref{ass:level_marginal_stability} with this choice of $d_n$. Since VI$_{\mathrm{IA}}$ is not assumed to satisfy the triangle inequality, we do not interpret this transport cost as a Wasserstein distance. For a normalised BI$_{\mathrm{IA}}$ loss satisfying the corresponding metric conditions, the theorem instead gives ordinary Wasserstein convergence.

\subsection{Oracle bound for metric ground losses}

For the empirical approximation, let
$\widehat\pi_{M,G}^{\Lambda}$ denote the weighted empirical
level-marginal posterior defined in \eqref{eq:hat-Pi-Lambda}, based on $M$
density draws and a $G$-point level grid. Let $d_n$ be a bounded metric on
$\mathcal S_n$, and let
\begin{equation}
\widehat q_L^\Lambda
\in
\operatorname*{arg\,min}_{q\in\mathcal Q_L}
W_{d_n}\!\left(
\widehat\pi_{M,G}^{\Lambda},q
\right)
\notag
\end{equation}
be the empirical $L$-particle WASABI-SuP summary. Define
\begin{equation}
q_{0,n,L}^\star
\in
\operatorname*{arg\,min}_{q\in\mathcal Q_L}
W_{d_n}\!\left(
P_{0,n}^\Lambda,q
\right)
\notag
\end{equation}
to be the best $L$-particle approximation to the true-density
level-marginal target.

\begin{theorem}[Oracle bound for empirical level-marginal WASABI-SuP]
\label{thm:level_marginal_oracle_bound}
For any bounded metric $d_n$,
\begin{align}
&
W_{d_n}\!\left(
P_{0,n}^\Lambda,
\widehat q_L^\Lambda
\right)
-
W_{d_n}\!\left(
P_{0,n}^\Lambda,
q_{0,n,L}^\star
\right)
\leq
2
W_{d_n}\!\left(
\widehat\pi_{M,G}^{\Lambda},
P_{0,n}^\Lambda
\right).
\label{eq:level_marginal_oracle_bound}
\end{align}
\end{theorem}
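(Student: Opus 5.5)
The argument is the standard oracle inequality for $k$-means-type quantisation, and it uses only that $W_{d_n}$ is a metric on probability measures over the finite set $\mathcal S_n$. Since $d_n$ is a bounded metric on the finite space $\mathcal S_n$, the $1$-Wasserstein distance $W_{d_n}$ is a genuine metric on probability measures there. In particular it is symmetric and satisfies the triangle inequality; the gluing lemma holds trivially on finite spaces. Both minimisers $\widehat q_L^\Lambda$ and $q_{0,n,L}^\star$ exist, because the objective is continuous in the weights and there are finitely many choices of support. Writing $\widehat\pi=\widehat\pi_{M,G}^{\Lambda}$ and $P_0=P_{0,n}^\Lambda$, we then chain three inequalities.

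\textbf{Step 1 (triangle inequality through the empirical measure).} We bound
\[
W_{d_n}(P_0,\widehat q_L^\Lambda)\le W_{d_n}(P_0,\widehat\pi)+W_{d_n}(\widehat\pi,\widehat q_L^\Lambda).
\]

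\textbf{Step 2 (optimality of the empirical summary).} Since $q_{0,n,L}^\star\in\mathcal Q_L$ is a feasible competitor for the empirical problem, we have $W_{d_n}(\widehat\pi,\widehat q_L^\Lambda)\le W_{d_n}(\widehat\pi,q_{0,n,L}^\star)$.

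\textbf{Step 3 (triangle inequality back to the target).} We bound $W_{d_n}(\widehat\pi,q_{0,n,L}^\star)\le W_{d_n}(\widehat\pi,P_0)+W_{d_n}(P_0,q_{0,n,L}^\star)$.

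Combining the three steps and using symmetry $W_{d_n}(P_0,\widehat\pi)=W_{d_n}(\widehat\pi,P_0)$ gives
\[
W_{d_n}(P_0,\widehat q_L^\Lambda)\le W_{d_n}(P_0,q_{0,n,L}^\star)+2W_{d_n}(\widehat\pi,P_0).
\]
Rearranging yields \eqref{eq:level_marginal_oracle_bound}.

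The only step needing any care is justifying that $W_{d_n}$ satisfies the triangle inequality, since this is exactly where the metric hypothesis on $d_n$ enters. We would do this by gluing optimal couplings $\gamma_{12}$ and $\gamma_{23}$ into a three-way measure on $\mathcal S_n^3$, using conditional probabilities on the finite space, and then applying the triangle inequality for $d_n$ pointwise. This is also why the analogous VI$_{\mathrm{IA}}$ result must be handled separately through uniform convergence of the compression risk rather than through this argument.
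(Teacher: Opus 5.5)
Your proof is correct and matches the paper's argument step for step: a triangle inequality through $\widehat\pi_{M,G}^{\Lambda}$, optimality of $\widehat q_L^\Lambda$ against the feasible competitor $q_{0,n,L}^\star$, a second triangle inequality, and symmetry of $W_{d_n}$. Your remarks on the existence of minimisers and on gluing couplings over the finite space $\mathcal S_n$ make explicit what the paper takes for granted, namely that $W_{d_n}$ is a genuine metric.
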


\begin{proof}
By the triangle inequality,
\begin{align*}
W_{d_n}\!\left(
P_{0,n}^\Lambda,\widehat q_L^\Lambda
\right)
&\leq
W_{d_n}\!\left(
P_{0,n}^\Lambda,\widehat\pi_{M,G}^{\Lambda}
\right)
+
W_{d_n}\!\left(
\widehat\pi_{M,G}^{\Lambda},\widehat q_L^\Lambda
\right).
\end{align*}
By optimality of $\widehat q_L^\Lambda$,
\[
W_{d_n}\!\left(
\widehat\pi_{M,G}^{\Lambda},\widehat q_L^\Lambda
\right)
\leq
W_{d_n}\!\left(
\widehat\pi_{M,G}^{\Lambda},q_{0,n,L}^\star
\right).
\]
Applying the triangle inequality once more,
\[
W_{d_n}\!\left(
\widehat\pi_{M,G}^{\Lambda},q_{0,n,L}^\star
\right)
\leq
W_{d_n}\!\left(
\widehat\pi_{M,G}^{\Lambda},P_{0,n}^\Lambda
\right)
+
W_{d_n}\!\left(
P_{0,n}^\Lambda,q_{0,n,L}^\star
\right).
\]
Combining these inequalities and using symmetry of $W_{d_n}$ gives
\begin{align*}
W_{d_n}\!\left(
P_{0,n}^\Lambda,\widehat q_L^\Lambda
\right)
&\leq
2W_{d_n}\!\left(
\widehat\pi_{M,G}^{\Lambda},P_{0,n}^\Lambda
\right)
+
W_{d_n}\!\left(
P_{0,n}^\Lambda,q_{0,n,L}^\star
\right).
\end{align*}
Rearranging proves \eqref{eq:level_marginal_oracle_bound}.
\end{proof}

The approximation term on the right-hand side satisfies
\begin{align}
W_{d_n}\!\left(
\widehat\pi_{M,G}^{\Lambda},
P_{0,n}^\Lambda
\right)
&\leq
W_{d_n}\!\left(
\widehat\pi_{M,G}^{\Lambda},
\Pi_n^\Lambda
\right)
+
W_{d_n}\!\left(
\Pi_n^\Lambda,
P_{0,n}^\Lambda
\right).
\notag
\end{align}
The first term is the Monte Carlo and grid-approximation error in
approximating the level-marginal posterior, whereas the second is the
statistical error controlled by
Theorem~\ref{thm:level_marginal_consistency}. Consequently, under any
joint asymptotic regime for which
\[
W_{d_n}\!\left(
\widehat\pi_{M,G}^{\Lambda},\Pi_n^\Lambda
\right)
\xrightarrow{P}0
\]
and
\[
W_{d_n}\!\left(
\Pi_n^\Lambda,P_{0,n}^\Lambda
\right)
\xrightarrow{P}0,
\]
the excess approximation error satisfies
\[
W_{d_n}\!\left(
P_{0,n}^\Lambda,\widehat q_L^\Lambda
\right)
-
W_{d_n}\!\left(
P_{0,n}^\Lambda,q_{0,n,L}^\star
\right)
\xrightarrow{P}0.
\]

\subsection{Oracle bound for VI$_{\mathrm{IA}}$ loss compression}
\label{app:ia_vi_oracle_bound}

Because VI$_{\mathrm{IA}}$ is not assumed to be a metric, the preceding Wasserstein
argument does not apply directly. Instead, let
\[
\ell_n(\theta,\theta')
=
d_n^{\mathrm{VI}_{\mathrm{IA}}}(\theta,\theta')
\in[0,1]
\]
denote the normalised VI$_{\mathrm{IA}}$ loss in
\eqref{eq:normalised_iavi}. Since this differs from the original VI$_{\mathrm{IA}}$
loss only by a positive factor depending on $n$, it has the same
minimisers.

For a set of $L$ representative sub-partitions
$\Theta=(\theta_1,\ldots,\theta_L)\in\mathcal S_n^L$,
define the VI$_{\mathrm{IA}}$ compression risk
\begin{equation}
\mathcal R_n(P,\Theta)
=
\int
\min_{1\leq\ell\leq L}
\ell_n(\theta,\theta_\ell)
\,P(d\theta).
\label{eq:ia_vi_compression_risk}
\end{equation}
Because the particle weights are free, minimising the transport
compression objective over distributions supported on at most $L$
representatives is equivalent to minimising
\eqref{eq:ia_vi_compression_risk}; the optimal particle weights are
the probabilities assigned to the corresponding minimum-loss regions.

Let
\begin{align}
\widehat\Theta_L^\Lambda
&\in
\operatorname*{arg\,min}_{\Theta\in\mathcal S_n^L}
\mathcal R_n\!\left(
\widehat\pi_{M,G}^{\Lambda},\Theta
\right), \qquad
\Theta_{0,n,L}^\star
\in
\operatorname*{arg\,min}_{\Theta\in\mathcal S_n^L}
\mathcal R_n\!\left(
P_{0,n}^\Lambda,\Theta
\right).
\notag
\end{align}

\begin{theorem}[VI$_{\mathrm{IA}}$ compression oracle bound]
\label{thm:ia_vi_oracle_bound}
The empirical and population minimisers satisfy
\begin{align}
&
\mathcal R_n\!\left(
P_{0,n}^\Lambda,\widehat\Theta_L^\Lambda
\right)
-
\mathcal R_n\!\left(
P_{0,n}^\Lambda,\Theta_{0,n,L}^\star
\right)
\leq
2
\sup_{\Theta\in\mathcal S_n^L}
\left|
\mathcal R_n\!\left(
\widehat\pi_{M,G}^{\Lambda},\Theta
\right)
-
\mathcal R_n\!\left(
P_{0,n}^\Lambda,\Theta
\right)
\right|.
\label{eq:ia_vi_oracle_bound}
\end{align}
Consequently, under any joint asymptotic regime with $n\to\infty$,
$M\to\infty$, and, when $H$ is approximated by an increasingly fine grid,
$G\to\infty$, with $L$ fixed, if
\begin{equation}
\sup_{\Theta\in\mathcal S_n^L}
\left|
\mathcal R_n\!\left(
\widehat\pi_{M,G}^{\Lambda},\Theta
\right)
-
\mathcal R_n\!\left(
P_{0,n}^\Lambda,\Theta
\right)
\right|
\xrightarrow{P}0,
\label{eq:ia_vi_uniform_convergence}
\end{equation}
the empirical VI$_{\mathrm{IA}}$ WASABI-SuP summary is asymptotically as good,
in VI$_{\mathrm{IA}}$ compression risk, as the best $L$-particle approximation of
$P_{0,n}^\Lambda$.
\end{theorem}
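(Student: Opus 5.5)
The plan is the standard empirical-risk-minimisation argument, which needs no triangle inequality for the ground loss and so applies to the normalised $\ell_n=d_n^{\mathrm{VI}_{\mathrm{IA}}}$. Abbreviate $\widehat\pi=\widehat\pi_{M,G}^{\Lambda}$, $P_0=P_{0,n}^\Lambda$ and $\widehat\Theta=\widehat\Theta_L^\Lambda$, $\Theta^\star=\Theta_{0,n,L}^\star$. Define the uniform deviation
\[
U_n=\sup_{\Theta\in\mathcal S_n^L}\bigl|\mathcal R_n(\widehat\pi,\Theta)-\mathcal R_n(P_0,\Theta)\bigr|.
\]
This is finite, since $\mathcal S_n$ is finite for each $n$ and $\ell_n\in[0,1]$ by Proposition~\ref{prop:ia_vi_lin_basic}. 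The same finiteness guarantees that both argmins exist.

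The excess risk splits into three differences:
\begin{align*}
\mathcal R_n(P_0,\widehat\Theta)-\mathcal R_n(P_0,\Theta^\star)
={}&\bigl[\mathcal R_n(P_0,\widehat\Theta)-\mathcal R_n(\widehat\pi,\widehat\Theta)\bigr]
+\bigl[\mathcal R_n(\widehat\pi,\widehat\Theta)-\mathcal R_n(\widehat\pi,\Theta^\star)\bigr]\\
&+\bigl[\mathcal R_n(\widehat\pi,\Theta^\star)-\mathcal R_n(P_0,\Theta^\star)\bigr].
\end{align*}
The middle bracket is non-positive because $\widehat\Theta$ minimises $\mathcal R_n(\widehat\pi,\cdot)$. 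The first and third brackets are each at most $U_n$, since $\widehat\Theta$ and $\Theta^\star$ both lie in $\mathcal S_n^L$. Adding these bounds gives \eqref{eq:ia_vi_oracle_bound}.

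For the asymptotic statement, note that the left-hand side is non-negative by the optimality of $\Theta^\star$. It is therefore sandwiched between $0$ and $2U_n$, and $U_n\xrightarrow{P}0$ is exactly hypothesis \eqref{eq:ia_vi_uniform_convergence}. Hence the excess risk converges to zero in probability along any joint regime in $(n,M,G)$ with $L$ fixed.

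Finally, I would recall the remark preceding the theorem. Once the particles are fixed, optimal weights assign each atom to its nearest particle. The transport objective over $\mathcal Q_L$ therefore equals $\min_\Theta\mathcal R_n$, so the conclusion transfers to the WASABI-SuP summary itself. Rescaling by $(\tau+1)\log_2 n$ leaves the minimisers unchanged.

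There is no substantive obstacle here. The only care needed is to invoke the optimality of $\widehat\Theta$ only for the empirical risk, and to avoid any appeal to a triangle inequality for $\ell_n$. The real difficulty, verifying \eqref{eq:ia_vi_uniform_convergence}, is assumed rather than proved.
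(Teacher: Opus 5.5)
Your proposal is correct and follows the paper's proof essentially verbatim: the same three-term add-and-subtract decomposition, with the middle term non-positive by empirical optimality of $\widehat\Theta$ and the outer two terms each bounded by the uniform deviation. Your extra remarks are consistent with the paper and make explicit what it leaves to ``follows immediately'': finiteness of $\mathcal S_n$ guarantees the argmins exist, and optimality of $\Theta^\star$ makes the excess risk non-negative, which gives the sandwich $0\le\cdot\le 2U_n$.
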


\begin{proof}
Write
$\widehat P
=
\widehat\pi_{M,G}^{\Lambda}$,
$P
=
P_{0,n}^\Lambda$,
$\widehat\Theta
=
\widehat\Theta_L^\Lambda$,
$\Theta^\star
=
\Theta_{0,n,L}^\star$.
Adding and subtracting the empirical risks gives
\begin{align*}
\mathcal R_n(P,\widehat\Theta)
-
\mathcal R_n(P,\Theta^\star)
={}&
\left\{
\mathcal R_n(P,\widehat\Theta)
-
\mathcal R_n(\widehat P,\widehat\Theta)
\right\}
+
\left\{
\mathcal R_n(\widehat P,\widehat\Theta)
-
\mathcal R_n(\widehat P,\Theta^\star)
\right\}
+
\left\{
\mathcal R_n(\widehat P,\Theta^\star)
-
\mathcal R_n(P,\Theta^\star)
\right\}.
\end{align*}
The middle term is non-positive by optimality of
$\widehat\Theta$. Therefore,
\begin{align*}
\mathcal R_n(P,\widehat\Theta)
-
\mathcal R_n(P,\Theta^\star)
&\leq
\left|
\mathcal R_n(P,\widehat\Theta)
-
\mathcal R_n(\widehat P,\widehat\Theta)
\right|
+
\left|
\mathcal R_n(\widehat P,\Theta^\star)
-
\mathcal R_n(P,\Theta^\star)
\right|
\\
&\leq
2
\sup_{\Theta\in\mathcal S_n^L}
\left|
\mathcal R_n(\widehat P,\Theta)
-
\mathcal R_n(P,\Theta)
\right|.
\end{align*}
This proves \eqref{eq:ia_vi_oracle_bound}. The final statement follows
immediately from \eqref{eq:ia_vi_uniform_convergence}.
\end{proof}

\begin{remark}[Relation to fixed $\lambda$]
If $H=\delta_{\lambda_0}$, then
\[
\Pi_n^\Lambda
=
\Pi_{n,\lambda_0},
\qquad
P_{0,n}^\Lambda
=
\delta_{\psi^\circ_{n,\lambda_0}(f_0)}.
\]
Theorem~\ref{thm:level_marginal_consistency} therefore reduces to the
corresponding fixed-level consistency result under sup-norm density
contraction and fixed-level stability. Thus, the level-marginal
formulation contains the fixed-$\lambda$ setting as a special case.
\end{remark}

\begin{remark}[Interpretation of $P_{0,n}^\Lambda$]
The target
\[
P_{0,n}^\Lambda
=
\int_\Lambda
\delta_{\psi^\circ_{n,\lambda}(f_0)}
\,H(d\lambda)
\]
is the distribution of the true-density level-set sub-partition of
the observed data when $\lambda\sim H$. It is therefore a probability
distribution over flat sub-partitions, rather than a cluster tree.

Accordingly, a level-marginal WASABI-SuP particle represents a typical
sub-partition arising from a collection of density--level
combinations, and its weight represents the posterior mass assigned
to the corresponding region. A particle need not correspond to the
clustering at any single value of $\lambda$.
\end{remark}

Note that these guarantees are conditional on the stated density-contraction and
stability assumptions and treat the level distribution $H$ as fixed. In particular, the uniform sup-norm contraction condition imposed is an assumption of the present theory; existing predictive-convergence results for the Gaussian-copula recursive predictive used in our experiments may not establish this condition. The data-adaptive calibration of $H$ used in our experiments is therefore a
practical extension not covered directly by the present asymptotic result.

\section{Additional details for the synthetic study}
\label{app:sim2d}

This section provides the data-generating settings, implementation and evaluation details, and additional results for the experiments in Section~\ref{sec:sim2d}. All experiments were run locally on an Apple M4 Pro machine with
24\,GB unified memory.

\subsection{Data-generating settings}
\label{app:sim2d-scenes}

\paragraph{Ring.}
The first dataset in Section~\ref{sec:sim2d} has $n=250$ observations with $90$ clutter observations.
It contains a single $K=1$ annular cluster with observations having angle uniformly distributed on $[0,2\pi)$ and radius $r\sim\mathcal N(R,\sigma_r^2)$, $R=6.6$, $\sigma_r=0.44$.
An independent homogeneous clutter with rate $100$ is added over area $[-10,10]^2$. 

\paragraph{Four rectangles.}
The second example contains $K=4$ clusters and a homogeneous clutter component with rate $80$. It has $n=480$ observations with $86$ clutter observations. The cluster component intensities range from $3$ to $15$ with different size, aspect ratio, and orientation. Each signal component is uniform on its support.

\begin{figure*}[t]
    \centering
\includegraphics[width=\textwidth]{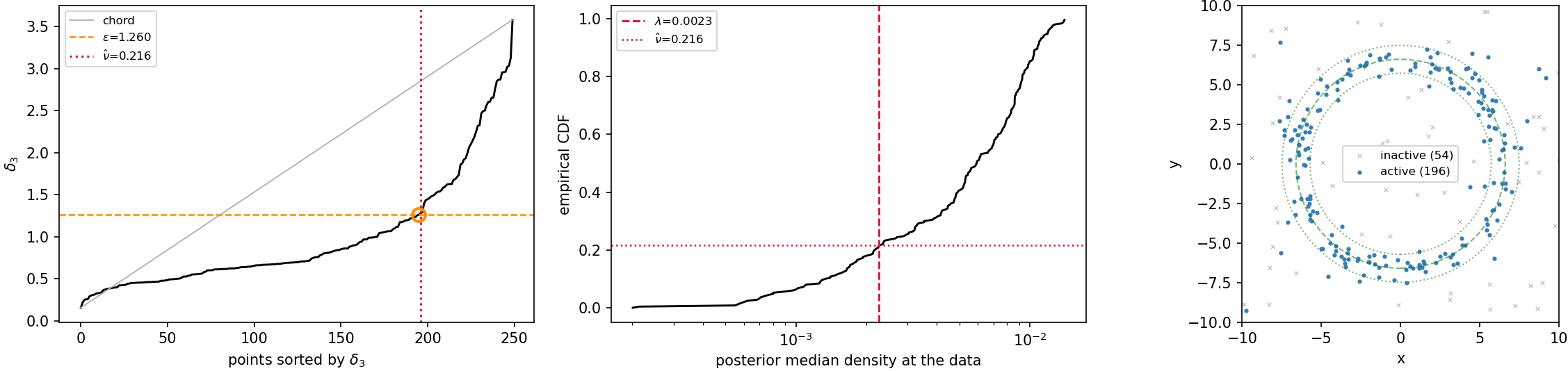}
\includegraphics[width=\textwidth]{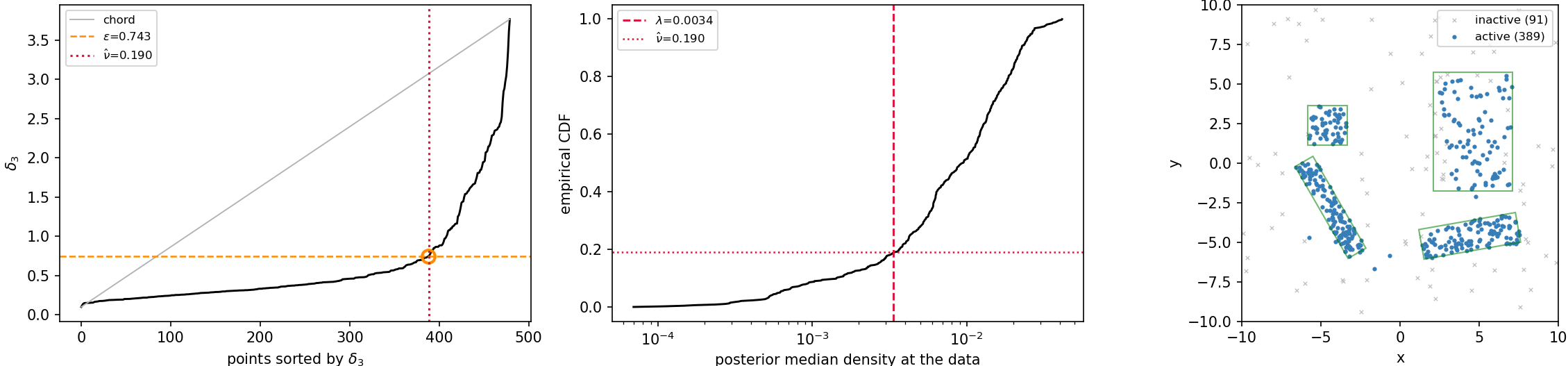}
    \caption{
    Data-driven calibration of the density level.
    \textbf{Top:} ring example.
    \textbf{Bottom:} four-rectangle example.
    In each case, the left panel shows the sorted $3$-nearest-neighbour distances, with 
    the orange markers denoting the detected knee $\varepsilon$, defined as the point of maximum perpendicular distance from the chord (grey lines). 
    The middle panel transfers the same empirical rank $\hat\nu$ to the posterior median MPD density, giving $\lambda=q_{\hat\nu}\{\hat f(x_i)\}$. The right panel shows the resulting reference active and inactive sets under $\lambda$. 
    Green dotted lines indicate the true component boundaries for visual reference and are not used in the   calibration.
    }
    \label{fig:level_selection}
\end{figure*}

\subsection{Baseline methods}
\label{app:sim2d-baselines}
\paragraph{$k$-means.}
We use Lloyd's algorithm implemented in \texttt{scikit-learn}, with $10$ random restarts. The number of clusters is selected using the gap statistic over $K\in\{1,\ldots,12\}$.
For each candidate $K$, $B^*=50$ reference datasets are drawn uniformly over the bounding box of the observations. We select the smallest $K$
satisfying
$\operatorname{Gap}(K)
\geq
\operatorname{Gap}(K+1)-s_{K+1}$,
where
$s_K
=
\operatorname{sd}^{*}(K)
\sqrt{1+\frac{1}{B^*}}$.
Since $k$-means returns a full partition, it has no inactive label and assigns every clutter observation
to a cluster.

\paragraph{DBSCAN.}
We use the \texttt{scikit-learn} implementation. The minimum number of
neighbours is set to $\texttt{min\_samples}=2d=4$,
where $d=2$. The neighbourhood radius $\varepsilon$ is selected from the sorted four-nearest-neighbour distance curve, with the knee defined as the
point having maximum perpendicular distance from the chord connecting the first and last points of the curve. 

\paragraph{BNPmix.}
We fit a Pitman--Yor location-scale mixture of bivariate Gaussian distributions using \nolinkurl{BNPmix::PYdensity}. The sampler is run for
$6000$ MCMC iterations, with the first $2000$ discarded as burn-in. The clustering reported in the main text is the package's VI-based point estimate obtained using \nolinkurl{BNPmix::PYdensity} with $\texttt{max\_k}=25$.
The routine considers complete-linkage and average-linkage cuts of the posterior dissimilarity matrix together with the best-scoring sampled partition, and selects the candidate with the smallest posterior expected VI. For both examples, the selected solution is the average-linkage candidate. As with $k$-means, BNPmix produces a full partition and therefore
assigns every background observation to a mixture component.

\paragraph{Proposed method}\label{app:sim2d-proposed}

Posterior density draws are generated using the martingale-posterior
predictive-resampling construction in Section~\ref{sec:mpd_intro}, with the
recursive Gaussian-copula predictive update. The bandwidth parameter is fixed at $\rho=(0.95,0.95)$.
Each predictive-resampling path is continued for $500$ forward samples, and the procedure is repeated independently to obtain $M=100$ posterior density
draws evaluated at the observed locations.

\subsection{Data-driven calibration of the density level}
\label{app:level-calibration}

The level-set map requires a density threshold $\lambda$ determining which observations are active. A direct specification of $\lambda$ through an assumed noise proportion would require prior knowledge of the true clutter fraction that is generally unavailable. Hence, here we calibrate $\lambda$ directly from the observed data by combining the standard
DBSCAN $k$-distance heuristic \citep{schubert2017dbscan} with the quantile parametrisation of density levels used by \citet{buch2024bayesian}. Importantly, DBSCAN is used only to determine a data-adaptive density rank; its distance threshold is not itself used as the MPD density level. Figure~\ref{fig:level_selection}
illustrates the procedure for both examples.

\paragraph{$k$-distance calibration.}
For each observation $x_i$, let $\delta_k(x_i)$ denote its distance to the
$k$th nearest neighbour. Following the usual DBSCAN heuristic, we set $\mathrm{MinPts}=2d$, which gives $\mathrm{MinPts}=4$ in the present two-dimensional examples, and this corresponds to the third-nearest-neighbour distance, $\delta_3(x_i)$ since the query point itself counts towards
$\mathrm{MinPts}$. As shown in the
left panels of Figure~\ref{fig:level_selection}, we  compute $\delta_3(x_i)$ for every observation and sort the values,
  $  \delta_{3,(1)}
    \leq
    \delta_{3,(2)}
    \leq
    \cdots
    \leq
    \delta_{3,(n)}$.
The knee of this curve is detected as the point of maximum perpendicular
distance from the chord joining its endpoints. If the knee occurs at rank $j^\star$, we set
$\varepsilon=\delta_{3,(j^\star)}$
and define the corresponding empirical low-density proportion as
\begin{equation}
    \hat\nu
    =
    \frac{1}{n}
    \sum_{i=1}^{n}
    \mathbbm{1}
    \left\{
        \delta_3(x_i)>\varepsilon
    \right\}.
    \label{eq:knee_nu}
\end{equation}
Thus, $\hat\nu$ is the empirical proportion of observations lying beyond the $k$-distance knee.

Motivated by the DBSCAN--level-set correspondence of
\citet{buch2024bayesian}, we transfer this density rank to the martingale-posterior density, as shown in
the middle panels of Figure~\ref{fig:level_selection}. Let
\[\hat f(x_i)
=
\operatorname{median}_{m=1,\ldots,M}
\left\{
    f^{(m)}(x_i)
\right\}\]
denote the posterior median density at observation $x_i$. The fixed density
level is then
\begin{equation}
    \lambda
    =
    q_{\hat\nu}
    \left(
        \{\hat f(x_i)\}_{i=1}^{n}
    \right),
    \label{eq:knee_lambda}
\end{equation}
where $q_{\hat\nu}$ denotes the empirical $\hat\nu$-quantile. Thus the density threshold has the same empirical rank $\hat\nu$ as that identified by the $k$-distance diagnostic. The right panels show the resulting reference active and inactive sets.

For the ring example, the knee occurs at $\varepsilon=1.2598$, giving
$\hat\nu=0.216$ and hence $\lambda=0.002267$. This yields $54$ inactive observations out of $250$, as shown in the top-right panel of
Figure~\ref{fig:level_selection}. Although the true clutter
fraction is $90/250=0.36$, it is not used in calibration; clutter points in
locally dense regions need not lie below the selected density level. For the
four-rectangle example, the corresponding values are $\varepsilon=0.7434$ and $\hat\nu=0.190$, yielding $91$ inactive
observations out of $480$ (bottom-right panel), compared with the generative
clutter fraction $86/480\approx0.179$, which is close to the estimated clutter fraction.

\subsection{Graph construction}

We use a common specification across the synthetic examples. Given the reference active set $A_\lambda(\hat f)$, let $d^{A}_{k}(x_i)$ denote the distance from active observation $x_i$ to its $k$-th nearest neighbour within that active set. We set
 $   \delta
    =
    q_{0.85}
    (
        \{
            d^{A}_{k}(x_i):
            i\in A_\lambda(\hat f)
        \}
    )$.
The same choice of neighbour order and quantile is used across datasets. We set $k=5$ in the  experiment, and as discussed in \cite{buch2024bayesian}, the resulting clustering is relatively insensitive
to this choice. 
The resulting $\lambda$ and $\delta$ are computed once from the reference posterior density and subsequently held fixed across posterior draws. For each draw $f^{(m)}$, the active set $A_\lambda(f^{(m)})=\{i:f^{(m)}(x_i)\geq\lambda\}$ is allowed to vary, and the connected components of the corresponding $\delta$-graph yield one posterior sub-partition draw.

\subsection{Normalised losses}\label{app:Normalised losses}
For VI$_{\mathrm{IA}}$, we set $\tau=1$ and normalise by the upper bound in \eqref{eq:bound deriv};
for BI$_{\mathrm{IA}}$, we set $a=b=1$ and
$m_{ai}=m_{ia}=0.5$ (This is the boundary choice satisfying the metric condition $a\leq 2m$ of \citet{buch2024bayesian}) and normalise by $\binom{n}{2}$, with details given in \cite{buch2024bayesian}.

Following \citet{buch2024bayesian}, we rescale BI$_{\mathrm{IA}}$ by its maximal pairwise loss. 
For $a=b$ and $m_{\mathrm{ai}}=m_{\mathrm{ia}}=m$, each unordered pair contributes at most
$\max(a,2m)$, giving
\[
\max_{\theta,\hat\theta} \mathrm{BI}_{\mathrm{IA}}(\theta,\hat\theta)
=
n(n-1)\max\!\left(m,\frac{a}{2}\right).
\]
Hence we use
\[
\widetilde{\mathrm{BI}}_{\mathrm{IA}}
=
\frac{\mathrm{BI}_{\mathrm{IA}}}
{n(n-1)\max(m,a/2)}.
\]
For $a=1$ and $m=0.5$, this reduces to
$\widetilde{\mathrm{BI}}_{\mathrm{IA}}=\mathrm{BI}_{\mathrm{IA}}/\binom{n}{2}$.

For VI$_{\mathrm{IA}}$, let $b=|A\cap\hat A|$. We normalise VI$_{\mathrm{IA}}$ with the upper bound in \eqref{eq:bound deriv}. Since
$\mathrm{VI}(\rho_B,\hat\rho_B)\le\log_2 b$, with both bounds jointly attainable,
\[
D_{n,\tau}
=
\max\!\left\{
\tau\log_2 n,\;
\max_{1\le b\le n}
\left[
\tau\left(1-\frac{b}{n}\right)\log_2 n+\log_2 b
\right]
\right\}.
\]
We therefore define
$\widetilde{\mathrm{VI}}_{\mathrm{IA}}
=
{\mathrm{VI}_{\mathrm{IA}}}/{D_{n,\tau}}$,
so that both normalised losses lie in $[0,1]$.

\begin{figure}[tp]
    \centering
    \includegraphics[width=\textwidth]{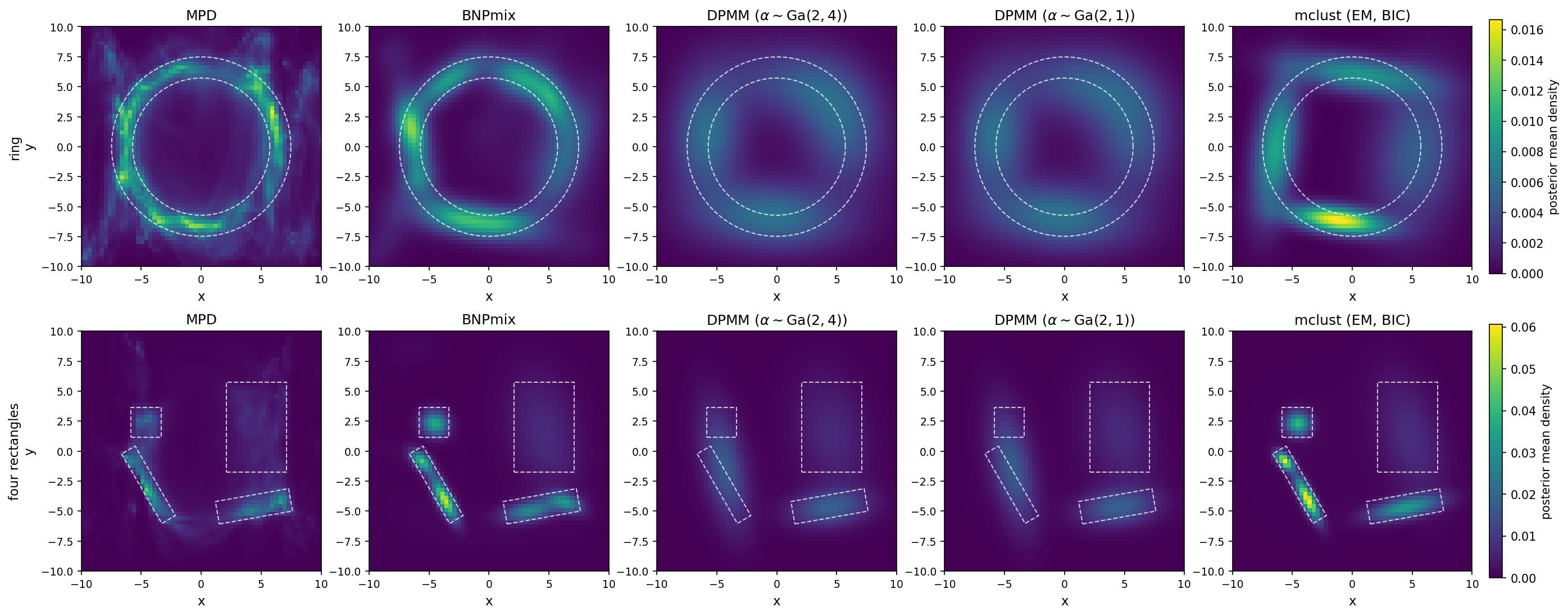}
    \caption{Density estimates for the ring (top) and four-rectangle (bottom)
    examples. Columns show the MPD posterior mean, the BNPmix posterior mean,
    Gaussian DPMM posterior means under concentration priors
    $\alpha\sim\mathrm{Ga}(2,4)$ and $\alpha\sim\mathrm{Ga}(2,1)$, and the
    \texttt{mclust} plug-in density estimate fitted by EM with BIC-based model
    selection. Dashed curves indicate the underlying high-density structures.
    Panels within each row share a common colour scale.}
    \label{fig:density_ballet_comparison}
\end{figure}

\subsection{Comparison with BALLET-style Bayesian density clustering baselines}
\label{app:ballet_comparison}

To provide a comparison of our predictive-resampling approach with the BALLET
framework of \citet{buch2024bayesian}, we compare the density estimates on the
\emph{ring} and \emph{four rectangles} examples. We report the density stage
separately to distinguish differences arising from density estimation from
those introduced by the subsequent level-set clustering and posterior
summarisation. BALLET is not tied to a particular Bayesian density model: it
takes posterior density draws as input and maps them to level-set clusterings.
Accordingly, we consider posterior densities obtained from both BNPmix and a
Gaussian Dirichlet process mixture model (DPMM), the latter being one of the
density models used in the BALLET experiments \citep{buch2024bayesian}. For the DPMM, we use the \texttt{dirichletprocess} package
\citep{ross2026dirichletprocess}, standardising the observations before fitting
\texttt{DirichletProcessMvnormal} with its default normal--inverse-Wishart base
measure. We additionally examine sensitivity to the concentration prior,
comparing the default $\alpha\sim\mathrm{Ga}(2,4)$ with $\alpha\sim\mathrm{Ga}(2,1)$. All density estimates are evaluated on the same
grid, with a common colour scale within each example.

Figure~\ref{fig:density_ballet_comparison} shows that MPD and BNPmix preserve
the main geometric features of both examples, whereas the Gaussian DPMM yields
smoother posterior mean densities, particularly for the annular and rectangular
structures. Increasing the prior mass on larger values of the concentration
parameter leads to more occupied mixture components, but has little visible
effect on the resulting density estimates in these examples.

For additional reference, we also include the plug-in density estimate from
\texttt{mclust} \citep{scrucca2023model}, fitted by EM with BIC-based model
selection. As shown in Figure \ref{fig:density_ballet_comparison}, the
\texttt{mclust} estimate also captures the principal high-density regions: for
the ring it recovers the low-density centre but represents the annulus through
a small number of broad, elongated Gaussian components, while for the
four-rectangle example it produces sharper local peaks with smooth Gaussian
contours around the rectangular regions.
The standard \texttt{mclust} fit provides a single plug-in density estimate
rather than posterior density draws. Repeated bootstrap refitting could, in
principle, generate an empirical distribution of fitted densities that could be
passed through the same level-set and summarisation pipeline, although this
would represent bootstrap rather than Bayesian posterior uncertainty and is not
pursued here. 

Finally, different Bayesian density models or more extensive
hyperparameter tuning may yield improved density estimates; see
\citet{buch2024bayesian} for a broader comparison of alternative Bayesian
density models.

\section{Results on robustness}
\label{app:robustness}
\begin{figure}[t]
    \centering
    \includegraphics[width=\linewidth]{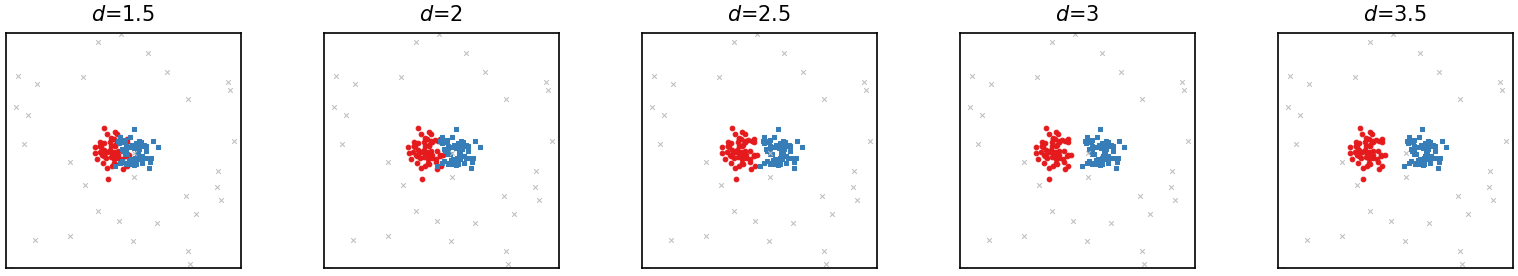}
    \caption{ Dataset and ground truth clustering in the separation experiment.
    Red and blue points denote the two Gaussian
    components and grey crosses denote background clutter. 
    }
    \label{fig:separation-data}
\end{figure}
\subsection{Additional settings and results for varying cluster separation}
\label{app:robustness-separation}

\paragraph{Settings}Figure~\ref{fig:separation} follows a single Monte Carlo realisation across increasing separation $d$, with the underlying dataset shown in Figure~\ref{fig:separation-data}. Common random numbers are used throughout: the component and clutter draws are fixed, and only the two Gaussian component centres move as $d$ increases across $d\in\{1.5,2,2.5,3,3.5\}$.  

The martingale posterior over densities uses copula predictive resampling with the bandwidth parameter fixed at $\rho = 0.95$ as suggested in \cite{hahn2018recursive}. Alternatively, $\rho$ can be optimised as discussed by \citet{fong2023martingale}. 
We use $M=100$ density draws, with each predictive-resampling path continued
for $500$ forward steps. The level is calibrated from the
$k$-distance knee, giving $\hat\nu$, and is then marginalised over
$\nu\sim\mathrm{Unif}[\hat\nu\pm0.10]$ using an equally weighted
$G=10$ point grid, with $\lambda(\nu)$ the $\nu$-quantile of the pointwise
median density. This yields $MG=1000$ level-marginal sub-partitions per dataset.
The connectivity radius $\delta$ is the $0.85$-quantile of the $5$-nearest-neighbour distance computed within the active set. 
WASABI-SuP is run for $L = 1,\dots,5$ using $3$ restarts of at most $10$ iterations. Reported losses are the empirical compression objective of~\eqref{eq:wasabi_sup_objective} under $D = \mathrm{VI}_{\mathrm{IA}}$ with equal penalties $m_{ai} = m_{ia} = 1$ with equal atom weights $1/(MG)$.

\begin{figure*}[tp]
    \centering    \includegraphics[width=\textwidth]{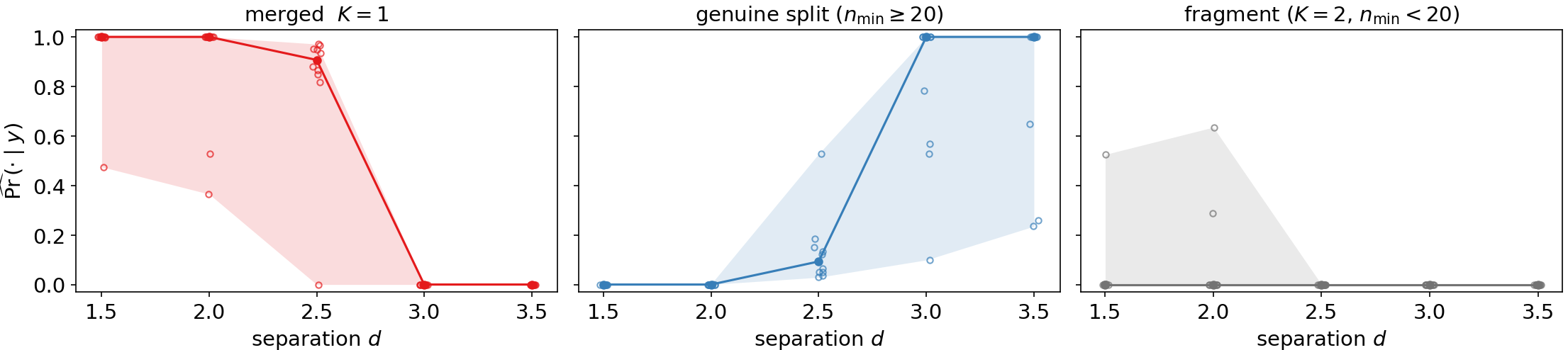}
    \caption{Posterior structural behaviour across additional Monte Carlo
    realisations.
    Posterior probabilities of a merged configuration ($K=1$), a genuine
    two-cluster split ($K=2$, $n_{\min}\geq20$), and a small-fragment
    configuration ($K=2$, $n_{\min}<20$) as the separation $d$ increases.
    Solid curves show the median across realisations, shaded regions show the
    full range, and circles show individual realisations.
    }
    \label{fig:separation-mc-state}
\end{figure*}

\paragraph{Behaviour analysis  across ten example datasets}
Here we further generate nine more example datasets under the same design as Figure~\ref{fig:separation-data}, sharing the same component and clutter
settings and differing only in the realised measurements. All settings of the proposed method are unchanged throughout.

To characterise the posterior structure, Figure~\ref{fig:separation-mc-state}
shows the empirical posterior probabilities $\widehat{\Pr}(\cdot\mid y)$ across
separations, which is computed as the fraction of the $MG=1000$ level-marginal sub-partitions satisfying each configuration: one merged cluster ($K=1$), a genuine two-cluster split ($K=2$ with smallest active-cluster size $n_{\min}\geq20$), and a two-cluster with a small fragment ($K=2$ with $n_{\min}<20$), where the total number of data points is $n = 180$. We separate the latter two to distinguish clear two-cluster structure from
small clutter fragments. We can see that at
$d=1.5$ and $d=2$, posterior mass is concentrated on the merged case, whereas at $d=3$ and $d=3.5$ it is concentrated on the two-cluster split. The intermediate value $d=2.5$ shows the greatest
uncertainty between these two clustering structures, consistent with the two particles displayed for the main-text realisation in Figure~\ref{fig:separation}(c)--(d).

Figure~\ref{fig:separation-mc-loss} decomposes the $L=1$ VI$_{\mathrm{IA}}$ loss into its active--inactive and clustering-structure components. The active--inactive term shows no pronounced dependence on $d$, indicating that the noise-related uncertainty is relatively robust to changes in cluster
separation. In contrast, the structural term is numerically zero away from $d=2.5$ and becomes consistently positive at $d=2.5$. Thus, the increase in total loss at the intermediate separation is driven by posterior ambiguity
between merged and split clustering structures, rather than by uncertainty in the active/inactive status.

\begin{figure*}[tp]
    \centering
    \includegraphics[width=\textwidth]{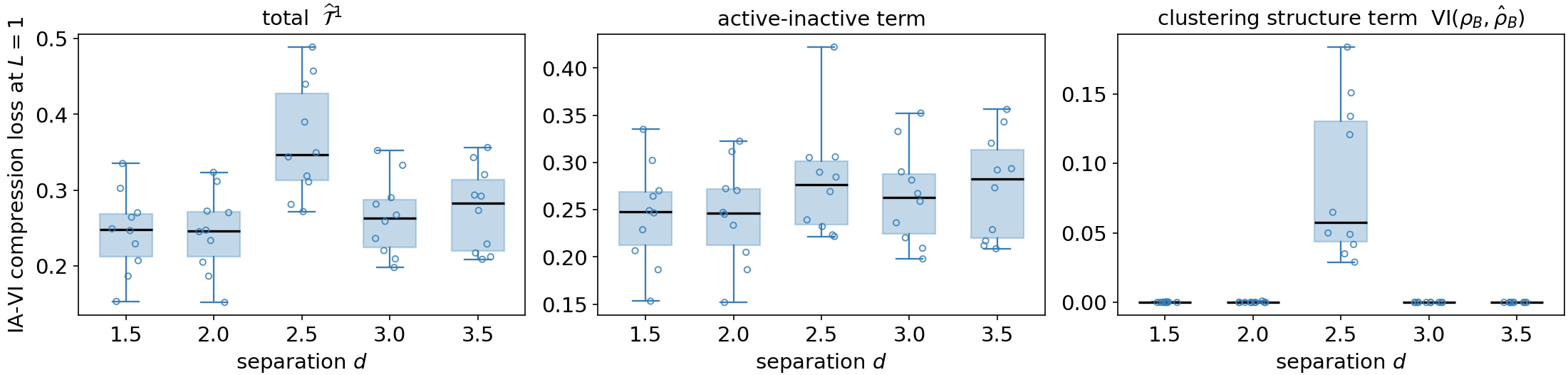}
    \caption{VI$_{\mathrm{IA}}$ uncertainty across additional Monte Carlo realisations.
    Total $L=1$ VI$_{\mathrm{IA}}$ compression loss (left), its active/inactive component
    (centre), and the restricted-VI structural component on the common active
    set (right), as a function of separation $d$.
    Boxes show the interquartile range, whiskers the full range, and circles
    the individual Monte Carlo realisations.
    }
    \label{fig:separation-mc-loss}
\end{figure*}

\begin{figure}[htp]
    \centering
    \includegraphics[width=0.9\linewidth]{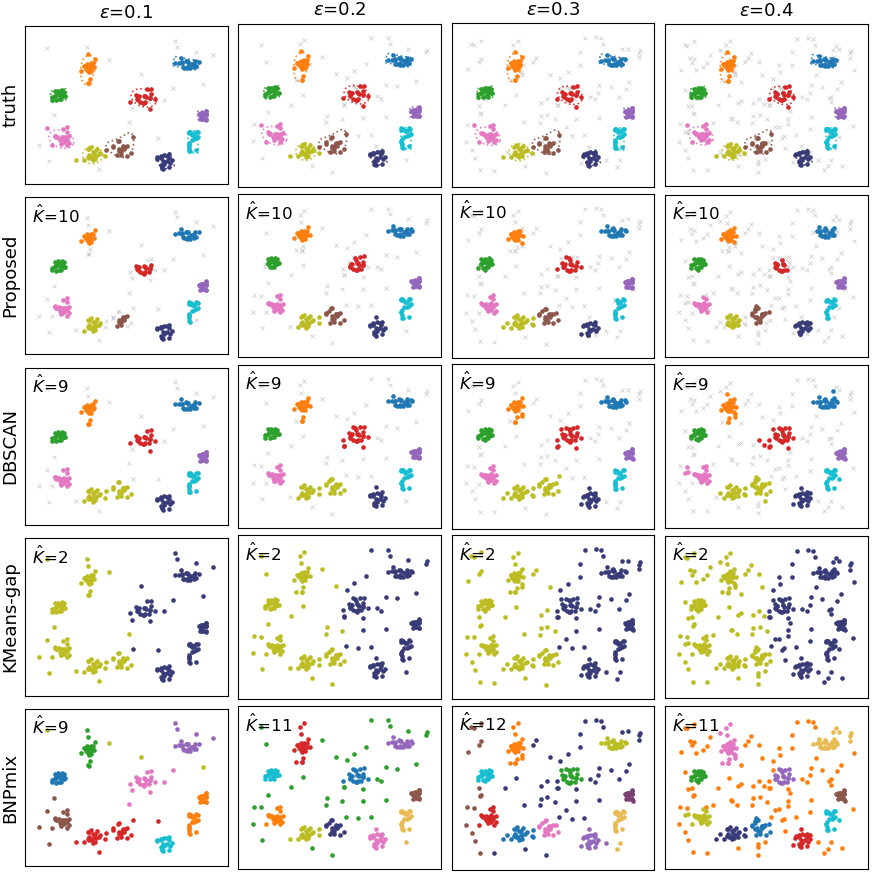}
    \caption{Clustering results under increasing clutter contamination,
    $\epsilon\in\{0.1,0.2,0.3,0.4\}$.
    Columns correspond to increasing clutter levels, while rows show the
    ground truth, proposed method, DBSCAN, gap-selected $k$-means, and BNPmix,
    respectively.
    The component observations are held fixed while clutter points are added
    sequentially across columns.
    Grey crosses denote inactive or unassigned observations.}
    \label{fig:noise_level_data}
\end{figure}

\subsection{Robustness to increasing clutter}\label{app:Robustness to increasing clutter}

We next examine robustness as the proportion of inactive observations increases.
As shown in Figure~\ref{fig:noise_level_data}, the component observations are held
fixed while clutter points are added sequentially, yielding contamination levels
$\epsilon\in\{0.1,0.2,0.3,0.4\}$.
We use the same settings as in Appendix~\ref{app:sim2d-baselines}.
For the proposed level-marginal method, we use $M=100$ density draws and
marginalise the level over
$\nu\sim\mathrm{Unif}[\hat\nu-0.10,\hat\nu+0.10]$
using an equally weighted $G=10$ point grid.
DBSCAN, gap-selected $k$-means, and BNPmix use the same configurations as in
Appendix~\ref{app:sim2d-baselines}.

Figure~\ref{fig:noise_level_data} shows the resulting point estimates.
The proposed method preserves the main clustering structure while progressively
classifying more observations as inactive as clutter increases.
DBSCAN also recovers most clusters, but consistently merges a pair of neighbouring
clusters.
The full-partition methods behave differently because they cannot leave
observations inactive.
Gap-selected $k$-means collapses the data into two broad groups throughout.
At $\epsilon=0.4$, BNPmix happens to form one estimated cluster containing a
large proportion of the clutter while leaving the main signal groups visually
separated; however, this behaviour is not consistent across contamination levels.
At lower contamination levels, clutter is instead largely absorbed into the
estimated signal clusters.

\section{Additional results for the real harbour data}
\label{app:harbour}

This section provides the additional WASABI-SuP summaries underlying the
real-data analysis in Section~\ref{sec:harbour}.  We first examine the choice
of the number of particles and then display the alternative VI$_{\mathrm{IA}}$
sub-partitions represented by the selected WASABI-SuP approximation.

\paragraph{Settings.}
We analyse frame~40 from the 141-frame harbour point-cloud dataset provided by \cite{gan2026pivot}. After preprocessing, the selected frame contains $n=1269$ observations. 
A full video illustrating tracking and dynamic clustering results obtained using variational inference is available at \url{https://runzegan.github.io/projects/pivot/}.  We use the level-marginal posterior with $\rho=0.999$. The calibration procedure in Appendix~\ref{app:level-calibration} yields a central level of $\lambda_0=1.2291\times10^{-6}$, corresponding to an inactive fraction of $\hat\nu=0.028$.
For this real-data example, we assess local sensitivity directly on the
density-level scale using three equally weighted levels centred on
$\lambda_0$, with offsets $\pm5\times10^{-7}$. These levels correspond,
under the reference density, to inactive fractions
$\nu_g\in\{0.009,\;0.028,\;0.043\}$.
Thus the grid varies the active/inactive decision over only a small proportion
of observations, unlike the synthetic experiments with substantial
homogeneous background clutter; consequently, the precise inactive fraction
is less consequential to the overall clustering structure here. We use
$10$ posterior density draws and $3$ levels, yielding $30$ level-marginal sub-partitions with equal weights.

\subsection{Expected VI$_{\mathrm{IA}}$ contribution}
\label{app:EVIC contribution}

We extend the expected VI contribution diagnostic of
\citet{balocchi2025understanding} to VI$_{\mathrm{IA}}$. Let
$B=A\cap\hat A$ be the common-active set of sub-partitions
$\theta$ and $\hat\theta$, and let
$\delta_i^{\mathrm{VI}}(\rho_B,\hat\rho_B)$ denote the standard pointwise VI
contribution on $B$, such that
$\mathrm{VI}(\rho_B,\hat\rho_B)
=
\sum_{i\in B}
\delta_i^{\mathrm{VI}}(\rho_B,\hat\rho_B)$.
The corresponding pointwise VI$_{\mathrm{IA}}$ contribution is
\begin{equation}
\delta_i^{\mathrm{IA\text{-}VI}}(\theta,\hat\theta)
=
\begin{cases}
\displaystyle \frac{\tau\log_2 n}{n},
& i\in (A\cap\hat I)\cup(I\cap\hat A),\\[2mm]
\delta_i^{\mathrm{VI}}(\rho_B,\hat\rho_B),
& i\in B,\\
0,
& i\in I\cap\hat I.
\end{cases}
\notag
\end{equation}
Thus
$\mathrm{VI}_{\mathrm{IA}}(\theta,\hat\theta)
=\sum_{i=1}^n
\delta_i^{\mathrm{IA\text{-}VI}}(\theta,\hat\theta)$.

For the $L=1$ VI$_{\mathrm{IA}}$ summary $\hat\theta_1$, the empirical expected
contribution of observation $i$ is
\begin{equation}
\widehat{\operatorname{EVIC}}_i(\hat\theta_1)
=
\sum_{m=1}^{M}
\eta_m\,
\delta_i^{\mathrm{IA\text{-}VI}}
\!\left(\theta^{(m)},\hat\theta_1\right),
\notag
\end{equation}
where $\eta_m=1/M$ for equally weighted posterior draws. By construction,
\[
\sum_{i=1}^{n}
\widehat{\operatorname{EVIC}}_i(\hat\theta_1)
=
\sum_{m=1}^{M}
\eta_m\,
\mathrm{VI}_{\mathrm{IA}}
\!\left(\theta^{(m)},\hat\theta_1\right).
\]

For the harbour analysis, the empirical level-marginal posterior consists of
$M$ density draws evaluated over a $G$-point level grid. Hence the EVIC
specialises to
\begin{equation}
\widehat{\operatorname{EVIC}}_i(\hat\theta_1)
=
\sum_{m=1}^{M}\sum_{g=1}^{G}
\eta_{m,g}\,
\delta_i^{\mathrm{IA\text{-}VI}}
\!\left(
\theta^{(m,g)},\hat\theta_1
\right),
\notag
\end{equation}
where $\eta_{m,g}=1/(MG)$ for the equally weighted grid used here.
Consequently,
\begin{equation}
\sum_{i=1}^{n}
\widehat{\operatorname{EVIC}}_i(\hat\theta_1)
=
\sum_{m=1}^{M}\sum_{g=1}^{G}
\eta_{m,g}\,
\mathrm{VI}_{\mathrm{IA}}
\!\left(
\theta^{(m,g)},\hat\theta_1
\right).
\notag
\end{equation}

\subsection{Selection of the number of WASABI-SuP particles}
\label{app:harbour-elbow}
Figure~\ref{fig:harbour-elbow} shows the approximation loss for
$L=1,\ldots,5$ under both BI$_{\mathrm{IA}}$ and VI$_{\mathrm{IA}}$. For VI$_{\mathrm{IA}}$, the elbow criterion selects $L=3$. 
Applying the same elbow rule under BI$_{\mathrm{IA}}$ selects $L=2$.  Thus, the exact number of representative particles depends on the geometry induced by the sub-partition loss, but both criteria favour representing the posterior with
more than one sub-partition.

\begin{figure*}[t]
    \centering
\includegraphics[width=0.9\textwidth]{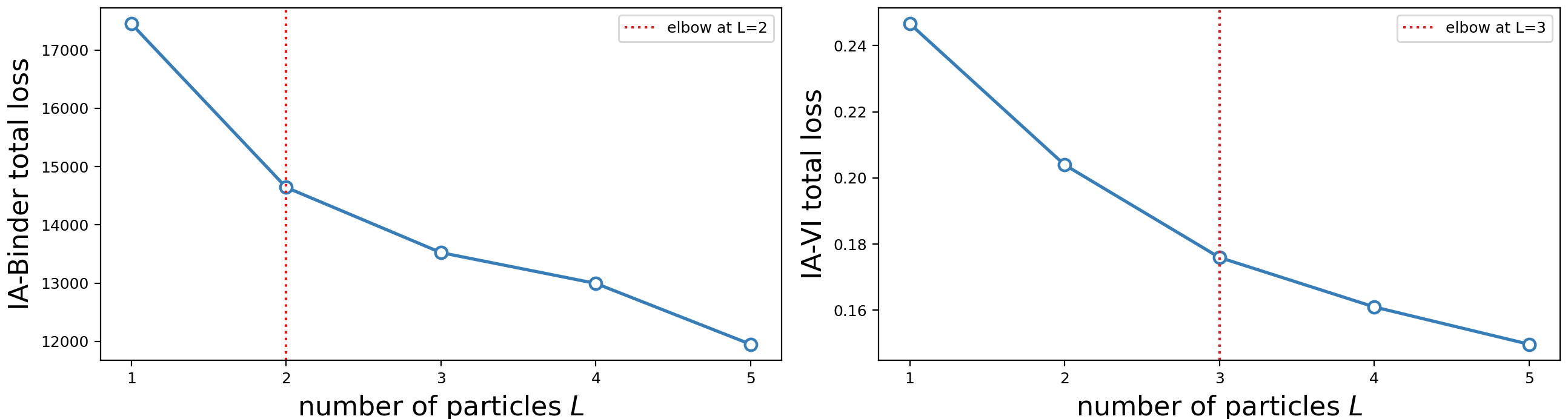}
    \caption{
    Selection of the number of WASABI-SuP particles for the harbour
    data.
    Total approximation loss against the number of particles $L$ for
    BI$_{\mathrm{IA}}$ (left) and VI$_{\mathrm{IA}}$ (right).  The dotted vertical lines indicate
    the values selected by the elbow criterion, $L=2$ for BI$_{\mathrm{IA}}$ and
    $L=3$ for VI$_{\mathrm{IA}}$.  Absolute loss magnitudes are not comparable across the
    two loss functions.
    }
    \label{fig:harbour-elbow}
\end{figure*}

\subsection{VI$_{\mathrm{IA}}$ particle summaries}
\label{app:harbour-particles}
The selected $L=3$ approximation is shown in
Figure~\ref{fig:harbour-l3}.  The three particles have weights
$(0.400,\;0.333,\;0.267)$, and contain
$25$, $24$, and $26$ active clusters, respectively.  The weights are
relatively balanced, indicating that the posterior variation is not captured by one dominant clustering structure.

Several spatially separated groups are preserved across the three particles, while differences are concentrated in a smaller number of
closely spaced local clusters.  These differences include both changes in cluster allocation and changes in whether individual observations are treated as
active or inactive.  These multimodal representations therefore complement the $L=1$ IA-EVIC map in
Figure~\ref{fig:harbour-main}: the IA-EVIC map identifies the observations responsible for uncertainty around the single point estimate, whereas the
$L=3$ particles explicitly display alternative sub-partition structures
that account for substantial posterior mass.

\begin{figure*}[t]
    \centering
    \includegraphics[width=\textwidth]{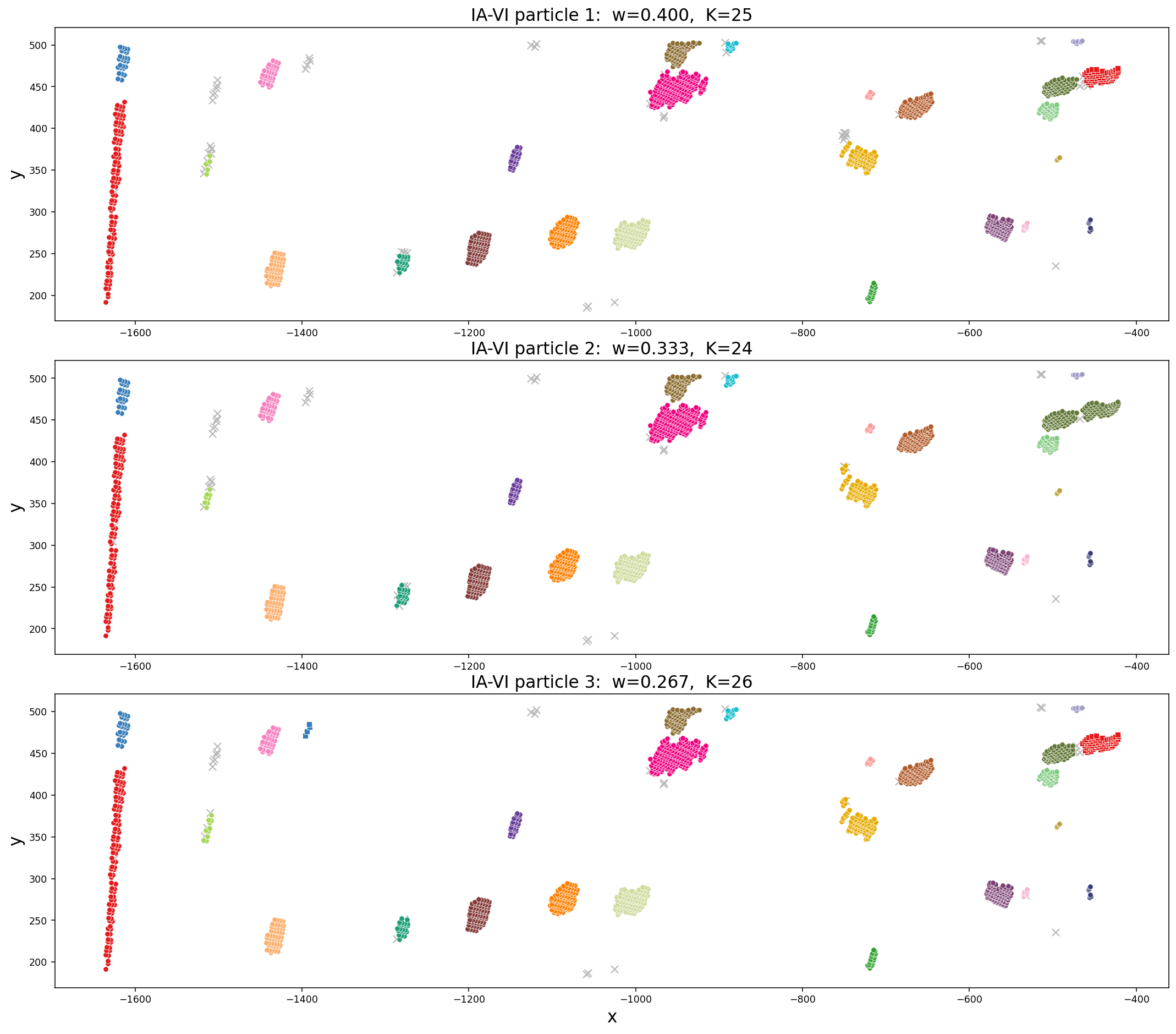}
    \caption{
   Three-particle VI$_{\mathrm{IA}}$ WASABI-SuP summary for the harbour
    data.
    }
    \label{fig:harbour-l3}
\end{figure*}

\subsection{Comparison with BI$_{\mathrm{IA}}$}
\label{app:harbour-binder}
As a sensitivity analysis, we also construct the WASABI-SuP approximation
using BI$_{\mathrm{IA}}$.  Figure~\ref{fig:harbour-elbow} shows that the corresponding
elbow occurs at $L=2$, compared with $L=3$ under VI$_{\mathrm{IA}}$.   This difference is expected because VI$_{\mathrm{IA}}$ and BI$_{\mathrm{IA}}$ induce different geometries on the space of sub-partitions.  Importantly, both losses indicate that the $L=1$ representation discards non-negligible posterior variation,
supporting the use of multiple representative sub-partitions for this real scene. Figure~\ref{fig:harbour-l2} shows the $L=2$ BI$_{\mathrm{IA}}$ approximation.
The two particles have weights
$(0.567,\;0.433)$; both particles contain $25$ active clusters.  The particles represent alternative cluster allocations and
active/inactive status.

\begin{figure*}[t]
    \centering
\includegraphics[width=\textwidth]{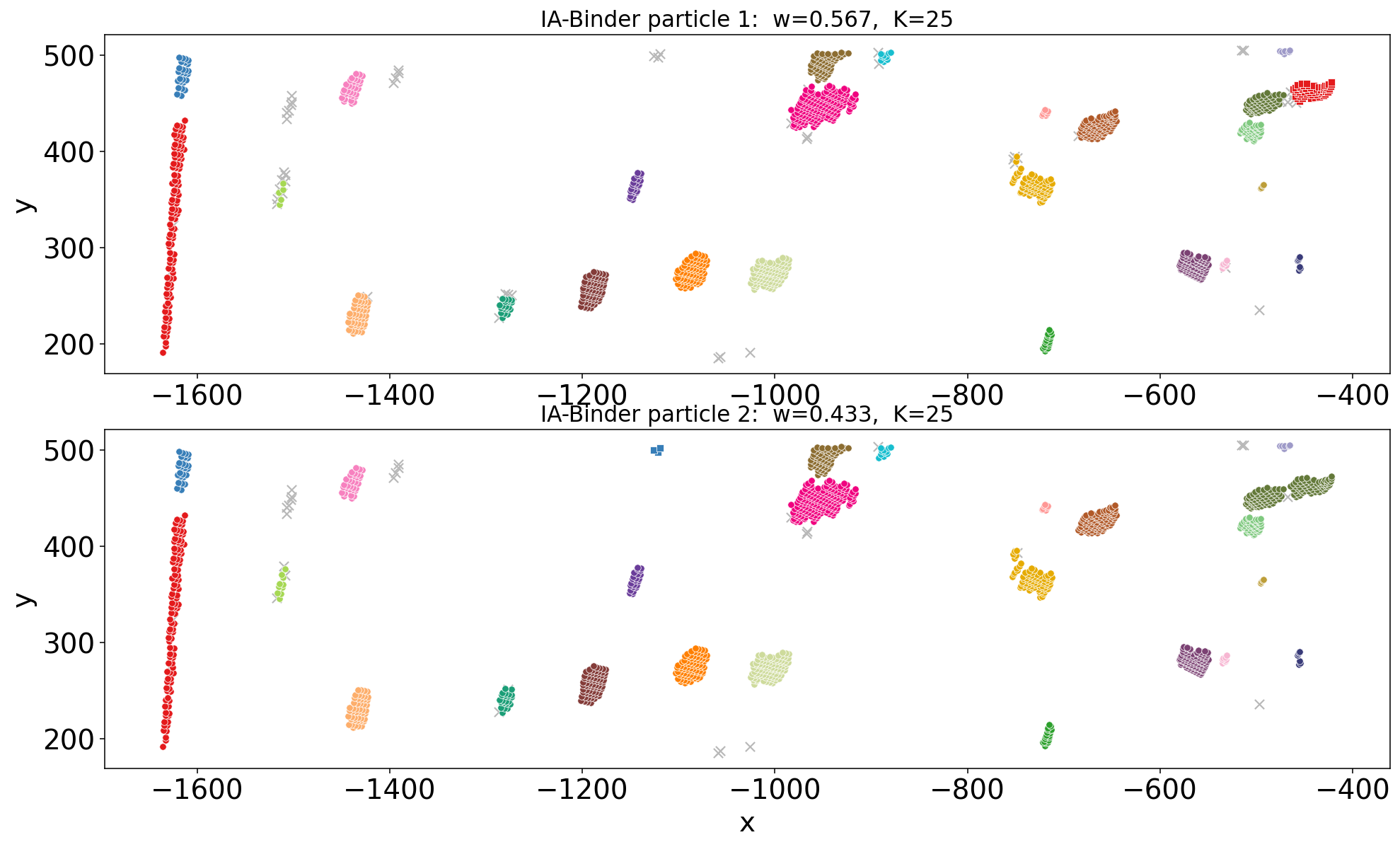}
    \caption{
    Two-particle BI$_{\mathrm{IA}}$ WASABI-SuP summary for the harbour data.
    Crosses denote
    inactive observations.
    }
    \label{fig:harbour-l2}
\end{figure*}

\end{document}